%% file: main.tex
\documentclass[11pt]{article}

\input{preamble}

\title{
Dimension-Free Rank Lifting from Random Hyperplane Arrangements}
\author{Luca Becchetti\thanks{Sapienza University of Rome, Italy,
  \textsf{becchetti@diag.uniroma1.it}.}%
  \and
  Matteo Russo\thanks{EPFL, Switzerland,
  \textsf{\{matteo.russo, ruben.skorupinski\}@epfl.ch}.
  }%
  \and
  Ruben Skorupinski\footnotemark[2]%
}
\date{}

\begin{document}

\maketitle

\makeatletter
\renewcommand{\thefootnote}{}
\footnotetext{The authors are listed in alphabetical order.}
\addtocounter{footnote}{-1}
\makeatother

\begin{abstract}
We study the width required for a randomly initialized hidden layer of a neural network to achieve \emph{rank lifting}. Namely, given a dataset $X \in \mathbb R^{m \times d}$ of \(m\), $d$-dimensional input vectors separated by an angle of at least \(\theta\), we consider the random feature matrix \(\sigma(XR)\), where \(R\) is standard Gaussian. For positively homogeneous nonpolynomial activations, which include sign, Heaviside, ReLU, and ReLU powers among others, we prove that
\[
    n \gtrsim \frac{1}{\theta}\max\left\{m,\log\left(\frac1\delta\right)\right\}
\]
neurons suffice for \(\sigma(XR)\) to have full row rank $m$ with probability at least \(1-\delta\). This dimension-free bound exponentially improves the previous general-dimensional guarantee for sign features \cite{DragoBCMSSB26} and is essentially tight. The proof shows that one random feature column escapes every proper subspace of \(\mathbb R^m\) with probability \(\Omega(\theta)\), using a coupling of nearby Gaussian directions and a local crossing of the induced hyperplane arrangement. We also study \emph{stable rank lifting}, where the goal is to establish a quantitative analogue of exact rank lifting, i.e., a lower bound on the smallest eigenvalue of the empirical feature Gram matrix in high-probability. 
Our analysis unifies and generalizes stable rank guarantees for all $q$-homogeneous non-polynomial activations following prior work in \cite{PanigrahiShettyGoyal20, Song26}. In particular, we combine a diagonally dominant Taylor tail of the population kernel with truncation and matrix concentration, to show that for positively homogeneous nonpolynomial activations, stable rank lifting is achieved at width
\[
    n \gtrsim C^q \frac{m}{\theta^{2q+1}} \log^{2q+\frac{1}{2}}\left(\frac{m}{\theta}\right) \log\left(\frac{m}{\delta}\right),
\]
where \(q\) is the degree of the activation and $C > 0$ is some universal constant. 
\end{abstract}

\thispagestyle{empty}

\input{sections/introduction}

\input{sections/preliminaries}

\input{sections/exact_rank}

\input{sections/stable_rank}

\section*{Acknowledgments}

The work of Ruben Skorupinski is supported by project 10000183 of the Swiss National Science Foundation (SNSF). We used ChatGPT (GPT-5.6 Sol and GPT-6) to assist with mathematical arguments. Specifically, for exact rank lifting, it helped generalize the non-vanishing argument from the sign activation to positively homogeneous nonpolynomial activations. For stable rank lifting, it assisted with calculations for deriving the Taylor coefficients of the Gaussian correlation kernel, making the constants in the eigenvalue and concentration bounds explicit. The authors take full responsibility for all the content of this work.

\bibliography{references}
\bibliographystyle{alpha}

\input{sections/appendix}

\end{document}

%% file: preamble.tex
\usepackage[T1]{fontenc}
\usepackage{lmodern,microtype}
\usepackage{inconsolata}
\usepackage{libertine}
\usepackage[margin=1in]{geometry}
\usepackage[colorlinks=true,linkcolor=blue,citecolor=blue,urlcolor=blue]{hyperref}
\usepackage{amsmath,amssymb,amsthm,mathtools}
\usepackage{bbm}
\usepackage{enumitem}
\usepackage{nicefrac}
\usepackage[capitalise,nameinlink]{cleveref}
\usepackage{tcolorbox}
\usepackage{booktabs}
\usepackage{thm-restate}

\newtheorem{theorem}{Theorem}[section]
\newtheorem{proposition}[theorem]{Proposition}
\newtheorem{lemma}[theorem]{Lemma}

\newtheorem{assumption}[theorem]{Assumption}
\newtheorem{remark}[theorem]{Remark}

\newtheorem{observation}[theorem]{Observation}
\newtheorem{question}[theorem]{Question}

\newcommand{\R}{\mathbb R}

\newcommand{\E}{\mathbb E}
\newcommand{\Pb}{\mathbb P}

\newcommand{\sgn}{\operatorname{sgn}}
\newcommand{\rank}{\operatorname{rank}}
\newcommand{\Span}{\operatorname{span}}

\newcommand{\diag}{\operatorname{diag}}

\def\bw{{\mathbf w}}

%% file: sections/introduction.tex
\section{Introduction}\label{sec:intro}

Randomly initialized and frozen hidden layers have been proposed as a tool to improve the expressive power of neural networks before a single
training step is taken. They appear in random-basis networks \cite{IgelnikPao95}, extreme learning machines \cite{HuangZhuSiew06}, reservoir computing \cite{Jaeger01,MaassNatschlaegerMarkram02,LukoseviciusJaeger09}, and random-feature methods for kernel learning \cite{RahimiRecht07,RahimiRecht08}. In this context, full rank of the random feature matrix is a particularly clean certificate of representational capacity at initialization: it separates the question of whether the random representation is expressive enough from the subsequent question of how that
representation is optimized.

More precisely, consider a \emph{projectively separated} dataset, i.e., a
dataset \(X\) composed of nonzero data points \(x_1,\ldots,x_m\in\mathbb R^d\), whose
pairwise angles lie between \(\theta\) and \(\pi-\theta\), for
\(0<\theta\le\pi/2\). Let
\(g_1,\ldots,g_n\in\mathbb R^d\) be independent standard Gaussian hidden-layer
weights, grouped as the columns of \(R\in\mathbb R^{d\times n}\), and let
\(\sigma:\mathbb R\to\mathbb R\) be an activation function. The corresponding
two-layer random feature model is defined as
\[
    f_{\bw}(x)
    =
    \sum_{j=1}^n w_j\sigma(\langle g_j,x\rangle),
\]
where only the output weights
\(\bw=(w_1,\ldots,w_n)\) are trained. Then, given the dataset \(X\), the vector
of predictions on the training set is \(Z\bw\), where
\(Z_{ij}=\sigma(\langle x_i,g_j\rangle)\). We also let $u_i=x_i/\|x_i\|_2$, and denote by \(U\in\mathbb R^{m\times d}\) the matrix with rows \(u_i^\top\). 

The input matrix may have rank much smaller than \(m\), particularly when
\(m>d\), but a nonlinear random map can nevertheless produce a feature matrix
of full row rank. In particular,
\(\rank(Z)=m\) precisely when every target vector \(y\in\mathbb R^m\) can be
interpolated by training only the last layer. Following \cite{DragoBCMSSB26}, this phenomenon is known as
\emph{rank lifting}. Rank lifting provides a simple model of how
overparameterization creates representational capacity: a sufficiently wide
random hidden layer separates the training points algebraically, after which a
linear readout can memorize arbitrary labels
\cite{ZhangEtAl17,Daniely20,Vershynin20,MontanariZhong22}. The recent work
\cite{DragoBCMSSB26} initiated a quantitative study of this question for
Gaussian hidden weights and sign activations. It interpreted rank lifting
geometrically through the arrangement of the activation hyperplanes on the
unit sphere and derived high-probability width guarantees from estimates on
the regions of this arrangement. In general dimension, the resulting bound on
the required width grows exponentially with the number of data points and linearly in $\sqrt{d}$, with sharper linear and polynomial guarantees for \(d=2\)
and \(d=3\), respectively.

\subsection{Our Contributions and Techniques}
In this paper, we considerably expand \cite{DragoBCMSSB26}, by investigating two questions that address complementary forms of rank lifting: the first asks only whether the random feature matrix has full row rank, whereas the second asks whether it is quantitatively nonsingular. Our approach to both questions relies on the following dual perspective on the problem: Each hidden weight \(g_j\) defines a random hyperplane \(g_j^\perp\), and the
\(j\)-th column of \(Z\) records the response of the data to this hyperplane.
Dually, the data hyperplanes \(u_i^\perp\) partition weight space into chambers
sampled by the Gaussian columns of \(R\).
The formal setup and standing assumptions are collected in \Cref{sec:preliminaries}.

\begin{question}[Exact Rank Lifting]\label{q:exact_rank_lifting}
Our first question asks how large the hidden-layer width \(n\) must be to ensure
that
\[
    \Pb\bigl(\rank(\sigma(XR))=m\bigr)\ge1-\delta.
\]
Equivalently, how many random neurons suffice to interpolate arbitrary labels
on the \(m\) data points by training only the output layer?
\end{question}

Our first result establishes a dimension-free width bound (\Cref{thm:exact-rank-homogeneous}) for the general-dimensional question, replacing the exponential dependence on \(m\) by a linear one, removing any
dependence on the ambient dimension \(d\), and extending the analysis from sign
activations to the broad class of \emph{positively homogeneous nonpolynomial
activations}, which includes Heaviside, ReLU, nonlinear leaky ReLU, ReLU powers, and sign itself. 
\begin{theorem}[Thm. \ref{thm:exact-rank-homogeneous} informal]
There is a universal constant \(C>0\) such that
\[
    n
    \ge
    \frac{C}{\theta}
    \max\left\{
        m,\log\left(\frac{1}{\delta}\right)
    \right\}
    \quad\Longrightarrow\quad
    \Pb\bigl(\rank(\sigma(XR))=m\bigr)\ge1-\delta.
\]
\end{theorem}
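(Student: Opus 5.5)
The plan follows the abstract's hint. The key lemma is that for any fixed proper subspace $V\subsetneq\R^m$, a single random feature column $z=\sigma(Ug)$, with $g\sim N(0,I_d)$, lies outside $V$ with probability at least $c\theta$. By positive homogeneity, $\sigma(XR)$ has the same rank as $\sigma(UR)$ (row scaling by $\|x_i\|_2^q$), so we work with unit vectors $u_i$.

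Given the lemma, the rank statement follows by a sequential argument. Reveal the columns one at a time and let $V_j$ be the span of the first $j$ columns. Whenever $\dim V_j<m$, the next column increases the dimension with probability at least $p=c\theta$, independently of the past. So $\dim V_n$ stochastically dominates $\min\{m,\mathrm{Bin}(n,p)\}$. A Chernoff bound gives $\Pb(\mathrm{Bin}(n,p)<m)\le e^{-np/8}$ once $np\ge 2m$, which is at most $\delta$ when $n\ge \frac{C}{\theta}\max\{m,\log(1/\delta)\}$.

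To prove the lemma, take a nonzero $a\perp V$. It suffices to show $\Pb(\langle a,\sigma(Ug)\rangle\neq 0)\ge c\theta$. Pick an index $i$ with $a_i\neq 0$, and write $h(g)=\sum_k a_k\sigma(\langle u_k,g\rangle)$.

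The idea is a local crossing of the hyperplane $u_i^\perp$. Consider $g$ near a point $g_0\in u_i^\perp$ at which no other hyperplane $u_k^\perp$ passes close. By the angle condition, each $u_k$ with $k\neq i$ makes angle at least $\theta$ with $\pm u_i$. Hence for a Gaussian $g$ conditioned to have $|\langle u_i,g\rangle|\le \epsilon$, the values $\langle u_k,g\rangle$ are bounded away from $0$ with constant probability when $\epsilon\asymp\theta$. The Gaussian measure of the slab $\{|\langle u_i,g\rangle|\le \epsilon\}$ is of order $\theta$.

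Inside that slab, move along $u_i$ by coupling $g$ with $g'=g+t u_i$, where $t$ is a small shift. Along this line every term with $k\neq i$ is a smooth (in fact polynomial, since $\sigma$ is a degree-$q$ polynomial on each half-line) function of $t$, because no other sign changes. The $i$-th term $a_i\sigma(t')$, by contrast, is nonpolynomial across $t'=0$. Positive homogeneity with nonpolynomiality means $\sigma(t)=\alpha t^q$ for $t>0$ and $\beta|t|^q$ for $t<0$, with $\sigma$ not a polynomial. Therefore $h$ restricted to the segment cannot vanish identically, since a function that is polynomial on the segment plus a nonpolynomial kink cannot be zero. Because it is piecewise polynomial, it is nonzero except at finitely many points. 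Among the points $g,g'$ of the coupled configurations, at least one therefore has $h\neq 0$ with probability comparable to the slab mass. A density-ratio argument between the nearby Gaussian points converts this into $\Pb(h(g)\ne0)\gtrsim\theta$.

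The main obstacle is this quantitative local step: showing that the set of $g$ in the $\theta$-slab around $u_i^\perp$ where no other hyperplane intervenes has Gaussian mass $\Omega(\theta)$ uniformly in $d$ and $m$. The danger is that many hyperplanes $u_k^\perp$ could crowd near $u_i^\perp$. I would restrict to the two-dimensional picture: condition on $\langle u_i,g\rangle$ and use that each $\langle u_k,g\rangle$ has a component independent of $u_i$ with variance $\sin^2\angle(u_i,u_k)\ge\sin^2\theta$. If a union bound over $m$ hyperplanes is too lossy, I would instead work along the one-dimensional line $g+tu_i$ and argue that $h$ is nonzero almost surely along a.e. line through the slab. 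Only hyperplanes crossing within distance $|t|$ matter, and the kink at $t=0$ persists unless another crossing coincides exactly, which is a measure-zero event.
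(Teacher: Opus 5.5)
\paragraph{Verdict.} Your outer architecture matches the paper: the reduction to unit rows by homogeneity, the sequential revealing of columns with a binomial Chernoff bound, and the fact that an isolated crossing of an active hyperplane cannot leave $h$ identically zero on both sides, because the two homogeneous branches of $\sigma$ do not join analytically. The genuine gap is the quantitative step you flag yourself, and neither of your proposed resolutions closes it.

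\paragraph{Why the single-slab claim fails.} You claim that inside the slab $\{|\langle u_i,g\rangle|\le\epsilon\}$ with $\epsilon\asymp\theta$, the other active hyperplanes stay out of the way with constant probability. This fails uniformly in $m$. Take $u_0$ and $u_k=\cos\theta\,u_0+\sin\theta\,e_k$ for $k=1,\ldots,m-1$, with the $e_k$ orthonormal in $u_0^\perp$. The pairwise inner products are $\cos\theta$ or $\cos^2\theta$, so projective separation holds; let all $a_k\neq0$. Along the line $g+tu_0$, the crossing of $u_k^\perp$ lies at offset $-\tan\theta\,\xi_k$ from the crossing of $u_0^\perp$. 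Here $\xi_k=\langle e_k,g\rangle$ are i.i.d.\ standard Gaussians, independent of $\langle u_0,g\rangle$. Consequently, for small $\theta$:
\begin{itemize}
\item a window of length $\asymp\theta$ contains $\Theta(m)$ other crossings with overwhelming probability;
\item the two pieces of the line adjacent to the $u_0^\perp$-crossing typically have length $O(\theta/m)$;
\item the part of the slab where that crossing is isolated has Gaussian mass only $O(\theta/m)$.
\end{itemize}
An analogous computation applies to every other choice of $i$. Your fallback (the kink persists on almost every line) is only qualitative. It certifies $h\not\equiv0$ on one of those two short adjacent pieces, so integrating over lines gives at best $\Pb(h(g)\neq0)\gtrsim\theta/m$. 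That yields $n\gtrsim m^2/\theta$ rather than $m/\theta$.

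\paragraph{The missing idea.} You need to aggregate over all active hyperplanes instead of committing to one index. Every active $j$ has $a_j\neq0$, so an isolated crossing of \emph{any} active hyperplane certifies non-vanishing at one endpoint. The $\Omega(\theta)$ then arises as $k$ contributions of order $\theta/k$, where $k$ is the number of active indices, not from one slab of width $\theta$.

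The paper perturbs at scale $\theta/k$ in a random direction: $\tilde g=\cos\varphi\,g+\sin\varphi\,h$ with $\varphi=\arctan(\theta/k)$ and $h$ an independent standard Gaussian. Then $\tilde g$ is exactly standard Gaussian, so no density-ratio step is needed. By Sheppard's formula, each active hyperplane separates $g$ and $\tilde g$ with probability $\varphi/\pi\approx\theta/(\pi k)$. Projective separation bounds the probability that two given hyperplanes both separate them by $\theta/(2\pi k^2)$, using that the density of $(\langle u_j,g\rangle,\langle u_l,g\rangle)$ is at most $1/(2\pi\sin\theta_{jl})$. Bonferroni then gives $\Pb(N=1)\ge\theta/(2\pi)$, hence $\Pb(h(g)\neq0)\ge\theta/(4\pi)$.

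Your directional version can likely be repaired the same way. Shift along $u_i$ only by $c\theta/k$, so the union bound over the other active hyperplanes costs $O(c)$. Then sum the resulting nearly disjoint contributions of mass $\gtrsim c\theta/k$ over all active $i$.
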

This result applies simultaneously to sign, ReLU, nonlinear leaky ReLU, and all
ReLU powers. For sign features, it replaces the exponential dependence on
\(m\) in \cite{DragoBCMSSB26} by a linear one, without any dependence on the
ambient dimension \(d\).
The proof uses local facet crossings rather than global chamber-volume estimates. Moreover, the bound is optimal in the following sense: first, \(n\ge m\) is necessary for any \(m\times n\) matrix to
have full row rank, and, second, configurations of two sign inputs or three ReLU
inputs show that failure probability at most \(\delta\) may require
\(n=\Omega(\theta^{-1}\log\delta^{-1})\). Thus the dependence on \(m\) is
optimal for fixed \(\theta\), and the dependence on \(\theta\) and \(\delta\)
is optimal for a constant number of data points (see \Cref{prop:exact-rank-confidence-lower-bounds} for a proof).

\medskip
\emph{Exact rank lifting} is an algebraic property as it distinguishes a zero
singular value from a nonzero one, but it is insensitive to how small a
nonzero singular value may be. It is then natural to investigate the question
of when rank lifting is quantitatively stable, as measured by the smallest eigenvalue of the empirical feature Gram matrix,
a property we call \emph{stable rank lifting}. 
\begin{question}[Stable Rank Lifting]\label{q:stable_rank_lifting}
Our second question asks how large should \(n\) be so that
\[
    \Pb\left(
        \lambda_{\min}
        \bigl(
            \sigma(UR)\sigma(UR)^\top
        \bigr)
        \ge
        cn\kappa_\sigma
    \right)
    \ge
    1-\delta,
\]
for a universal constant \(c>0\).
Equivalently, how many random neurons suffice to ensure that the minimum eigenvalue of the empirical feature Gram matrix exceeds a given threshold?
\end{question}
For this stronger
quantitative question, we work with the normalized data \(U\) and write \(Z=\sigma(UR)\) for the stable-rank statements below. The following holds:
\begin{theorem}[Thm. \ref{thm:homogeneous-stable-rank} informal]\label{thm:exact_rank_informal}
Let 
	\[
		K_\sigma
		=
		\E\left[
			\sigma(Ug)\sigma(Ug)^\top
		\right],
		\qquad
		\kappa_\sigma=\lambda_{\min}(K_\sigma).
	\]
denote the population kernel and its smallest eigenvalue. If
	\[
		n \ge C_{\sigma}\cdot \frac{m}{\theta^{2q+1}}\cdot \log^{2q+\frac{1}{2}}\left(c_{\sigma}\frac{m}{\theta}\right) \log\left(\frac{m}{\delta}\right),
	\]
then 
	\[
		\Pb\left(
			\lambda_{\min}(ZZ^\top)
			\ge
			\frac{n\kappa_\sigma}{4}
		\right)
		\ge
		1-\delta,
	\]
with $q$ the degree of homoegenity and $c_{\sigma}$ and $C_{\sigma}$ two constants that only depend on the activation. 
\end{theorem}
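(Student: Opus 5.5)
We prove the statement by the standard route of lower-bounding the population eigenvalue, truncating the features, and applying matrix concentration. Write $ZZ^\top=\sum_{j=1}^n z_jz_j^\top$ with $z_j=\sigma(Ug_j)\in\R^m$ i.i.d. and $\E[z_jz_j^\top]=K_\sigma$. The plan has three steps:
\begin{enumerate}
\item Show that $\kappa_\sigma>0$ with an explicit lower bound in terms of $\theta$, $m$ and $q$.
\item Truncate the features so that each summand has bounded operator norm.
\item Apply a matrix Chernoff bound to the truncated sum, controlling the truncation bias separately.
\end{enumerate}

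\textbf{Step 1: the population eigenvalue.} For $q$-homogeneous $\sigma$ with $\sigma(t)=a t_+^q+b t_-^q$, the entries of $K_\sigma$ are $k(\langle u_i,u_k\rangle)$. Here $k(\rho)=\sum_{\ell\ge0}c_\ell\rho^\ell$ is the dual kernel, with Hermite coefficients $c_\ell\ge0$. Because $\sigma$ is nonpolynomial, infinitely many $c_\ell$ are nonzero, and they decay polynomially, roughly $c_\ell\asymp C^q\ell^{-(q+3/2)}$ along the appropriate parity class. By Schur's product theorem each Gram matrix $(\langle u_i,u_k\rangle^{\ell})_{ik}$ is positive semidefinite. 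Hence $K_\sigma\succeq\sum_{\ell\ge L}c_\ell G^{\circ\ell}$ for any $L$, where $G$ is the Gram matrix of the $u_i$. Its diagonal is $\sum_{\ell\ge L}c_\ell$. By projective separation, $|\langle u_i,u_k\rangle|\le\cos\theta$ for $i\ne k$, so the off-diagonal entries are at most $\sum_{\ell\ge L}c_\ell(\cos\theta)^\ell\le \sum_{\ell\ge L}c_\ell e^{-\theta^2\ell/2}$.

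Choose $L\approx \theta^{-2}\log(m/\theta)$ and keep only a window of $\ell\in[L,2L]$ with a single parity. Gershgorin then gives
\[
\kappa_\sigma\gtrsim \sum_{\ell\in[L,2L]}c_\ell\bigl(1-m e^{-\theta^2 L/2}\bigr)\gtrsim C^{-q}L^{-(q+1/2)}.
\]
If only one parity class of $c_\ell$ is nonzero, the sign pattern is handled by that parity. This yields $\kappa_\sigma\gtrsim C^{-q}\bigl(\theta^2/\log(m/\theta)\bigr)^{q+1/2}$. This step is the main obstacle: the asymptotics of $c_\ell$ must be made explicit, uniformly in $q$, and with the correct parity.

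\textbf{Step 2: truncation.} We have $\|z_j\|_2^2\le \max(|a|,|b|)^2\sum_i|\langle u_i,g_j\rangle|^{2q}$, and each $\langle u_i,g_j\rangle\sim N(0,1)$. A union bound gives $\max_i|\langle u_i,g\rangle|\le \tau:=\sqrt{2\log(mn/\delta)}$ with high probability. Define the truncated vectors $\tilde z_j=z_j\mathbbm 1\{\max_i|\langle u_i,g_j\rangle|\le\tau\}$, so that $\|\tilde z_j\|^2\le B:=Cm\tau^{2q}$. The truncation bias satisfies
\[
\|K_\sigma-\E\tilde z\tilde z^\top\|\le m\,\E\bigl[|\langle u,g\rangle|^{2q}\mathbbm 1\{\cdot>\tau\}\bigr]\le m\,\mathrm{poly}(\tau)\,e^{-\tau^2/2}\,C^q.
\]
This is at most $\kappa_\sigma/4$ once $\tau^2\gtrsim q\log(m/\theta)$.

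\textbf{Step 3: concentration.} Matrix Chernoff applied to $\sum_j\tilde z_j\tilde z_j^\top$, whose summands are PSD with norm at most $B$, gives $\lambda_{\min}\ge \tfrac12 n\lambda_{\min}(\E\tilde z\tilde z^\top)$ with probability at least $1-m\exp(-cn\kappa_\sigma/B)$. On the good event, $ZZ^\top\succeq\sum_j\tilde z_j\tilde z_j^\top$ in any case, since dropping terms only decreases a PSD sum. Combining with the bias bound gives $\lambda_{\min}(ZZ^\top)\ge n\kappa_\sigma/4$ provided
\[
n\gtrsim \frac{B}{\kappa_\sigma}\log\frac m\delta\approx \frac{m\tau^{2q}}{\kappa_\sigma}\log\frac m\delta .
\]
Substituting $\tau^{2q}\approx\log^q(m/\theta)$ and the bound on $\kappa_\sigma$ from Step 1 gives $n\gtrsim C^q m\theta^{-(2q+1)}\log^{2q+1/2}(m/\theta)\log(m/\delta)$, up to the exact exponent bookkeeping.
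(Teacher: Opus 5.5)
Your proposal is correct and follows essentially the same route as the paper: nonnegative Taylor (squared Hermite) coefficients of the correlation kernel, Schur products and discarding low degrees, and Gershgorin under projective separation; then truncation of columns with a large preactivation, a bias bound, and matrix Chernoff. The coefficient asymptotics you flag as the main obstacle are exactly what the paper's explicit Gamma-function formula for $a_k$ supplies: for $k>q$ only the parity opposite to $q$ survives, with factor $(c_+-(-1)^qc_-)^2$ and decay $k^{-(q+3/2)}$.

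One cleanup is needed: drop the union-bound threshold $\sqrt{2\log(mn/\delta)}$, which would put $\log^q(mn/\delta)$ into the width. Instead, fix the truncation level purely by the bias condition, independent of $n$ and $\delta$, as the paper does with $L^2=4\log(\cdot\,m^2/\kappa_\sigma)$. This suffices because $ZZ^\top\succeq\sum_j\tilde z_j\tilde z_j^\top$ holds deterministically.
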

In particular, for the special case of the sign considered in \cite{DragoBCMSSB26}, \Cref{thm:exact_rank_informal} implies that there are universal constants \(c,C>0\) such that
\[
    n
    \ge
    C\frac{m\sqrt{\log(m)}}{\theta}
    \log\left(\frac{m}{\delta}\right)
    \quad\Longrightarrow\quad
    \Pb\left(
        \lambda_{\min}(ZZ^\top)
        \ge
        cn\frac{\theta}{\sqrt{\log(m)}}
    \right)
    \ge
    1-\delta.
\]
Our second result gives finite-width stable rank guarantees for worst-case projectively separated data.
Its population component---lower bounding the smallest eigenvalue
of a neural Gram matrix---has also played a central role in analyses of
optimization and memorization
\cite{DuZhaiPoczosSingh19,AllenZhuLiSong19,OymakSoltanolkotabi20,
PanigrahiShettyGoyal20,NguyenMondelliMontufar21,
KarhadkarMurrayMontufar24,LiuMX25,Song26}. In particular,
Panigrahi et al.~\cite{PanigrahiShettyGoyal20}
emphasize the role of activation
nonsmoothness, while Song~\cite{Song26} proves a sharp
\(\theta/\sqrt{\log m}\)-type bound for the continuous ReLU-derivative Gram
matrix.
Liu et al.~\cite{LiuMX25} also bound kernel spectra for higher-order activations, specifically ReLU powers.
Our proof strategy for \Cref{sec:homogeneous-stable-rank}
combines the Taylor-tail/Gershgorin approach of
\cite{PanigrahiShettyGoyal20,Song26} with Gaussian kernel expansions
\cite{DanielyFrostigSinger16,HolzmullerScholpple26} and standard
truncation and matrix-concentration arguments \cite{Tropp15}.
While we do not claim novelty for the technical ingredients of this analysis,
we bring them together in a unified, self-contained,
nonasymptotic treatment of positively homogeneous nonpolynomial
activations of arbitrary nonnegative integer degree $q$.
This synthesis makes the dependence on the activation explicit and
allows us to contrast the quantitative requirements of stable rank
lifting with our activation-uniform guarantees for exact rank lifting.
Specifically, if \(\sigma\) has homogeneity degree \(q\), then
\[
    \kappa_\sigma
    \ge
    c_\sigma
    \min\left\{
        1,\,
        \left(
            \frac{\log(\sec\theta)}{\log(m)}
        \right)^{q+\frac12}
    \right\}.
\]
Here \(c_\sigma>0\) depends only on \(\sigma\), and the minimum is interpreted as \(1\) when \(\theta=\pi/2\).

We remark that the dual hyperplane-arrangement viewpoint underlies both results: local
chamber crossings yield exact rank lifting, while the associated correlation
kernel controls stable rank lifting.

\paragraph{Exact versus stable rank lifting.}
Stable rank is a substantially stronger property than exact rank. The latter already
means that, once the random hidden layer is sampled, the same representation
can realize every scalar or vector-valued target on the dataset through a
linear readout, while stable rank lifting makes this guarantee quantitative. Indeed,
if \(Z\) has full row rank and \(\bw^\star\) is the minimum-norm solution to
\(Z\bw=y\), then
\[
    \|\bw^\star\|_2
    \le
    \frac{\|y\|_2}
         {\sqrt{\lambda_{\min}(ZZ^\top)}}.
\]
In particular, for every binary labeling \(y\in\{\pm1\}^m\),
\[
    \min_{i\in[m]}
    y_i
    \left\langle
        Z_{i,:},
        \frac{\bw^\star}{\|\bw^\star\|_2}
    \right\rangle
    =
    \frac{1}{\|\bw^\star\|_2}
    \ge
    \sqrt{
        \frac{\lambda_{\min}(ZZ^\top)}{m}
    }.
\]
Thus the same random representation linearly realizes every binary labeling
with an explicit feature-space margin guarantee, while also controlling the readout norm and sensitivity to perturbations of the features
(see \Cref{lem:min-norm-interpolant}).

The stable result quantifies the nonsingularity of the same
random-hyperplane representation.
Unlike our exact-rank bound, the stable guarantee therefore depends quantitatively on
the activation and its degree of homogeneity. This activation dependence is unavoidable: already for a
single unit input and $\sigma(t)=t_+^q$, exact rank holds with probability
$1-2^{-n}$, independently of $q$, whereas achieving
$\lambda_{\min}(ZZ^\top)\ge n\kappa_\sigma/4$ with any fixed positive
success probability requires $n=\exp(\Omega(q))$
(Appendix \ref{app:degree-dependent-separation}). The distinction is that exact
rank needs only a positive preactivation, while stable rank must capture
the contribution of rare, large preactivations to the population second
moment.

\paragraph{Proof overview: exact rank lifting.}
We next describe the main ingredients needed for the proof of the exact-rank bound. Positive homogeneity ensures that row normalization preserves rank.
First, we imagine the columns $\sigma(Ug)$ of $Z = \sigma(UR)$ to arrive sequentially, and assume that the feature columns sampled so far span a proper subspace \(V\subsetneq\mathbb R^m\). We then prove that the next column escapes \(V\) with probability \(\Omega(\theta)\). To this end, let us choose a nonzero \(a\in V^\perp\) and consider
\[
    \sum_{i=1}^m
    a_i\sigma(\langle u_i,g\rangle).
\]
We observe that the active hyperplanes \(u_i^\perp\), corresponding to \(a_i\neq0\), partition
\(\mathbb R^d\) into chambers.
For the sign activation, the displayed expression is constant within each chamber, as already noted in \cite{DragoBCMSSB26}, since crossing exactly one active hyperplane changes it by \(\pm2a_i\), so it cannot vanish at both endpoints of such a crossing. Indeed, suppose that \(g_1\) and
\(g_2\) lie in adjacent chambers. Relabeling the active indices as
\(1,\ldots,k\), we may assume that
\(\sigma(\langle u_1,g_1\rangle)=1\),
\(\sigma(\langle u_1,g_2\rangle)=-1\), and
\(\sigma(\langle u_i,g_1\rangle)=\sigma(\langle u_i,g_2\rangle)\) for
\(i=2,\ldots,k\). The two sums therefore satisfy
\[
    \sum_{i=1}^k a_i\sigma(\langle u_i,g_1\rangle)
    -
    \sum_{i=1}^k a_i\sigma(\langle u_i,g_2\rangle)
    =
    2a_1
    \neq0,
\]
since the crossed hyperplane is active, and hence they cannot both vanish. We generalize the above argument to positively homogeneous activation functions of higher powers.

For ReLU, the expression is linear on each chamber, and closely related
one-facet comparisons appear in
\cite[Lemma 2 and Appendix B]{PetzkaTrimmelSminchisescu20},
\cite[Theorem 2.1 and its proof in Section~4]{HeLiXuZheng20}, and, most
directly, in the proof of
\cite[Lemma 16 in Appendix A.3 of the full version]{FroeseGrilloSkutella25}.
These arguments exploit the change of the linear formula or of its gradient
across one ReLU hyperplane. For higher homogeneous powers, however, the expression is generally analytic
rather than linear inside a chamber. We extend the preceding observation by
showing that it cannot vanish in both chambers adjacent to an
active facet: near a point in the relative interior of that facet, only the crossed neuron changes branch, while all other terms remain analytic. If the sum vanished in both chambers,
restricting to a line transverse to the facet would
analytically continue the two branches of \(\sigma\) through the origin, forcing \(\sigma\) to be a pure monomial. 

To turn this local observation into a probability bound, we perturb a Gaussian vector while preserving its distribution. Let
\(g,h\sim\mathcal N(0,I_d)\) be independent, and set
\(\widetilde g=(g+(\theta/k)h)/\sqrt{1+(\theta/k)^2}\), where \(k\) is the number of active
hyperplanes. In words, we apply a small Gaussian displacement and then a positive rescaling,
where the latter does not change the chamber containing the endpoint. The perturbation must be large enough to cross a facet, but small enough
to avoid crossing several at once. Each active hyperplane separates the endpoints with probability \(\varphi/\pi\), while projective separation
controls the probability that two hyperplanes separate them simultaneously. The scale \(\theta/k\) balances these effects and gives
probability at least \(\theta/2\pi\) of crossing exactly one active hyperplane. On this event, the endpoints lie in adjacent chambers, so at least one breaks the proposed linear relation almost surely. Crucially, both endpoints have the same standard Gaussian distribution.
The probability that a single Gaussian weight breaks the relation is therefore at least \(\theta/4\pi\). Finally, since we have imagined revealing the columns sequentially, applying a scalar Chernoff bound gives the stated width guarantee.

\paragraph{Proof overview: stable rank lifting.}
The proof ideas are most transparent for the sign activation function, but, as we explain further below, the argument extends to all positively homogeneous nonpolynomial activation functions. When $\sigma=\sgn$, the population kernel is given explicitly by
\[
    (K_{\sgn})_{ij}
    =
    \frac{2}{\pi}
    \arcsin(\langle u_i,u_j\rangle),
\]
and its Taylor expansion $K_{\sgn} = \sum_{k=0}^\infty a_k(UU^\top)^{\circ k}$ has nonnegative coefficients $a_k$ whose tail beyond degree
\(N\) is of order \(N^{-1/2}\). We therefore retain the high-degree tail
$T_N = \sum_{k=N}^\infty a_k(UU^\top)^{\circ k}$. All omitted terms are positive semidefinite, so \(K_{\sgn}\succeq T_N\).
Its diagonal entries contain the full tail mass, whereas projective
separation suppresses every off-diagonal entry by a factor
\((\cos\theta)^N\). Choosing \(N\) so that
\((m-1)(\cos\theta)^N\le1/2\), Gershgorin's circle theorem gives
\(\kappa_{\sgn}\gtrsim\theta/\sqrt{\log(m)}\). Since sign columns have
deterministic squared norm \(m\), matrix Chernoff transfers this population
gap directly to the empirical Gram matrix.

For a general positively homogeneous nonpolynomial activation of degree $q$, we compute the ordinary Taylor
expansion of its scalar Gaussian correlation kernel. These Taylor coefficients are the squares of the orthonormal Gaussian Hermite coefficients of \(\sigma\) \cite{DanielyFrostigSinger16,HolzmullerScholpple26}.
Its coefficients remain
nonnegative, but their tail is now of order \(N^{-q-\frac12}\). The same
diagonally dominant tail argument gives the stated population bound. Finally,
for unbounded activations such as ReLU, we truncate the rare columns
containing a large preactivation, control the resulting population bias, and
then apply matrix Chernoff to the bounded truncated columns.

\subsection{Related Work}\label{sec:related-work}

\paragraph{Random hidden layers and random features.}
Models with randomly sampled and frozen hidden parameters include random-basis
networks \cite{IgelnikPao95}, extreme learning machines
\cite{HuangZhuSiew06}, and echo-state and liquid-state networks
\cite{Jaeger01,MaassNatschlaegerMarkram02,LukoseviciusJaeger09}. Random
features were introduced as finite-dimensional kernel approximations
\cite{RahimiRecht07,RahimiRecht08}, followed by work on their approximation,
statistical, and computational properties
\cite{Bach17,RudiRosasco17,AvronEtAl17,LiuEtAl22}, and on nonlinear component
analysis \cite{LopezPazEtAl14}. Earlier full-rank results for extreme learning
machines are mainly qualitative and typically rely on genericity, random
biases, or smoothness assumptions \cite{HuangZhuSiew06}. We instead seek
high-probability width bounds for a finite, bias-free Gaussian feature matrix
on fixed projectively separated data.

\paragraph{Spectral Distribution and Asymptotic Random Feature Analysis.}
A prominent line of work analyzes the spectral properties and minimum eigenvalues of random feature and kernel Gram matrices using Random Matrix Theory (e.g., \cite{pmlr-v258-dandi25a,misiakiewicz2022spectruminnerproductkernelmatrices,MeiMontanari22}). These results typically establish deterministic equivalents and empirical spectral distributions in proportional asymptotic regimes ($n, d, m \to \infty$ at fixed ratios) under smooth or isotropic data distributions. In contrast, our work operates in a strictly non-asymptotic setting, establishing dimension-free exact and stable rank lower bounds that hold for fixed sample sizes $n$ under worst-case, projectively separated data.

\paragraph{Interpolation, memorization, and rank lifting.}
Interpolation by overparameterized models is closely tied to questions of
capacity and generalization
\cite{ZhangEtAl17,BelkinEtAl19,LiangRakhlin20}. Random-feature interpolation
has been studied largely in asymptotic or average-case settings
\cite{MeiMontanari22,MontanariZhong22}, while other work bounds the
memorization capacity of threshold and ReLU networks
\cite{BaldiVershynin19,Vershynin20,Daniely20}. In nonasymptotic regimes, \cite{bubeck2020networksizeweightssize} study the network size and
weight magnitudes needed for memorization under general-position
and, for some results, additional well-dispersedness assumptions. On the other hand, in \cite{dirksen2022separation,ghosal2022randomly}, the authors investigate the separation properties of randomly initialized, single or two-layer ReLU networks with uniform bias in feature space. Differently from this line of work, rank lifting asks a more specific question: when does one frozen random layer already make arbitrary
labels linearly representable? The closest work is \cite{DragoBCMSSB26},
which studies the special case of a Gaussian sign layer through its spherical hyperplane arrangement. Specifically, \cite{DragoBCMSSB26} uses a global chamber-volume argument to prove a width requirement that scales exponentially with the number of points and as \(\sqrt d\) with the ambient dimension,
with sharper results in dimensions two and three. Our local subspace-escape argument avoids these volume estimates and gives a dimension-free bound linear in \(m\), for fixed \(\theta\).
A brief, yet more detailed comparison with \cite{DragoBCMSSB26} is given in \Cref{rem:dragoetal_comparison}.

\paragraph{Hyperplane arrangements and nonpolynomial activations.}
Hyperplane sign patterns and region counts have classical roots
\cite{Cover65,Zaslavsky75} and are widely used to study the expressivity of
piecewise-linear networks
\cite{MontufarEtAl14,RaghuEtAl17,SerraEtAl18}. Rather than counting chambers or estimating their volumes \cite{DragoBCMSSB26},
our exact-rank proof studies one crossing of an active facet and shows that the associated linear relation cannot hold identically in both adjacent chambers.
Nonpolynomiality is known to
characterize universal approximation by ridge-function networks
\cite{LeshnoEtAl93,Pinkus99}. In this work, it has a local role: it prevents the two
homogeneous branches of the activation from joining analytically across the
facet.

\paragraph{Neural kernels and minimum-eigenvalue bounds.}
Infinite-width networks induce Gaussian-process and dot-product kernels
\cite{Neal96,Williams97,ChoSaul09}, while neural tangent kernels describe
linearized training \cite{JacotGabrielHongler18}. Their dependence on input
correlations is captured by the dual-activation formalism
\cite{DanielyFrostigSinger16}. Lower bounds on neural Gram-matrix eigenvalues
are central to convergence analyses
\cite{DuZhaiPoczosSingh19,AllenZhuLiSong19,OymakSoltanolkotabi20}, and
activation-sensitive, deep-ReLU, arbitrary-spherical-data, and qualitative
positivity results appear in
\cite{PanigrahiShettyGoyal20,NguyenMondelliMontufar21,
KarhadkarMurrayMontufar24,CarvalhoCostaMouraoOliveira25}.
Most closely related to our population argument, Liu et al.~\cite{LiuMX25} bound kernel spectra for higher-order activations and specifically ReLU-powers, and Song \cite{Song26} proves a sharp
worst-case bound for the continuous ReLU-derivative Gram matrix by isolating a
diagonally dominant high-degree tail of a positive Hadamard-power expansion.
That matrix is the hidden-weight component of the ReLU NTK, rather than the
plain ReLU feature covariance present in this work.

\paragraph{Power-series and random-matrix methods.}
Nonnegative power expansions of spherical positive definite kernels originate
in \cite{Schoenberg42} and underlie modern spectral and RKHS analyses of
neural dot-product kernels
\cite{DanielyFrostigSinger16,BiettiBach21,ChenXu21,ScetbonHarchaoui21,
MurrayJinBowmanMontufar23,HolzmullerScholpple26}. We use this structure for a
worst-case finite Gram matrix: discarding low degrees leaves a tail whose
diagonal mass dominates its off-diagonal entries. A complementary literature
studies nonlinear random-feature spectra in proportional asymptotic regimes
\cite{PenningtonWorah17,LouartLiaoCouillet18,FanWang20}. Our guarantees are
instead nonasymptotic, dimension-free, and valid for deterministic data under
projective separation.

%% file: sections/preliminaries.tex
\section{Setup and Assumptions}\label{sec:preliminaries}

In this section, we provide the formal setup and required assumptions for the rest of this paper, recalling some of the notions introduced earlier.

\paragraph{Projective separation.} Let \(X\in\mathbb R^{m\times d}\) have nonzero rows
\(x_1^\top,\ldots,x_m^\top\), and let
\(R\in\mathbb R^{d\times n}\) have independent standard Gaussian columns
\(g_1,\ldots,g_n\sim \mathcal N(0,I_d)\). The random feature matrix is defined as $Z=\sigma(XR)\in\mathbb R^{m\times n}$, where \(\sigma: \R \to \R\) is an activation function applied entrywise. The \(i\)-th row of \(Z\) is the random
representation of \(x_i\), while the \(j\)-th column records the response of the \(j\)-th random neuron on all \(m\) data points. Throughout this work, we write $u_i=x_i/\|x_i\|_2$ and denote by $\theta_{ij}=\arccos(\langle u_i,u_j \rangle)$ the angle between vectors $x_i,x_j$ (equivalently $u_i, u_j$) for $i \neq j$ and make the following assumption:
\begin{assumption}[Projective separation]
\label{asmpt:separation}
For some \(0<\theta\le\pi/2\), the rows of \(X\) satisfy
\[
    \theta
    \le
    \theta_{ij}
    \le
    \pi-\theta
    \qquad
    \text{for all }i\neq j.
\]
\end{assumption}
The quantity \(\theta\) is a lower bound on the minimum distance between two rows viewed as
directions in projective space: it rules out pairs that are nearly parallel or
nearly antiparallel. The two-sided separation is unavoidable for a theorem
that includes the sign activation, since parallel inputs produce identical
sign features and antiparallel inputs produce opposite sign features.
More generally, \(\theta\) quantifies how difficult it is for a random
hyperplane to distinguish two projective directions. Furthermore, note that, denoting by $\rho_{ij} = |\langle u_i,u_j\rangle|$ and by $\rho = \cos \theta$, the above assumption can be written as $\rho_{ij} \le \rho$ for all $i \neq j$.

\paragraph{Activation functions.}
Our main results concern the class of positively homogeneous activations $\sigma$, i.e., such that, for some integer \(q\ge0\),
\[
    \sigma(ct)=c^q\sigma(t)
    \qquad
    \text{for every }c>0\text{ and }t\neq0.
\]
Here we allow homogeneity of arbitrary nonnegative integer degree: for example, sign is \(0\)-homogeneous,
ReLU and leaky ReLU are \(1\)-homogeneous, and the ReLU powers
\(t\mapsto(t)_+^q\) are \(q\)-homogeneous. For $q=0$, we use the convention $t_+^0\coloneqq\mathbbm 1\{t>0\}$.
Note that the value of \(\sigma(0)\) is irrelevant, since Gaussian preactivations are nonzero almost surely.

Positive homogeneity gives additional properties that are particularly useful for both the exact as well as the stable rank questions. Indeed, letting \(U\in\mathbb R^{m\times d}\) be the matrix with rows \(u_i^\top\) and, thus, writing $X=DU$ for $D=\diag(\|x_1\|_2,\ldots,\|x_m\|_2)$, we have
\[
    \sigma(XR)=D^q\sigma(UR)\qquad\text{almost surely}.
\]
Since \(D^q\) is invertible, the matrices \(\sigma(XR)\) and \(\sigma(UR)\) have the same rank. Thus the exact-rank problem depends only on the normalized directions \(u_1,\ldots,u_m\), not on the lengths of the data vectors.
For stable rank lifting, the same identity transfers any lower bound on the smallest eigenvalue of $\sigma(UR)\sigma(UR)^\top$ to $\sigma(XR)\sigma(XR)^\top$ with an additional factor $\min_{i\in[m]}\|x_i\|_2^{2q}$.

There is, however, an immediate obstruction for a subclass of positively homogeneous activations, namely, pure polynomials (up to the value at zero). Indeed, a positively homogeneous activation can be polynomial only if it is a monomial
\(\sigma(t)=ct^q\). Such activations though cannot lift rank in a
dimension-independent manner: there exist projectively separated datasets $X$ such that, if \(m>\binom{d+q-1}{q}\) and $\sigma$ is a monomial, then no number of random features can make \(\sigma(XR)\) full row rank, and thus $\Pb(\rank(\sigma(XR))=m)=0$ (see \Cref{obs:monomial-obstruction}).

Once we exclude the polynomial subclass, which we show to be the only obstruction, we observe that a positively homogeneous activation can be written as $\sigma(t) = c_+t^q_+ + c_-(-t)^q_+$ for $t\neq0$, where \(t_+ \coloneqq \max\{0,t\}\), \(c_+\coloneqq\sigma(1), c_-\coloneqq\sigma(-1)\) and with $c_-\neq(-1)^q c_+$. Indeed, a positively homogeneous activation agrees on \(\mathbb R\setminus\{0\}\) with a polynomial if and only if $q \in\mathbb Z_{\ge0}$ and $c_-=(-1)^q c_+$, in which case \(\sigma(t)=c_+t^q\) away from the origin.

To summarize, we make the following assumption on activation functions, and, for succinctness, we denote by $M \coloneqq \max\{|c_+|,|c_-|\}$ the maximum of the two coefficients in absolute value:
\begin{assumption}[Positively homogeneous non-polynomial activations]
\label{asmpt:activations}
For some integer \(q \ge 0\) and all $t \in \R\setminus\{0\}$, the activation $\sigma: \R \to \R$ satisfies
\[
    \sigma(t) = c_+t^q_+ + c_-(-t)^q_+,
\]
where \(c_+=\sigma(1), c_-=\sigma(-1)\) and with $c_-\neq(-1)^q c_+$.
The value of $\sigma(0)$ is arbitrary.
\end{assumption}

Note that the sign function $\sgn(t) = \mathbbm 1\{t > 0\} - \mathbbm 1\{t < 0\}$ is an instantiation of the above with $q=0, c_+=1,c_-=-1$, the Heaviside step function $H(t)=\mathbbm 1\{t > 0\}$ also is with $q=0, c_+=1,c_-=0$, and finally the ReLU function $\text{ReLU}(t) = t_+$ with $q=1, c_+=1,c_-=0$.

\paragraph{Hyperplane arrangements and polyhedral fans.}
Given nonzero vectors $\{u_1,\ldots,u_k\} \subseteq \R^d$, a collection of hyperplanes $\{\{x: \langle u_i, x \rangle = b_i\}: i \in [k]\}$ for values $b_i \in \R$ is commonly referred to as a hyperplane arrangement. If all $b_i = 0$, i.e., all hyperplanes contain the origin, the hyperplane arrangement $\mathcal{F}= \{u_i^\perp: i \in [k]\}$ is called a \emph{central hyperplane arrangement}. Its \emph{chambers} are the connected components of $\mathbb R^d\setminus\bigcup_{i=1}^k u_i^\perp$. These are open, full-dimensional polyhedral cones; their closures, together with all their faces, form a \emph{polyhedral fan}. We call two chambers \emph{adjacent} if the intersection of their closures $C_1, C_2$ is a $(d-1)$-dimensional face of both, i.e., if there exists a hyperplane $u_i^\perp$ in the arrangement such that $C_2\cap u_i^\perp = C_1 \cap u_i^\perp$ is $(d-1)$-dimensional. See also \Cref{fig:poly_fan} for an illustration.

\begin{figure}
    \centering
    \includegraphics[width=0.3\linewidth]{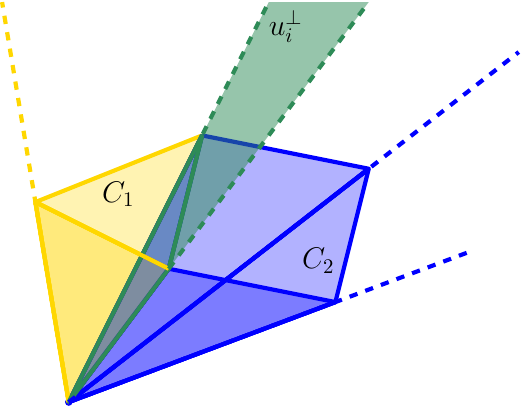}
    \caption{Two adjacent chambers of a three-dimensional polyhedral fan.}
    \label{fig:poly_fan}
\end{figure}

%% file: sections/exact_rank.tex
\section{Exact Rank Lifting}
\label{sec:exact-rank}

The purpose of this section is to study exact rank without requiring a
quantitative lower bound on the smallest singular value. This distinction
allows us to avoid both population-kernel estimates and matrix concentration.
Instead, we show directly that every proper subspace of \(\mathbb R^m\) is
escaped by a single random feature column with probability
\(\Omega(\theta)\). Revealing the columns sequentially then gives full rank
after
\[
    n=O\left(\frac{1}{\theta}\max\left\{m,\log\left(\frac{1}{\delta}\right)\right\}\right)
\]
samples. These statements concern exact rank only and do not assert a lower bound on \(\lambda_{\min}(\sigma(XR)\sigma(XR)^\top)\). Indeed, \Cref{prop:degree-dependent-separation} shows that the width $n$ required for stable rank lifting can grow exponentially with the homogeneity degree $q$, even when the exact-rank width is independent of that degree.

\begin{theorem}[Exact rank lifting]
\label{thm:exact-rank-homogeneous}
Let \(X\in\mathbb R^{m\times d}\) satisfy
\Cref{asmpt:separation}, let
\(R\in\mathbb R^{d\times n}\) have independent \(\mathcal N(0,I_d)\) columns, and let
\(\sigma\) satisfy \Cref{asmpt:activations}. Then, for every \(\delta\in(0,1)\), if
\[
    n\ge
    \frac{8\pi}{\theta}\max\left\{
        m,
        4\log\frac1\delta
    \right\}
\]
then
\[
    \Pb\left(\rank(\sigma(XR))=m\right)\ge 1-\delta.
\]
\end{theorem}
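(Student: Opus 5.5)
We reduce to the normalized directions: by positive homogeneity, $\sigma(XR)=D^q\sigma(UR)$ almost surely, so it suffices to show that $Z=\sigma(UR)$ has rank $m$. We reveal the columns $\sigma(Ug_1),\sigma(Ug_2),\ldots$ one at a time and let $V_j$ be the span of the first $j$ columns. The core claim is a \emph{one-step escape lemma}: for every fixed proper subspace $V\subsetneq\mathbb R^m$ and $g\sim\mathcal N(0,I_d)$,
\[
    \Pb\bigl(\sigma(Ug)\notin V\bigr)\ge \frac{\theta}{4\pi}.
\]
Given this, the indicators "column $j$ increases the dimension, or the dimension is already $m$" stochastically dominate i.i.d.\ Bernoulli$(p)$ variables with $p=\theta/4\pi$. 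Failure therefore requires fewer than $m$ successes among $n$ trials. Since $np\ge 2\max\{m,4\log(1/\delta)\}$, a multiplicative Chernoff bound gives $\Pb(\mathrm{Bin}(n,p)<np/2)\le e^{-np/8}\le\delta$.

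To prove the escape lemma, pick a nonzero $a\perp V$ and set $F(g)=\sum_{i\in S}a_i\sigma(\langle u_i,g\rangle)$, where $S=\{i:a_i\ne0\}$ and $k=|S|$. It suffices to show $\Pb(F(g)\ne0)\ge\theta/4\pi$.

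The first step is a deterministic claim: if $g_1,g_2$ lie in adjacent chambers of the arrangement $\{u_i^\perp\}_{i\in S}$, separated by a single hyperplane $u_1^\perp$, then $F$ cannot vanish identically on both chambers. We prove it by taking a point $p$ in the relative interior of the common facet and the line $p+tu_1$. Near $t=0$, every term with $i\neq1$ is a polynomial in $t$ (each is a fixed branch of a $q$-homogeneous power on a half-space), while the $i=1$ term equals $c_+t^q$ for $t>0$ and $c_-|t|^q$ for $t<0$. If $F\equiv0$ on both sides, then $a_1\sigma(t)$ agrees on each side with the same polynomial $-\sum_{i\ne1}(\cdots)$. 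This forces $c_-=(-1)^qc_+$, contradicting Assumption~\ref{asmpt:activations}.

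We also use that the zero set of $F$ within a chamber is either the whole chamber or a measure-zero set, because $F$ is a polynomial on each chamber. Hence, almost surely, $F(g_1)\ne0$ or $F(g_2)\ne0$ whenever $g_1,g_2$ lie in adjacent chambers.

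The main obstacle is the coupling step. Let $h$ be an independent Gaussian, $\varepsilon=\theta/k$, and $\tilde g=(g+\varepsilon h)/\sqrt{1+\varepsilon^2}\sim\mathcal N(0,I_d)$. The angle $\varphi$ between $g$ and $g+\varepsilon h$ satisfies that $u_i^\perp$ separates $g$ and $\tilde g$ with probability $\E[\varphi]/\pi$, and $\E\varphi$ is of order $\varepsilon$. We lower bound the probability that exactly one hyperplane separates them by inclusion–exclusion:
\[
    \sum_i\Pb(E_i)-\sum_{i<j}\Pb(E_i\cap E_j).
\]
For the pair terms, we use projective separation. A two-dimensional Gaussian computation in $\mathrm{span}(u_i,u_j)$ shows that crossing both hyperplanes with a step of size $\varepsilon$ costs an extra factor of order $\varepsilon/\sin\theta_{ij}\lesssim\varepsilon/\theta$. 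The pair sum is then at most about $k^2\varepsilon^2/\theta$, while the singles contribute about $k\varepsilon$. With $\varepsilon=\theta/k$ (constants tuned), this gives $\Pb(\text{exactly one crossing})\ge\theta/2\pi$. The endpoints could also be separated by that single hyperplane without the chambers being adjacent (the segment could pass near a lower-dimensional face), but this is a null event.

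On the event of exactly one crossing, $\mathbbm 1\{F(g)\ne0\}+\mathbbm 1\{F(\tilde g)\ne0\}\ge1$ almost surely. Taking expectations and using $g\overset{d}{=}\tilde g$ gives $2\Pb(F(g)\ne0)\ge\theta/2\pi$, which is the escape lemma. I expect the careful two-hyperplane bound, and verifying that the exact constants yield $\theta/2\pi$, to be the delicate part.
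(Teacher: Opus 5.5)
Your architecture is the paper's. You use the same reduction to $U$, the same escape lemma with $p=\theta/4\pi$, the same one-facet argument, and the same coupling $\tilde g=(g+(\theta/k)h)/\sqrt{1+(\theta/k)^2}$. Your polynomial-identity version of the facet step is a clean specialization of the paper's analytic-continuation step, valid because $q$ is an integer. Your binomial Chernoff bound replaces the paper's $\E[2^{-Y_j}]$ computation and is equally fine.

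The gap is the single-crossing estimate $\Pb(N=1)\ge\theta/2\pi$, which you leave unverified. Worse, the one inequality you write for it is false. $\Pb(N=1)\ge\sum_i\Pb(E_i)-\sum_{i<j}\Pb(E_i\cap E_j)$ is the Bonferroni bound for $\{N\ge1\}$, not for $\{N=1\}$: if $N=2$ almost surely, the left side is $0$ and the right side is $1$. The correct pointwise inequality is $\mathbf 1_{\{N=1\}}\ge N-N(N-1)$. So the pair sum must run over ordered pairs, which makes it twice yours.

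This matters because the constant $8\pi$ leaves essentially no room for ``tuning''. At $\varepsilon=\theta/k$ the single-crossing sum is about $\theta/\pi$. The ordered pair sum, even with the sharpest available bound, is about $\theta/2\pi$. Indeed $\theta/2\pi=\max_{\varepsilon>0}\bigl(k\varepsilon/\pi-k^2\varepsilon^2/(2\pi\theta)\bigr)$, so the target sits exactly at the optimum.

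The paper reaches it with Sheppard's formula $\Pb(E_i)=\arctan(\theta/k)/\pi$ and the sharp bound $\Pb(E_i\cap E_j)\le\theta/(2\pi k^2)$, obtained in four steps:
\begin{itemize}
\item bound the density of $(\langle g,u_i\rangle,\langle g,u_j\rangle)$ by $1/(2\pi\sin\theta_{ij})$;
\item integrate over the rectangle of area $(\theta/k)^2|\langle h,u_i\rangle||\langle h,u_j\rangle|$;
\item evaluate $\E|\langle h,u_i\rangle\langle h,u_j\rangle|=\frac{2}{\pi}\bigl(\sin\theta_{ij}+(\frac{\pi}{2}-\theta_{ij})\cos\theta_{ij}\bigr)$ exactly via Nabeya's formula;
\item use $1+(\frac{\pi}{2}-\alpha)\cot\alpha\le\pi/(2\min\{\alpha,\pi-\alpha\})$.
\end{itemize}
Then $k\arctan(\theta/k)/\pi-k(k-1)\theta/(2\pi k^2)\ge\theta/2\pi$ follows from $\arctan x\ge x-x^3/3$ and $2\theta^2<3k$.

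A cruder estimate such as $\E|\langle h,u_i\rangle\langle h,u_j\rangle|\le1$ gives at best about $\sin\theta/(2\pi)<\theta/(2\pi)$ for large $k$, even after optimizing $\varepsilon$. Your unordered-pair formula would have masked that shortfall. As written, your argument therefore proves the theorem only with some larger constant in place of $8\pi$.

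A minor point: if exactly one linear form changes sign, all the others keep strict sign along the whole segment. So the two chambers are always adjacent, and no null-event exclusion is needed there.
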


Recall that since this part is concerned with the exact rank problem and we are considering homogeneous activations, the norms of the $x_i$ are irrelevant, and we thus only use their normalized versions $u_i$.
\par
The proof of the theorem will be split up into two parts. First, in \Cref{lem:homogeneous-anti-concentration} we will prove that the probability of $\sigma(Ug)$ escaping any fixed proper subspace $V\subsetneq\R^m$ can be lower-bounded by $\theta/4\pi$. We then use this estimate to lower-bound the probability that $\rank(\sigma(UR)) = m$ by iteratively revealing the columns $\sigma(Ug)$ and lower-bounding their conditional probability of escaping the span of the previously revealed columns.

\begin{lemma}[Uniform linear anti-concentration]
\label{lem:homogeneous-anti-concentration}
Let \(g\sim \mathcal N(0,I_d)\) be a standard normal vector and let $V\subsetneq\R^m$ be a subspace of $\R^m$. Consider \(U=(u_1,\ldots,u_m)^\top\) with unit rows satisfying
\Cref{asmpt:separation} and
\(\sigma\) satisfying \Cref{asmpt:activations}. Then,
\begin{align}\label{eq:single_subspace_escape}
    \Pb\left(\sigma(Ug)\notin V\right)\ge\frac{\theta}{4\pi}.
\end{align}
\end{lemma}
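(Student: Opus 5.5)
Fix a nonzero $a\in V^\perp$; it suffices to show $\Pb(\langle a,\sigma(Ug)\rangle\neq0)\ge\theta/4\pi$, since $\sigma(Ug)\in V$ forces $\langle a,\sigma(Ug)\rangle=0$. Let $S=\{i:a_i\neq0\}$ be the active set and $k=|S|\ge1$, and write $F(g)=\sum_{i\in S}a_i\sigma(\langle u_i,g\rangle)$. The plan has three steps. First, a deterministic local statement: $F$ cannot vanish identically near both sides of an active facet. Second, a Gaussian coupling that produces, with probability at least $\theta/2\pi$, two equidistributed points separated by exactly one active hyperplane. Third, combining these via a union bound over the two endpoints.

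\emph{Step 1 (local non-vanishing).} Take a point $p$ in the relative interior of a facet lying on $u_1^\perp$, with $1\in S$, so that $\langle u_i,p\rangle\neq0$ for all other $i\in S$. I claim $F$ does not vanish a.e. on both adjacent chambers near $p$. Pick a direction $v$ with $\langle u_1,v\rangle=1$ and restrict to the line $t\mapsto p+tv$ for small $|t|$. Each term $i\neq1$ is then a polynomial in $t$ of degree $\le q$, because its sign is constant near $t=0$. The term $i=1$ equals $a_1\sigma(t)$, which is $a_1c_+t^q$ for $t>0$ and $a_1c_-(-1)^q t^q$ for $t<0$. If $F\equiv0$ on both sides, the polynomial $P(t)=\sum_{i\neq1}a_i\sigma(\langle u_i,p+tv\rangle)$ satisfies $P(t)=-a_1c_+t^q$ for $t>0$ and $P(t)=-a_1c_-(-1)^qt^q$ for $t<0$. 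A polynomial agreeing with a monomial on an interval equals it identically, so $c_+=(-1)^qc_-$, contradicting \Cref{asmpt:activations}. Since $F$ restricted to a chamber is real-analytic, nonvanishing at one point of a chamber implies $F\neq0$ almost everywhere there. Hence, for any two points lying in chambers adjacent across an active facet, $F$ is nonzero at one of them almost surely. Some care is needed to ensure that a generic pair separated by exactly one hyperplane really lies in adjacent chambers sharing a facet; genericity, meaning the segment avoids codimension-2 intersections, handles this.

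\emph{Step 2 (coupling).} Let $g,h$ be i.i.d. $\mathcal N(0,I_d)$, set $\varepsilon=\theta/k$, and define $\tilde g=(g+\varepsilon h)/\sqrt{1+\varepsilon^2}$. Then $\tilde g\sim\mathcal N(0,I_d)$, and since the rescaling is positive the chamber of $\tilde g$ is that of $g+\varepsilon h$. For each $i$, $(\langle u_i,g\rangle,\langle u_i,\tilde g\rangle)$ is a bivariate normal pair with correlation $\cos\varphi$, where $\varphi=\arctan\varepsilon$, so $u_i^\perp$ separates $g$ from $\tilde g$ with probability $\varphi/\pi$. For $i\neq j$ in $S$, I would bound the probability that both separate by something like $O(\varphi^2/\sin\theta_{ij})$, or more crudely $\lesssim\varphi^2/\theta$. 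This uses a computation in the 2D projection onto $\mathrm{span}(u_i,u_j)$: the angular displacement of the projected point is of order $\varphi$, and both lines must lie within the swept wedge. Inclusion–exclusion then gives
\[
\Pb(\text{exactly one crossing})\ \ge\ k\frac{\varphi}{\pi}-\binom{k}{2}\,C\frac{\varphi^2}{\theta},
\]
and with $\varphi\approx\theta/k$ this should be at least $\theta/2\pi$. \textbf{This pairwise bound is the main obstacle.} The angle between $g$ and $\tilde g$ is random, so the double-crossing probability must be computed honestly from the joint Gaussian law, and I expect to need explicit constants (possibly tuning $\varepsilon=c\theta/k$) to reach exactly $\theta/2\pi$. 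I would first try to reduce to the planar case and integrate over the angle of the displacement.

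\emph{Step 3 (combining).} On the event $E$ that exactly one active hyperplane separates the endpoints, Step 1 gives $\{F(g)\neq0\}\cup\{F(\tilde g)\neq0\}$ almost surely. Since $g$ and $\tilde g$ are equidistributed,
\[
\theta/2\pi\le\Pb(E)\le 2\,\Pb(F(g)\neq0),
\]
which yields \eqref{eq:single_subspace_escape}.
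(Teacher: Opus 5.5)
Your plan follows the paper's proof closely, and most of it is correct. The reduction to a hyperplane $a^\perp$ is fine. So is the transverse-line argument showing that $F$ cannot vanish identically on both chambers adjacent to an active facet; your polynomial-identity version is equivalent to the paper's analytic-continuation argument. You also do not need genericity there: a linear form with the same strict sign at both endpoints keeps that sign along the whole segment. The coupling $\tilde g=(g+(\theta/k)h)/\sqrt{1+(\theta/k)^2}$ and the equidistribution step $2\Pb(F(g)\neq0)\ge\Pb(N=1)$ are also correct.

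\textbf{The gap.} The one quantitative input, $\Pb(N=1)\ge\theta/2\pi$, is left unproved, and this is exactly where the constant $\theta/4\pi$ is decided. Suppose you take $\varepsilon=c\theta/k$ and prove $\Pb(D_i\cap D_j)\le C\varepsilon^2/\theta$. Then inclusion--exclusion gives at most about $c\theta/\pi-Cc^2\theta$, whose maximum over $c$ is $\theta/(4\pi^2C)$. Reaching $\theta/2\pi$ uniformly in $k$ therefore needs $C\le 1/(2\pi)$, so an unspecified ``$\lesssim\varphi^2/\theta$'' only yields $\Omega(\theta)$. Even the natural crude estimate falls short: bounding the density by $1/(2\pi\sin\theta_{ij})$, using $\E|\langle h,u_i\rangle||\langle h,u_j\rangle|\le 1$ and $\sin\theta_{ij}\ge 2\theta/\pi$ gives $C=1/4$. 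Also, for \emph{exactly} one crossing you must count ordered pairs, $\mathbf 1_{\{N=1\}}\ge N-N(N-1)$, i.e.\ $2\binom{k}{2}$ pair terms rather than $\binom{k}{2}$.

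\textbf{How the paper closes it} (\Cref{lem:single-crossing}). Because $\tilde g$ is a positive multiple of $g+\varepsilon h$, $D_i$ says that $\langle g,u_i\rangle$ lies strictly between $0$ and $-\varepsilon\langle h,u_i\rangle$. Conditionally on $z=\langle h,u_i\rangle$ and $w=\langle h,u_j\rangle$, the event $D_i\cap D_j$ is therefore that $(\langle g,u_i\rangle,\langle g,u_j\rangle)$ lies in a rectangle of area $\varepsilon^2|z||w|$.
\begin{itemize}
\item Bound the bivariate density by $1/(2\pi\sin\theta_{ij})$.
\item Use Nabeya's formula $\E|z||w|=\frac{2}{\pi}\bigl(\sin\theta_{ij}+(\frac{\pi}{2}-\theta_{ij})\cos\theta_{ij}\bigr)$. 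The factor $\sin\theta_{ij}$ cancels, and $\tan x\ge x$ gives $\Pb(D_i\cap D_j)\le \varepsilon^2/(2\pi\theta)=\theta/(2\pi k^2)$, which is exactly the threshold constant.
\item Combine this with $\Pb(D_i)=\varphi/\pi$, $\arctan x\ge x-x^3/3$ and $2\theta^2<3k$ for $k\ge2$ to get $\Pb(N=1)\ge\theta/2\pi$. The case $k=1$ is handled directly.
\end{itemize}

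\textbf{Your planar heuristic needs correcting.} Let $\bar g,\bar h$ be the projections of $g,h$ onto $\mathrm{span}(u_i,u_j)$. The swept angle is of order $\varepsilon|\bar h|/|\bar g|$, not $\varphi$. A sweep smaller than $\min(\theta_{ij},\pi-\theta_{ij})\ge\theta>\varphi$ could never cross both lines. So double crossings come entirely from the region where $\bar g$ is near the origin, which is what the rectangle-times-density computation captures.
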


\begin{proof}
Note first that since every proper subspace of $\R^m$ is contained in a hyperplane, it suffices to consider $V=\{x\in\R^m:\langle a,x\rangle=0\}$ for some $a\neq0$. Thus, the probability of interest becomes
\begin{align*}
    \Pb\left(\sum_{i=1}^m a_i\sigma(\langle u_i,g\rangle)\neq0\right).
\end{align*}
Moreover, assume without loss of generality that non-zero entries of $a\in\R^m$ are at indices $\{1,\ldots,k\}$ for some $1\le k\le m$ and define $F(g)=\sum_{i\in[k]}a_i\sigma(\langle u_i,g\rangle)$. If only one index is non-zero, the assumptions on $\sigma$ and $U$ directly imply a lower bound of $1/2$ on the probability, since at least one of $c_+,c_-$ is nonzero; hence, we assume $k\ge2$.
Let $\mathcal F=\{u_i^\perp:i\in[k]\}$ be the central arrangement of active hyperplanes. Observe that within every chamber of the arrangement the signs of $\langle u_i,g\rangle$, $i\in[k]$, do not change. Therefore, the function $F$ restricted to any fixed chamber $C$ is a linear combination of positive linear forms to the power \(q\) and in particular polynomial since $q$ is an integer. Since a nonzero polynomial has a zero set of Lebesgue measure zero, on each chamber either $F$ vanishes identically or its zero set has measure zero.

Now, notice that our assumptions on $F$ prohibit it from vanishing on any two adjacent chambers. Indeed, assume for a contradiction that it does and let $u_i^\perp$ denote the hyperplane containing their shared facet. Fix a point $g^*\in u_i^\perp$ in the relative interior of this facet and outside all other active hyperplanes, i.e., $\langle u_j,g^*\rangle\neq0$ for all $j\in[k]\setminus\{i\}$, and consider the function $F(g^*+tu_i)$.
For sufficiently small positive and negative $t$, the points $g^*+tu_i$ lie in the two respective chambers. No other active hyperplane is crossed, so the signs of $\langle u_j,g^*+tu_i\rangle$, $j\in[k]\setminus\{i\}$, stay the same. (See also \Cref{fig:smooth_line_segment_in_poly_fan} for an illustration.)
This implies that the sum $J(t)=\sum_{j\in[k]\setminus\{i\}}a_j\sigma(\langle g^*+tu_i,u_j\rangle)$ is again real analytic on a small segment $t\in(-\varepsilon,\varepsilon)$. Since $F(g^*+tu_i)=a_i\sigma(\langle g^*+tu_i,u_i\rangle)+J(t)=0$ for $0<|t|<\varepsilon$, we deduce that
\begin{align*}
    \sigma(t)=\sigma(\langle g^*+tu_i,u_i\rangle)=-\frac{J(t)}{a_i},
    \qquad {0<|t|<\varepsilon}.
\end{align*}
Since $J(t)$ is real analytic on $(-\varepsilon,\varepsilon)$, the two branches of $\sigma$ would admit a common real-analytic extension across zero. Their one-sided $q$-th derivatives, $q!c_+$ and $(-1)^q q!c_-$, would therefore agree, forcing $c_-=(-1)^q c_+$, contrary to \Cref{asmpt:activations}. For $q=0$, the same argument compares the one-sided values.

Knowing that $F$ cannot vanish on two adjacent chambers helps us translate the probability of escaping any fixed hyperplane into the probability that two correlated Gaussians fall into adjacent chambers. Indeed, let $h\sim\mathcal N(0,I_d)$ be another standard normal vector, independent of $g$, and set
\begin{align*}
        \varphi=\arctan(\theta/k),\qquad
    \Tilde{g}=\cos(\varphi)g+\sin(\varphi)h =\frac{g+(\theta/k)h}{\sqrt{1+(\theta/k)^2}}.
\end{align*}

Conditionally on \(g\), the vector \(\widetilde g\) is Gaussian with mean \(\cos(\varphi)\,g\) and covariance \(\sin^2(\varphi)\,I_d\). Thus the coupling explores directions around \(g\).
Let us denote by
\begin{align*}
        N=\left|\left\{i\in[k]: \sgn(\langle u_i,g\rangle)\neq \sgn(\langle u_i,\Tilde g\rangle)\right\}\right|
\end{align*}

the number of sign changes. If $N=1$, the segment joining $g$ and $\Tilde g$ crosses exactly one active hyperplane: every other active linear form keeps its strict sign along the segment. Hence the endpoints lie in adjacent chambers. There are finitely many chambers, and the zero sets in chambers where $F$ is not identically zero have Gaussian measure zero. Both endpoints avoid these sets and the active hyperplanes almost surely. Thus, on $N=1$, at least one of $F(g)$ and $F(\Tilde g)$ is nonzero almost surely. Since $g$ and $\Tilde g$ are both standard normal vectors, we can deduce
\begin{align*}
    2\Pb(F(g)\neq0)
    &=\Pb(F(g)\neq0)+\Pb(F(\Tilde g)\neq0)\\
    &\ge\Pb\bigl(\{F(g)\neq0\}\cup\{F(\Tilde g)\neq0\}\bigr)
    \ge\Pb(N=1).
\end{align*}
The remaining technical estimate, $\Pb(N=1)\ge\theta/2\pi$, is proved in \Cref{lem:single-crossing} in Appendix~\ref{app:omitted-rank}. The result then follows directly by
\begin{align*}
        \Pb\left(\sum_{i=1}^m a_i\sigma(\langle u_i,g\rangle)\neq0\right) \ge\frac12\Pb(N=1)\ge\frac{\theta}{4\pi}. \qquad \qedhere
\end{align*}

\end{proof}

\begin{figure}
    \centering
    \includegraphics[width=0.4\linewidth]{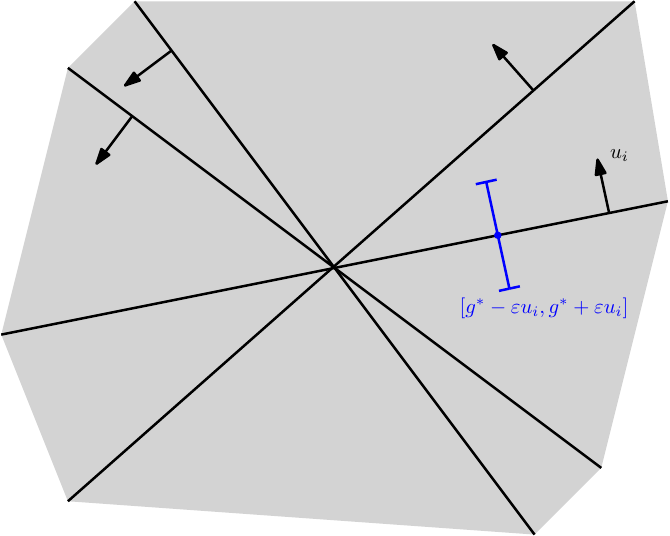}
    \caption{The hyperplane arrangement $\mathcal F$ generated by the vectors $u_1,\ldots,u_k$, the point $g^*$ in the relative interior of a shared facet, and the line segment $[g^*-\varepsilon u_i,g^*+\varepsilon u_i]$. 
    }
    \label{fig:smooth_line_segment_in_poly_fan}
\end{figure}

It is worth remarking that the proof above implicitly shows that the distribution of $\sigma(Ug)$ is not supported on any proper subspace $V\subsetneq\R^m$. Without excluding monomials in the assumptions on $\sigma$, this need not hold. Indeed, for $\sigma(t)=t$ and $\rank(U)<m$, he image of $g\mapsto Ug$ is contained in every hyperplane $\{x\in\R^m:\langle x,a\rangle=0\}$ with nonzero $a\in\ker(U^\top)$, meaning that a bound of the form \eqref{eq:single_subspace_escape} cannot exist in this setting.

We conclude with the proof of \Cref{thm:exact-rank-homogeneous} using an elementary observation that converts uniform subspace escape into
a full rank guarantee (see, e.g.,
\cite[Definition 26 and Lemma 27]{karhadkar2023mildly}).
\begin{proof}[Proof of \Cref{thm:exact-rank-homogeneous}]
By \Cref{lem:homogeneous-anti-concentration}, every proper subspace is escaped by
\(\xi=\sigma(Ug)\) with probability at least $p=\theta/4\pi$. We claim that $n$ independent copies of \(\xi\), denoted as \(\xi_1,\ldots,\xi_n\), satisfy
\begin{align}\label{eq:escape}
    \Pb\left(
        \Span\{\xi_1,\ldots,\xi_n\}\neq\mathbb R^m
    \right)
    \le
    \exp\left(-\frac{np}{8}\right),
\end{align}
provided \(np\ge2m\), which holds here since \(n\ge8\pi m/\theta\). Assuming \eqref{eq:escape} for a moment, the event $\{\rank(\sigma(UR))<m\}$ is equivalent to $\Span\{\xi_1,\ldots,\xi_n\}\neq\mathbb R^m$, yielding
\[
    \Pb\left(\rank(\sigma(UR))<m\right)
    \le
    \exp\left(-\frac{n\theta}{32\pi}\right).
\]
Requiring this bound to be at most \(\delta\) and using rank invariance under row normalization gives the main statement.

To prove \eqref{eq:escape}, let the vectors be revealed sequentially and define $V_j=\Span\{\xi_1,\ldots,\xi_j\}$, with $V_0=\{0\}$. We introduce the indicator $Y_j$ that the $j$-th copy is not spanned by previous vectors until full rank is reached: namely, \(Y_j=\mathbf 1_{\{\xi_j\notin V_{j-1}\}}\) if \(V_{j-1}\neq\mathbb R^m\), and \(Y_j=1\) once \(V_{j-1}=\mathbb R^m\). Conditional on the previously revealed vectors, and since $\Pb(\xi\notin V)\ge p$,
\[
    \Pb(Y_j=1\mid\xi_1,\ldots,\xi_{j-1})\ge p.
\]
Thus, $\E\left[2^{-Y_j}\,\middle|\,\xi_1,\ldots,\xi_{j-1}\right]\le1-p/2$, and iterating gives $\E\left[2^{-\sum_{j=1}^nY_j}\right]\le(1-p/2)^n$. Finally, since the event $\{V_n\neq\mathbb R^m\}$ is equivalent to $\{\sum_{j=1}^nY_j<m\}$, we have
\[
    \Pb\left(V_n\neq\mathbb R^m\right)
    =\Pb\left(\sum_{j=1}^nY_j<m\right)
    \le
    2^m\left(1-\frac p2\right)^n
    \le
    \exp\left(m\log2-\frac{np}{2}\right).
\]
Since \(np\ge2m\), we have \(m\log2\le3np/8\), which concludes the proof of \eqref{eq:escape}, and yields the theorem.
\end{proof}

\begin{remark}\label{rem:dragoetal_comparison}
It is interesting to compare this result with \cite{DragoBCMSSB26} a bit more closely. Their argument combines global chamber-volume estimates with the non-vanishing property across adjacent chambers to lower-bound the subspace escape probability for sign features. The resulting width requirement scales exponentially with the number of points and as $\sqrt d$ with the ambient dimension, with sharper results in dimensions two and three. In contrast, our approach sidesteps global chamber volume estimates entirely, using an isolated-facet crossing between Gaussian endpoints with the same marginal distribution. This yields a dimension-free bound linear in the number of points for fixed $\theta$. Within the positively homogeneous class considered here, the first part of \Cref{lem:homogeneous-anti-concentration} extends the local non-vanishing argument to every nonpolynomial activation, while monomials can fail to lift rank by \Cref{obs:monomial-obstruction} and are the only such obstruction.

\end{remark}

%% file: sections/stable_rank.tex
\section{Stable Rank Lifting}
\label{sec:homogeneous-stable-rank}

In this section, we establish stable rank lifting for every positively homogeneous nonpolynomial activation satisfying \Cref{asmpt:activations}. By positive homogeneity, bounds for \(X\) follow by diagonal rescaling from bounds for
the row-normalized matrix \(U\), as explained in \Cref{sec:preliminaries}. We therefore write
\[
    Z=\sigma(UR)
    \qquad\text{and}\qquad
    K=\E_{g\sim \mathcal N(0,I_d)}
    \bigl[\sigma(Ug)\sigma(Ug)^\top\bigr],
\]
and let \(\kappa_\sigma=\lambda_{\min}(K)\).
The proof follows two steps: we show
that, once the minimum eigenvalue \(\kappa_\sigma\) of the population kernel \(K\) is positive, the empirical
feature Gram matrix retains a constant fraction of it. Since a general
homogeneous activation may be unbounded, we truncate the rare feature columns
containing a large preactivation before applying matrix Chernoff. We then
lower bound \(\kappa_\sigma\) by expanding the scalar Gaussian correlation
kernel in an ordinary Taylor series and retaining a diagonally dominant
high-degree tail. These techniques are similar to those in \cite{PanigrahiShettyGoyal20, Song26}, and here they are also extended to positively homogeneous functions.

\begin{theorem}[Stable rank lifting]
\label{thm:homogeneous-stable-rank}
Let \(X\in\mathbb R^{m\times d}\) satisfy
\Cref{asmpt:separation}, let
\(R\in\mathbb R^{d\times n}\) have independent \(\mathcal N(0,I_d)\) columns, and let
\(\sigma\) satisfy \Cref{asmpt:activations}. There exists a universal constant \(C>0\) such that, for every \(\delta\in(0,1)\), if
\[
    n \ge C^{q+1} \cdot \frac{M^2}{(c_+-(-1)^qc_-)^2} \cdot \frac{m}{\theta^{2q+1}} \cdot \log^{2q+\frac{1}{2}}\left(\max\left(2, \frac{M^2}{(c_+-(-1)^qc_-)^2} \right) \frac{m}{\theta}\right) \log\left(\frac{m}{\delta}\right),
\]
then
\[
    \Pb\left(
        \lambda_{\min}(ZZ^\top)
        \ge
        \frac{n\kappa_\sigma}{4}
    \right)
    \ge
    1-\delta.
\]
\end{theorem}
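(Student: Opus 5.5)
The plan has two parts: a population lower bound on \(\kappa_\sigma\), followed by a concentration step for a truncated sum of rank-one matrices.

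\emph{Population bound.} By rotation invariance, the kernel entries are \(K_{ij}=k(\langle u_i,u_j\rangle)\), where \(k(\rho)=\E[\sigma(\xi)\sigma(\eta)]\) for standard Gaussians \(\xi,\eta\) with correlation \(\rho\). Expanding \(\sigma=\sum_\ell b_\ell h_\ell\) in normalized Hermite polynomials gives \(k(\rho)=\sum_\ell b_\ell^2\rho^\ell\), so that
\[
K=\sum_{\ell\ge0} b_\ell^2 (UU^\top)^{\circ\ell}.
\]
Every term is PSD by the Schur product theorem, hence \(K\succeq T_N:=\sum_{\ell\ge N}b_\ell^2(UU^\top)^{\circ\ell}\).

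Next I would compute \(b_\ell\) for \(\sigma=c_+t_+^q+c_-(-t)_+^q\). The standard formula for the Hermite coefficients of \(t_+^q\), together with the parity relation \(h_\ell(-t)=(-1)^\ell h_\ell(t)\), shows that for \(\ell>q\) with \(\ell\equiv q+1 \pmod 2\) one gets
\[
b_\ell^2 \asymp (c_+-(-1)^qc_-)^2\,C^{O(q)}\,\ell^{-q-3/2}.
\]
The other parity class vanishes for \(\ell>q\). Consequently the tail satisfies \(\tau_N:=\sum_{\ell\ge N}b_\ell^2\gtrsim (c_+-(-1)^qc_-)^2 c^q N^{-q-1/2}\).

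Now apply Gershgorin to \(T_N\). Its diagonal entries equal \(\tau_N\), and by \Cref{asmpt:separation} each off-diagonal entry is at most \(\rho^N\tau_N\) with \(\rho=\cos\theta\). Choosing \(N\asymp\log(2m)/\log\sec\theta\asymp\log(m)/\theta^2\) gives \((m-1)\rho^N\le 1/2\), and therefore
\[
\kappa_\sigma\ge\tau_N/2\gtrsim (c_+-(-1)^qc_-)^2 c^q\,(\theta^2/\log m)^{q+1/2}.
\]

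\emph{Truncation.} I would set \(B=\sqrt{2\log(nm/\delta')}\) and replace each column \(\xi_j=\sigma(Ug_j)\) by \(\tilde\xi_j=\xi_j\mathbf 1\{\max_i|\langle u_i,g_j\rangle|\le B\}\). Since \(\tilde\xi_j\tilde\xi_j^\top\preceq\xi_j\xi_j^\top\), we have \(ZZ^\top\succeq\sum_j\tilde\xi_j\tilde\xi_j^\top\), so no union bound on the truncation event is needed; only the bias matters. Writing \(\tilde K=\E\tilde\xi\tilde\xi^\top\), the bias is bounded by
\[
\|K-\tilde K\|\le\E\bigl[\|\xi\|^2\mathbf 1\{\max_i|\langle u_i,g\rangle|>B\}\bigr]\le M^2 m\,\sqrt{m}\,\E\bigl[|\xi_1|^{4q}\bigr]^{1/2}e^{-B^2/4}
\]
via Cauchy–Schwarz and a union bound. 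I would take \(B^2\asymp q\log q+\log\bigl(m M^2/\kappa_\sigma\bigr)\), so that this bias is at most \(\kappa_\sigma/2\) and hence \(\lambda_{\min}(\tilde K)\ge\kappa_\sigma/2\). In terms of the parameters, \(B^2\asymp \log\bigl(\tfrac{M^2}{(c_+-(-1)^qc_-)^2}\tfrac{m}{\theta}\bigr)\) up to factors of \(q\).

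\emph{Concentration.} Each truncated column satisfies \(\|\tilde\xi\|^2\le L:=mM^2B^{2q}\). Matrix Chernoff (Tropp) for the lower tail gives
\[
\Pb\Bigl(\lambda_{\min}\Bigl(\textstyle\sum_j\tilde\xi_j\tilde\xi_j^\top\Bigr)\le \tfrac12 n\lambda_{\min}(\tilde K)\Bigr)\le m\,e^{-n\lambda_{\min}(\tilde K)/(8L)}.
\]
It therefore suffices to have
\[
n\gtrsim \frac{L}{\kappa_\sigma}\log\frac m\delta=\frac{mM^2B^{2q}}{\kappa_\sigma}\log\frac m\delta .
\]
Plugging in the lower bound on \(\kappa_\sigma\) gives
\[
n\gtrsim C^{q}\,\frac{M^2}{(c_+-(-1)^qc_-)^2}\,\frac{m}{\theta^{2q+1}}\,(\log m)^{q+1/2}\,B^{2q}\,\log\frac m\delta,
\]
and \(B^{2q}\) contributes \(\log^{q}(\cdot)\), which matches the stated \(\log^{2q+1/2}\). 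On this event,
\[
\lambda_{\min}(ZZ^\top)\ge \frac{n}{2}\cdot\frac{\kappa_\sigma}{2}=\frac{n\kappa_\sigma}{4}.
\]

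\emph{Main obstacle.} The hardest step will be the explicit lower bound on the Hermite tail \(\tau_N\) with only \(C^q\) loss. This requires precise Gamma-function asymptotics for the coefficients of \(t_+^q\), together with tracking the factor \((c_+-(-1)^qc_-)^2\) through the parity cancellation. A secondary difficulty is making the truncation bias estimate consistent with the \(\log^{2q+1/2}\) exponent without circularity, since \(B\) depends on \(\kappa_\sigma\). I would resolve this by first proving the explicit lower bound on \(\kappa_\sigma\) and then choosing \(B\) in terms of that bound rather than \(\kappa_\sigma\) itself.
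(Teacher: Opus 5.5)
Your route is the paper's: nonnegative Hermite/Taylor coefficients, the Schur product theorem, and the tail $T_N$ with Gershgorin and $(m-1)\rho^N\le\frac12$ for $\kappa_\sigma$. Then you discard columns with a large preactivation, bound the bias by Cauchy--Schwarz, and apply matrix Chernoff with range $mM^2B^{2q}$. The gap is the $q$-bookkeeping at the end, which is exactly what the $C^{q+1}$ in the statement asserts.

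Write $\Delta_q=(c_+-(-1)^qc_-)^2$ and $\zeta\sim\mathcal N(0,1)$. Your bias condition forces $B^2\gtrsim\log\bigl(\E[|\zeta|^{4q}]^{1/2}/\kappa_\sigma\bigr)$. After inserting your lower bound on $\kappa_\sigma$, this contains $q\log q$ (from the moment) and $(2q+1)\log(1/\theta)$ and $(q+\frac12)\log\log(2m)$ (from $\kappa_\sigma$). So $B^2\asymp q\log(\cdot)$, as you concede with ``up to factors of $q$''. Hence $B^{2q}$ contributes $(Cq)^q\log^q(\cdot)$, not $\log^q(\cdot)$. Your tail estimate only gives $\tau_N\gtrsim\Delta_q c^qN^{-q-1/2}$, so nothing cancels this $q^q$. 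Your argument therefore proves the width condition only with an extra factor of order $q^q$ (indeed $(q\log q)^q$ with your $B$). That factor cannot be absorbed into $C^{q+1}$, because the logarithm in the statement can be as small as $\log 4$.

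The missing ingredient is that the coefficients are \emph{not} of size $\Delta_q C^{O(q)}\ell^{-q-3/2}$. For $\ell>q$ with $\ell\equiv q+1\pmod 2$, reflection and duplication give
\[
b_\ell^2=\frac{\Gamma(q+1)^2\Delta_q}{2^{q+2}\pi^{3/2}}\,\frac{\Gamma(\frac{\ell-q}{2})^2}{\Gamma(\frac{\ell+1}{2})\Gamma(\frac{\ell+2}{2})}\ \ge\ \frac{\Gamma(q+1)^2}{\sqrt2\,\pi^{3/2}}\,\frac{\Delta_q}{(\ell+2)^{q+3/2}}.
\]
Summing over this parity class gives $\tau_N\ge\frac{\Gamma(q+1)^2\Delta_q}{4\sqrt2\pi^{3/2}(q+\frac12)(q+5)^{q+\frac12}}N^{-q-\frac12}$. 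Its constant is of order $c^q q^{q-1/2}$, i.e. super-exponential. This reflects that $\E[\sigma(\zeta)^2]$ is itself of order $M^2(2q/e)^q$, so the tail and the truncation range live on the same $q^q$ scale.

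The paper keeps this factor and uses it to cancel the $q^q$ from the truncation level. It pits $(q+5)^{q+\frac12}/\Gamma(q+1)^2\lesssim(6e^2)^q q^{-q-\frac12}$ against $\log^q(A)\le(16q)^q\log^q(\cdot)$, leaving only $\sqrt q\le 2^q$. So the obstacle you name is mis-specified: a tail bound ``with only $C^q$ loss'' is not enough, and you need one that retains the $\Gamma(q+1)^2$ gain.

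Separately, discard your first choice $B=\sqrt{2\log(nm/\delta')}$. It puts $n$ into the threshold, and it is unnecessary because only the bias-driven choice matters.
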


We begin by stating the concentration step, assuming for the moment that the
minimum eigenvalue \(\kappa_\sigma\) of the population kernel \(K\) is known.

\begin{restatable}[Concentration of the empirical kernel]{proposition}{prophomogeneousconcentration}
\label{prop:homogeneous-concentration}
Let \(\kappa_\sigma=\lambda_{\min}(K)>0\). For every
\(\delta\in(0,1)\),
\[
\begin{aligned}
    n
    \ge{}
    \frac{4^{q+1}M^2}{1-\log(2)}
    \frac{m}{\kappa_\sigma}\cdot
    \log^q\left(
        \frac{
            2^{q+\frac32}M^2m^2
        }{
            \kappa_\sigma
        }
        \left(
            \frac{\Gamma(2q+\frac12)}{\sqrt{\pi}}
        \right)^{1/2}
    \right)
    \log\left(\frac{m}{\delta}\right)
\end{aligned}
\]
implies
\[
    \Pb\left(
        \lambda_{\min}(ZZ^\top)
        \ge
        \frac{n\kappa_\sigma}{4}
    \right)
    \ge
    1-\delta.
\]
\end{restatable}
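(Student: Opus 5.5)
We truncate the feature columns and apply matrix Chernoff to the bounded part. Write \(\xi_j=\sigma(Ug_j)\), so \(ZZ^\top=\sum_{j=1}^n\xi_j\xi_j^\top\). Fix a threshold \(\tau>0\) and consider the event \(E_j=\{\max_{i\in[m]}|\langle u_i,g_j\rangle|\le\tau\}\). Define the truncated columns \(\widetilde\xi_j=\xi_j\mathbf 1_{E_j}\). Since \(\xi_j\xi_j^\top\succeq\widetilde\xi_j\widetilde\xi_j^\top\), we get \(ZZ^\top\succeq\sum_j\widetilde\xi_j\widetilde\xi_j^\top\). It therefore suffices to lower bound the minimum eigenvalue of the truncated sum. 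On \(E_j\), each entry satisfies \(|\sigma(\langle u_i,g_j\rangle)|\le M\tau^q\), so \(\|\widetilde\xi_j\|_2^2\le mM^2\tau^{2q}=:L\), a deterministic bound.

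The first step controls the truncation bias. Let \(\widetilde K=\E[\widetilde\xi\widetilde\xi^\top]\). Then \(K-\widetilde K=\E[\xi\xi^\top\mathbf 1_{E^c}]\succeq0\), and
\[
\|K-\widetilde K\|\le\E\bigl[\|\xi\|_2^2\mathbf 1_{E^c}\bigr]\le\bigl(\E\|\xi\|_2^4\bigr)^{1/2}\Pb(E^c)^{1/2}.
\]
We bound the fourth moment by \(\E\|\xi\|_2^4\le m\sum_i\E\sigma(\langle u_i,g\rangle)^4\le m^2M^4\E|\gamma|^{4q}\), where \(\E|\gamma|^{4q}=2^{2q}\Gamma(2q+\tfrac12)/\sqrt\pi\). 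This is where the \(\Gamma(2q+\frac12)\) factor in the statement comes from. A union bound with the Gaussian tail gives \(\Pb(E^c)\le 2m e^{-\tau^2/2}\). We then choose \(\tau^2\) of order \(4\log\bigl(2^{q+3/2}M^2m^2\kappa_\sigma^{-1}(\Gamma(2q+\frac12)/\sqrt\pi)^{1/2}\bigr)\), which makes the bias at most \(\kappa_\sigma/2\). Consequently \(\lambda_{\min}(\widetilde K)\ge\kappa_\sigma/2\). With this choice, \(\tau^{2q}\) is \(2^{2q}\) times the \(q\)-th power of the logarithm appearing in the statement, up to the precise constants. That accounts for the \(4^{q}\log^q(\cdot)\) factor.

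The second step is matrix Chernoff, in the lower-tail form of Tropp. It applies to the independent PSD summands \(\widetilde\xi_j\widetilde\xi_j^\top\), which have norm at most \(L\) and satisfy \(\mu_{\min}=n\lambda_{\min}(\widetilde K)\ge n\kappa_\sigma/2\). Taking deviation \(1/2\), it gives
\[
\Pb\Bigl(\lambda_{\min}\Bigl(\sum_j\widetilde\xi_j\widetilde\xi_j^\top\Bigr)\le\tfrac12\mu_{\min}\Bigr)\le m\Bigl(\tfrac{e^{-1/2}}{(1/2)^{1/2}}\Bigr)^{\mu_{\min}/L}=m\exp\Bigl(-\tfrac{1-\log 2}{2}\cdot\tfrac{\mu_{\min}}{L}\Bigr).
\]
On the complementary event, \(\lambda_{\min}(ZZ^\top)\ge n\kappa_\sigma/4\). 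The failure probability is at most \(\delta\) once \(n\kappa_\sigma(1-\log2)/(4mM^2\tau^{2q})\ge\log(m/\delta)\). This is exactly the stated condition, with constant \(4^{q+1}M^2/(1-\log 2)\), after substituting \(\tau^{2q}\).

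The main obstacle is the bookkeeping in the first step. The exponent of \(\tau\) must match so that the final bound carries \(\log^q\) rather than a worse power, and the constants inside the logarithm must line up with \(2^{q+3/2}\) and \(\Gamma(2q+\frac12)\). If the fourth-moment route gives slightly different constants, I would instead use the exact identity \(\E[\sigma(\gamma)^2\mathbf 1\{|\gamma|>\tau\}]\) per coordinate, bounded via Cauchy–Schwarz coordinatewise. The structure of the argument is unaffected either way.
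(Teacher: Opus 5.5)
Your proposal is correct and matches the paper's proof step for step: the same truncation on the largest preactivation, PSD domination, Weyl perturbation of the truncated kernel, and Tropp's lower-tail matrix Chernoff with deviation $1/2$. The only variation is that you apply Cauchy--Schwarz once to $\E[\|\xi\|_2^2\mathbf 1_{E^c}]$, whereas the paper first union-bounds $\mathbf 1_{E^c}$ and then applies it to each of the $m^2$ pairs. Your version gives bias at most $\kappa_\sigma/(2\sqrt m)$ at exactly the threshold $\tau^2=4\log(\cdot)$ in the statement, so no fallback is needed.

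You should also check that this logarithm's argument exceeds $1$, so that $\tau$ is well defined. This follows from $\kappa_\sigma\le K_{11}=\E[\sigma(\gamma)^2]\le M^2(\E|\gamma|^{4q})^{1/2}$, which makes the argument at least $2\sqrt2\,m^2$.
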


We defer the above proposition's proof to Appendix \ref{app:omitted-rank}. Write \(z_j=\sigma(Ug_j)\). The high-level idea consists of truncating the random matrices $z_j z_j^\top$ composing $ZZ^\top=\sum_{j=1}^n z_jz_j^\top$ to bound their spectral norm, choosing the truncation threshold such that the expected truncated kernel retains a minimum eigenvalue of at least $\kappa_\sigma/2$. We then apply the matrix Chernoff bound to the sum of these independent, bounded positive semidefinite matrices to establish the high-probability lower bound on $\lambda_{\min}(ZZ^\top)$.

It remains to lower bound the population quantity
\(\kappa_\sigma=\lambda_{\min}(K)\). The key
is an ordinary Taylor expansion of the scalar Gaussian correlation kernel, followed by a Taylor-tail truncation and Gershgorin argument, which we recall next.

\begin{lemma}[Gershgorin circle theorem]\label{lem:gershgorin}
    Let \(V=(v_{ij})_{i,j=1}^m\in\R^{m\times m}\). Then every eigenvalue
    \(\lambda\in\mathbb C\) of \(V\) belongs to at least one of the discs
    \[
        \left\{z\in\mathbb C: |z-v_{ii}|\le \sum_{j\neq i}|v_{ij}|\right\},
        \qquad i=1,\ldots,m.
    \]
    In particular, if \(V\) is real symmetric, then every eigenvalue is real and lies
    in one of the corresponding real intervals.
\end{lemma}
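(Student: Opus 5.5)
The plan is the classical proof of the Gershgorin circle theorem, carried out directly from an eigenvector. Let \(\lambda\in\mathbb C\) be an eigenvalue of \(V\) and \(x\in\mathbb C^m\setminus\{0\}\) a corresponding eigenvector, so \(Vx=\lambda x\). Choose an index \(i\) with \(|x_i|=\max_{j}|x_j|\); then \(|x_i|>0\) because \(x\neq0\). This choice of index is essentially the only idea in the proof. The remaining steps are routine, and the only point needing care is dividing by \(|x_i|\), which is justified precisely by the choice of a maximal coordinate.

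Reading off the \(i\)-th coordinate of \(Vx=\lambda x\) gives \(\sum_{j}v_{ij}x_j=\lambda x_i\), which rearranges to
\[
    (\lambda-v_{ii})\,x_i=\sum_{j\neq i}v_{ij}x_j .
\]
Taking absolute values, applying the triangle inequality, and using \(|x_j|\le|x_i|\) yields
\[
    |\lambda-v_{ii}|\,|x_i|\le\sum_{j\neq i}|v_{ij}|\,|x_j|\le |x_i|\sum_{j\neq i}|v_{ij}|.
\]
Dividing by \(|x_i|>0\) shows that \(\lambda\) lies in the \(i\)-th disc, which proves the first claim.

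For the symmetric case, I will recall that a real symmetric matrix has only real eigenvalues. To see this, take \(Vx=\lambda x\) with \(x\neq 0\). Then \(\bar x^\top V x=\lambda\|x\|_2^2\), and this quantity equals its own complex conjugate because \(V\) is real and symmetric, so \(\lambda\in\mathbb R\). Intersecting the disc with the real line then gives the interval \([v_{ii}-\sum_{j\neq i}|v_{ij}|,\,v_{ii}+\sum_{j\neq i}|v_{ij}|]\).

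This last form is the one used downstream. A symmetric matrix \(T\) with \(T_{ii}\ge D\) and \(\sum_{j\neq i}|T_{ij}|\le D/2\) for all \(i\) satisfies \(\lambda_{\min}(T)\ge D/2\). In the paper this will be applied to the Taylor tail \(T_N\).
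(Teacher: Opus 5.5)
Your proof is correct and complete. The paper gives no proof of this lemma; it only recalls it as the classical Gershgorin theorem, and your argument (bounding via the maximal coordinate of an eigenvector, then using that a real symmetric matrix has real spectrum) is the standard proof. It also yields exactly the interval form used in the proof of \Cref{prop:homogeneous-min-eigenvalue}, where every diagonal entry of \(T_N\) equals \(\sum_{k\ge N}a_k\) and each off-diagonal row sum is at most \((m-1)\rho^N\sum_{k\ge N}a_k\).
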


Before proceeding further, let us point out a useful rewrite of the population kernel via a Taylor series. For brevity, we set
\[
    \Delta_q
    \coloneqq
    (c_++c_-)^2\sin^2\left(\frac{\pi q}{2}\right)
    +
    (c_+-c_-)^2\cos^2\left(\frac{\pi q}{2}\right) > 0
\]
where the strict positivity of \(\Delta_q\) holds by \Cref{asmpt:activations}. Since \(q\) is an integer, \(\Delta_q=(c_+-(-1)^qc_-)^2\). The Taylor representation follows from the Gaussian Hermite expansion \cite{DanielyFrostigSinger16,HolzmullerScholpple26}; the coefficients below are the squares of the coefficients of \(\sigma\) in the orthonormal Gaussian Hermite basis. The proof of the explicit coefficient formula and tail estimate is deferred to Appendix \ref{app:omitted-rank}.

\begin{restatable}[Taylor expansion of the population kernel]{lemma}{lemhomogeneoustaylor}
\label{lem:homogeneous-taylor}
Let \((\zeta_1,\zeta_2)\) be a centered Gaussian pair with unit variances and correlation \(t\in[-1,1]\), and define
\(\psi_\sigma(t)=\E[\sigma(\zeta_1)\sigma(\zeta_2)]\). Then
\[
    \psi_\sigma(t)
    =
    \sum_{k=0}^\infty a_kt^k,
    \qquad
    a_k
    =
    \frac{
        \bigl(c_++(-1)^kc_-\bigr)^2
        2^{k-q-2}\Gamma(q+1)^2
    }{
        k!\,
        \Gamma\left(\frac{q-k+2}{2}\right)^2
    }
    \ge0.
\]
At a pole of the Gamma function in the denominator, the corresponding coefficient is interpreted as zero.
Moreover, for every integer \(N\ge1\),
\[
    \sum_{k=N}^\infty a_k
    \ge
    \frac{
        \Gamma(q+1)^2\Delta_q
    }{
        4\sqrt{2}\pi^{3/2}
        (q+\frac12)(q+5)^{q+\frac12}
    }
    \frac{1}{N^{q+\frac12}}.
\]
\end{restatable}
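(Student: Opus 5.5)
Our plan is to obtain the expansion from the Hermite decomposition of $\sigma$. Write $\sigma=\sum_{k\ge0}\beta_k h_k$ in the orthonormal Hermite basis $h_k=He_k/\sqrt{k!}$ of $L^2(\mathcal N(0,1))$; this is legitimate since $|\sigma(t)|\le M|t|^q$. Mehler's formula gives $\E[h_k(\zeta_1)h_l(\zeta_2)]=\delta_{kl}t^k$, so $\psi_\sigma(t)=\sum_k\beta_k^2t^k$. Nonnegativity of $a_k=\beta_k^2$ is then immediate, and the series converges absolutely on $[-1,1]$ because $\sum\beta_k^2=\E[\sigma(\zeta)^2]<\infty$.

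We then compute $\beta_k$ explicitly. Split $\sigma=c_+t_+^q+c_-(-t)_+^q$. Since $He_k(-t)=(-1)^kHe_k(t)$, we get $\beta_k=(c_++(-1)^kc_-)\,I_k/\sqrt{k!}$ with $I_k=\E[\zeta_+^qHe_k(\zeta)]$. The Rodrigues formula $He_k(x)\phi(x)=(-1)^k\phi^{(k)}(x)$ evaluates $I_k$. For $k\le q$, integrate by parts $k$ times: the boundary terms vanish, leaving $\frac{q!}{(q-k)!}\int_0^\infty x^{q-k}\phi(x)\,dx=\frac{q!}{(q-k)!}\,\frac{2^{(q-k)/2}\Gamma(\frac{q-k+1}{2})}{2\sqrt{\pi}}$. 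By the duplication formula this equals $\frac{q!\,2^{(k-q-2)/2}}{\Gamma(\frac{q-k+2}{2})}$. For $k>q$, integrate by parts $q$ times and then use $\int_0^\infty\phi^{(k-q)}=-\phi^{(k-q-1)}(0)$. This is proportional to $He_{k-q-1}(0)$, which vanishes when $k-q$ is even, exactly at the poles of $\Gamma(\frac{q-k+2}{2})$. When $k-q$ is odd, the same formula follows by analytic continuation, or by a direct check using $He_{2j}(0)=(-1)^j(2j-1)!!$. Squaring gives the stated $a_k$, and at the poles the coefficient is zero as claimed.

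For the tail bound, we keep only the indices $k\ge N$ with $k\equiv q\pmod 2$ whose parity makes $c_++(-1)^kc_-$ nonzero. The nonvanishing parity class is fixed by $\Delta_q=(c_+-(-1)^qc_-)^2$: we need $(-1)^k=-(-1)^q$, so $k-q$ is odd, and then $\Gamma(\frac{q-k+2}{2})$ is a half-integer Gamma value with no pole. Write $k=q+2j+1$. Then $\Gamma(\frac{1-2j}{2})^2=\pi\,4^j/\bigl((2j)!/j!\bigr)^2\cdot$(explicit), and the reflection formula gives $1/\Gamma(\frac12-j)^2=\Gamma(j+\frac12)^2/\pi^2$. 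This yields
\[
a_{q+2j+1}=\Delta_q\frac{2^{2j-1}\Gamma(q+1)^2\Gamma(j+\frac12)^2}{\pi^2(q+2j+1)!}.
\]
Using $\Gamma(j+\frac12)=\sqrt{\pi}(2j)!/(4^jj!)$ and Stirling-type bounds, we will show $a_{q+2j+1}\gtrsim\Gamma(q+1)^2\Delta_q\,\pi^{-3/2}(j+c)^{-(q+3/2)}$, with all constants tracked explicitly (for example, through Gautschi/Wendel inequalities for $\Gamma(j+\frac12)/\Gamma(j+1)$ and for $(q+2j+1)!$ against $(2j)!$). Summing over $j\ge j_0\approx N/2$ and comparing with an integral gives a tail of order $N^{-(q+\frac12)}/(q+\frac12)$. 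The factor $(q+5)^{q+1/2}$ comes from bounding $(j+c_q)$ by $(q+5)j$ or by $(q+5)N$ uniformly for all $N\ge1$.

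The main obstacle is this last step: keeping the constants uniform in $q$ while bounding the ratio $\Gamma(j+\frac12)^2 4^j/(q+2j+1)!$ from below for every $j\ge0$. The small-$j$ and large-$q$ regime, where $q+2j+1$ is dominated by $q$, is what forces the $(q+5)^{q+\frac12}$ loss. We would first try the crude bound $(q+2j+1)!\le(2j)!\,(q+2j+1)^{q+1}$ together with $q+2j+1\le(q+5)(j+1)$.
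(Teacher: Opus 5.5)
Your derivation of the coefficients is correct and is essentially the paper's route with some streamlining. You use the Hermite/Mehler expansion directly, whereas the paper first reads off the Taylor coefficients from $\partial_t p_t=\partial_x\partial_y p_t$ and then appeals to the same expansion for convergence. You evaluate the case $k>q$ directly through $\phi^{(k-q-1)}(0)$ and $He_{2j}(0)$, where the paper uses analytic continuation in $q$. You also use integrality of $q$ to see that only the class $k-q$ odd survives, with numerator exactly $\Delta_q$; the paper's parity argument keeps only $\Delta_q/2$. Two small repairs are needed. First, ``$k\equiv q\pmod 2$'' should read $k\not\equiv q\pmod 2$, as your next sentence says. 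Second, passing from Mehler's formula to $\psi_\sigma(t)=\sum_k\beta_k^2t^k$ uses that $(f,g)\mapsto\E[f(\zeta_1)g(\zeta_2)]$ is a bounded bilinear form on $L^2(\mathcal N(0,1))$, by Cauchy--Schwarz.

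The real gap is that the explicit tail constant, which is half of the statement, is only announced. Moreover, the crude estimate you plan to try first loses exactly what the constant cannot spare. Inserting $q+2j+1\le(q+5)(j+1)$ into all $q+1$ factors before summing leaves $(q+5)^{q+1}$ in the denominator instead of $(q+5)^{q+1/2}$. For instance, at $N=1$, combined with Gautschi's bound and the comparison $\sum_{j\ge0}(j+1)^{-q-3/2}\ge 1/(q+\frac12)$, this misses the claim by the factor $2\sqrt2/\sqrt{q+5}$. So it fails for $q\ge4$ unless you recover the $\sqrt{q+5}$ through a separate case split.

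The clean fix, which is what the paper does, is to bound each term by a function of $k+2$ and use $(q+5)N$ only once, after summing. Using $\Gamma(j+\frac12)=\sqrt\pi(2j)!/(4^jj!)$ once, your formula becomes
\[
a_{q+2j+1}=\frac{\Delta_q\Gamma(q+1)^2}{2\pi^{3/2}}\cdot\frac{\Gamma(j+\frac12)}{\Gamma(j+1)}\cdot\frac{(2j)!}{(q+2j+1)!}.
\]
Set $k=q+2j+1$. Gautschi's inequality (or a direct check at $j=0$) gives $\Gamma(j+\frac12)/\Gamma(j+1)\ge(j+1)^{-1/2}=\sqrt2\,(2j+2)^{-1/2}\ge\sqrt2\,(k+2)^{-1/2}$. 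Also $(2j)!/(q+2j+1)!\ge k^{-(q+1)}$. Hence $a_k\ge\frac{\Delta_q\Gamma(q+1)^2}{\sqrt2\,\pi^{3/2}}(k+2)^{-q-3/2}$ for every $k>q$ with $k-q$ odd.

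Let $k_0$ be the least such $k$ with $k\ge N$. Then $k_0\le\max\{N,q+1\}+1$, so $k_0+2\le N+q+4\le(q+5)N$. Comparing $\sum_{r\ge0}(k_0+2r+2)^{-q-3/2}$ with $\frac12\int_{k_0+2}^\infty x^{-q-3/2}\,dx$ gives
\[
\sum_{k\ge N}a_k\ge\frac{\Delta_q\Gamma(q+1)^2}{\sqrt2\,\pi^{3/2}}\cdot\frac{(k_0+2)^{-q-\frac12}}{2(q+\frac12)}\ge\frac{\Gamma(q+1)^2\Delta_q}{2\sqrt2\,\pi^{3/2}(q+\frac12)(q+5)^{q+\frac12}}\,N^{-q-\frac12},
\]
which is twice the stated bound.
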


We finally lower bound the minimum eigenvalue of the population kernel.

\begin{proposition}[Lower bound on population kernel minimum eigenvalue]
\label{prop:homogeneous-min-eigenvalue}
Under \Cref{asmpt:separation} and \Cref{asmpt:activations},
\[
    \kappa_\sigma
    \ge
    \frac{
        \Gamma(q+1)^2\Delta_q
    }{
        2^{2q+\frac92}\pi^{3/2}
        (q+\frac12)(q+5)^{q+\frac12}
    }
    \frac{\theta^{2q+1}}
         {\log^{q+\frac12}(2m)}.
\]
\end{proposition}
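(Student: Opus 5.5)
Since $U$ has unit rows, the population kernel is $K_{ij}=\psi_\sigma(\langle u_i,u_j\rangle)$. By \Cref{lem:homogeneous-taylor},
\[
    K=\sum_{k=0}^\infty a_k (UU^\top)^{\circ k}.
\]
Here each Hadamard power $(UU^\top)^{\circ k}$ is positive semidefinite by the Schur product theorem, and every $a_k\ge0$. Dropping the terms of degree below $N$ therefore gives $K\succeq T_N:=\sum_{k\ge N}a_k(UU^\top)^{\circ k}$, and it suffices to lower bound $\lambda_{\min}(T_N)$ for a well-chosen integer $N\ge1$.

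We apply Gershgorin (\Cref{lem:gershgorin}) to the symmetric matrix $T_N$.
\begin{itemize}
\item Each diagonal entry equals $S_N:=\sum_{k\ge N}a_k$, because $\langle u_i,u_i\rangle=1$.
\item Each off-diagonal entry satisfies $|(T_N)_{ij}|\le \sum_{k\ge N}a_k\rho_{ij}^k\le \rho^N S_N$, where $\rho=\cos\theta$, using \Cref{asmpt:separation}.
\end{itemize}
Hence $\lambda_{\min}(T_N)\ge S_N(1-(m-1)\rho^N)$. If $(m-1)\rho^N\le 1/2$, this gives $\kappa_\sigma\ge S_N/2$.

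For $\theta<\pi/2$, choose $N=\lceil \log(2m)/\log(\sec\theta)\rceil$. Then $\rho^N\le 1/(2m)$, and the diagonal-dominance condition holds. Substituting the tail bound from \Cref{lem:homogeneous-taylor} gives
\[
    \kappa_\sigma\ge \frac{\Gamma(q+1)^2\Delta_q}{8\sqrt2\pi^{3/2}(q+\frac12)(q+5)^{q+\frac12}}\,N^{-(q+\frac12)}.
\]
For $\theta=\pi/2$ we have $\rho=0$, so $N=1$ works and the bound holds trivially.

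It remains to convert $N$ into $\theta$. We use $\log\sec\theta\ge \theta^2/2$ on $(0,\pi/2)$, which follows from $-\log\cos\theta\ge 1-\cos\theta\ge 2\theta^2/\pi^2$; the precise constant will be tracked carefully. Together with the ceiling, this gives $N\le 2\log(2m)/\theta^2+1\le c\log(2m)/\theta^2$, using $\theta\le\pi/2$ and $\log(2m)\ge\log 2$. Consequently $N^{-(q+1/2)}\ge c^{-(q+1/2)}\theta^{2q+1}/\log^{q+1/2}(2m)$. Absorbing $c^{q+1/2}$ together with the $8\sqrt2$ into $2^{2q+9/2}$ requires $c\le 4$, possibly with slack, since $8\sqrt2=2^{7/2}$ and $4^{q+1/2}=2^{2q+1}$ multiply to exactly $2^{2q+9/2}$.

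The main obstacle is this bookkeeping: getting $N\le 4\log(2m)/\theta^2$ including the ceiling. If $\log\sec\theta\ge\theta^2/2$ holds, then $N\le 2\log(2m)/\theta^2+1\le 4\log(2m)/\theta^2$, because $\theta^2\le\pi^2/4<2\log(2m)\cdot$ a suitable factor. The weaker inequality $\log\sec\theta\ge 2\theta^2/\pi^2$ only gives $\pi^2\log(2m)/(2\theta^2)+1$, so I would use the sharper $\log\sec\theta\ge\theta^2/2$. That bound holds because $\frac{d}{d\theta}\log\sec\theta=\tan\theta\ge\theta$. Everything else is routine.
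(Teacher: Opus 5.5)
Your proposal is correct and follows essentially the paper's own argument: the Schur product theorem makes the Taylor tail $T_N$ a PSD lower bound for $K$; Gershgorin under $(m-1)\rho^N\le 1/2$ gives $\kappa_\sigma\ge S_N/2$; the tail bound of \Cref{lem:homogeneous-taylor} follows; and $\log\sec\theta\ge\theta^2/2$ converts $N$ into $\theta$. The paper's choice $N=\lceil\log(2(m-1))/\log(1/\rho)\rceil$ differs only immaterially from yours. One bookkeeping slip: the step $2\log(2m)/\theta^2+1\le 4\log(2m)/\theta^2$ requires $\theta^2\le 2\log(2m)$, which holds for $m\ge 2$ but can fail for $m=1$ with $\theta$ near $\pi/2$. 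Either treat $m=1$ directly with $N=1$ (there are no off-diagonal entries), as the paper does, or use $\lceil x\rceil\le\max\{1,2x\}$ instead of $\lceil x\rceil\le x+1$; together with $\theta^2\le\pi^2/4<4\log 2$, this gives $N\le 4\log(2m)/\theta^2$ in all cases.
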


\begin{proof}
Let \(G=UU^\top\), and so \(G\succeq0\), \(G_{ii}=1\), and
\(|G_{ij}|\le\rho=\cos\theta\) for every \(i\neq j\) by projective
separation. For every \(i,j\in[m]\), the pair
\((\langle u_i,g\rangle,\langle u_j,g\rangle)\) is centered Gaussian with
unit variances and correlation \(G_{ij}=\langle u_i,u_j\rangle\). Hence, by
the definition of \(\psi_\sigma\), \(K_{ij}=\psi_\sigma(G_{ij})\). Since
\Cref{lem:homogeneous-taylor} gives
\(\psi_\sigma(t)=\sum_{k=0}^\infty a_kt^k\) for \(t\in[-1,1]\), we obtain
entrywise
\[
    K
    =
    \sum_{k=0}^\infty a_kG^{\circ k},
\]
where \(G^{\circ k}\) denotes the entrywise \(k\)-th power and
\(G^{\circ0}=\mathbf 1\mathbf 1^\top\). Since \(G\succeq0\), the Schur
product theorem gives \(G^{\circ k}\succeq0\) for every \(k\ge0\);
also \(a_k\ge0\), so every term in the expansion is positive semidefinite. Let us now fix an integer \(N\ge1\), and define the Taylor tail
\[
    T_N
    =
    \sum_{k=N}^\infty a_kG^{\circ k}.
\]
Clearly, all omitted terms are positive semidefinite, so \(K\succeq T_N\), and,
because \(G_{ii}=1\), every diagonal entry of \(T_N\) equals
\(\sum_{k=N}^\infty a_k\). On the other hand, for \(i\neq j\), projective separation gives $|(T_N)_{ij}| \le \rho^N \sum_{k=N}^\infty a_k$.
By \Cref{lem:gershgorin}, the \(i\)-th Gershgorin radius is at most
\((m-1)\rho^N\sum_{k=N}^\infty a_k\), and therefore
\[
    \lambda_{\min}(K)
    \ge
    \lambda_{\min}(T_N)
    \ge
    \left(
        1-(m-1)\rho^N
    \right)
    \sum_{k=N}^\infty a_k.
\]
Assume first that \(m\ge2\) and \(0<\theta<\pi/2\), and choose $N = \left\lceil\log(2(m-1))/\log(1/\rho)\right\rceil$ so that \((m-1)\rho^N\le1/2\). By
\Cref{lem:homogeneous-taylor},
\[
    \sum_{k=N}^\infty a_k
    \ge
    \frac{
        \Gamma(q+1)^2\Delta_q
    }{
        4\sqrt{2}\pi^{3/2}
        (q+\frac12)(q+5)^{q+\frac12}
    }
    \frac{1}{N^{q+\frac12}}.
\]
It then follows that
\[
\begin{aligned}
    \kappa_\sigma
    &\ge
    \frac{
        \Gamma(q+1)^2\Delta_q
    }{
        8\sqrt{2}\pi^{3/2}
        (q+\frac12)(q+5)^{q+\frac12}
    }
    \frac{1}{N^{q+\frac12}}\\
    &\ge
    \frac{
        \Gamma(q+1)^2\Delta_q
    }{
        2^{q+4}\pi^{3/2}
        (q+\frac12)(q+5)^{q+\frac12}
    }
    \min\left\{
        1,\,
        \left(
            \frac{\log(1/\rho)}{\log(2m)}
        \right)^{q+\frac12}
    \right\}\\
    &\ge
    \frac{
        \Gamma(q+1)^2\Delta_q
    }{
        2^{2q+\frac92}\pi^{3/2}
        (q+\frac12)(q+5)^{q+\frac12}
    }
    \frac{\theta^{2q+1}}
         {\log^{q+\frac12}(2m)}.
\end{aligned}
\]
The first inequality combines the preceding two bounds, and the second uses
\[
    N\le1+\frac{\log(2m)}{\log(1/\rho)}
    \le2\max\left\{1,\frac{\log(2m)}{\log(1/\rho)}\right\}.
\]
For the third inequality, we have \(\log(1/\rho)=\log(\sec\theta)\ge\theta^2/2\), and, for \(m\ge2, \theta\le\pi/2\), \(\theta^2/(2\log(2m))<1\). The cases \(m=1\) or \(\theta=\pi/2\) follow by taking \(N=1\): the Gershgorin radius is zero, and the tail bound implies the stated inequality because \(\theta^2<4\log(2m)\).
\end{proof}

With the above, we are now able to finish the proof of
\Cref{thm:homogeneous-stable-rank}:

\begin{proof}[Proof of \Cref{thm:homogeneous-stable-rank}]
We perform various algebraic manipulations deferred to \Cref{lem:algebra-width-stable} in Appendix \ref{app:omitted-rank}. We first substitute the lower bound
\[
    \kappa_\sigma
    \ge
    \frac{
        \Gamma(q+1)^2\Delta_q
    }{
        2^{2q+\frac92}\pi^{3/2}
        (q+\frac12)(q+5)^{q+\frac12}
    }
    \frac{\theta^{2q+1}}
         {\log^{q+\frac12}(2m)}
\]
from \Cref{prop:homogeneous-min-eigenvalue} into the sufficient condition of
\Cref{prop:homogeneous-concentration}. To use \Cref{lem:algebra-width-stable} when \(m=1\) or \(\theta>1\), we may replace \(m\) by \(\max\{m,2\}\) and \(\theta\) by \(\min\{\theta,1\}\) in the factor multiplying \(\log(m/\delta)\), leaving this confidence factor unchanged. These replacements only change the universal constants below.The resulting condition is implied by
\begin{align*}
    n \ge C'C^q \cdot \frac{M^2}{(c_+-(-1)^qc_-)^2} \cdot \frac{m}{\theta^{2q+1}} \cdot \log^{2q+\frac{1}{2}}\left(\max\left(2, \frac{M^2}{(c_+-(-1)^qc_-)^2} \right) \frac{m}{\theta}\right) \log\left(\frac{m}{\delta}\right),
\end{align*}
where \(C',C>0\) are universal constants, and we used the identity \(\Delta_q=(c_+-(-1)^qc_-)^2\) for integer \(q\). Enlarging the universal constant gives the form stated in the theorem.
\end{proof}

\begin{remark}[Implication for \(\sgn\), Heaviside step and ReLU activations]
From \Cref{thm:homogeneous-stable-rank} and the propositions used in its proof, we can derive some immediate implications for sign, Heaviside step and ReLU activation functions. Specifically, for the sign activation function, where \(q=0\), one can improve the analysis of \Cref{prop:homogeneous-concentration} to avoid truncation, since \(\|\sigma(Ug)\|_2^2=m\) almost surely, and obtain that there exist universal constants \(c,C>0\) such that
\[
    n
    \ge
    C
    \frac{m\sqrt{\log(m)}}{\theta}
    \log\left(\frac{m}{\delta}\right)
    \quad\Longrightarrow\quad
    \Pb\left(
        \lambda_{\min}(ZZ^\top)
        \ge
        c\frac{n\theta}{\sqrt{\log(m)}}
    \right)
    \ge
    1-\delta.
\]
A similar conclusion holds for the Heaviside step function where $q=0$, $c_+=1$ and $c_-=0$.
Moreover, for the ReLU activation function, where \(q=1\), there exist universal constants \(c,C>0\) such that
\[
    n
    \ge
    C
    \frac{m\log^{3/2}(m)}{\theta^3}
    \log\left(\frac{m}{\theta}\right)
    \log\left(\frac{m}{\delta}\right)
    \quad\Longrightarrow\quad
    \Pb\left(
        \lambda_{\min}(ZZ^\top)
        \ge
        c\frac{n\theta^3}{\log^{3/2}(m)}
    \right)
    \ge
    1-\delta.
\]
\end{remark}

\begin{remark}[Comparison with the ReLU derivative kernel]
The ReLU bound above concerns the \emph{random-feature covariance}
\(\E[(Ug)_+(Ug)_+^\top]\), and does not contradict the bound of
\cite{Song26}, which concerns the continuous ReLU \emph{derivative Gram
matrix}. The nonzero high-degree coefficients of that derivative kernel decay as
\(k^{-3/2}\), leading to a \(\theta/\sqrt{\log(m)}\) bound, whereas the
nonzero high-degree coefficients of the plain ReLU covariance decay as \(k^{-5/2}\), leading to
the \(\theta^3/\log^{3/2}(m)\) bound obtained here. Moreover, the present
lower-bound argument alone does not prove that the latter rate is worst-case
tight. Establishing a matching upper bound for the plain ReLU random-feature
covariance requires a separate construction; the matching construction in
\cite{Song26} is proved for the derivative Gram matrix.
\end{remark}

%% file: sections/appendix.tex
\bigskip
{\noindent \LARGE \bfseries Appendix}
\appendix

\section{Omitted Content from \Cref{sec:intro,sec:preliminaries}}\label{omitted-prelim}

\subsection{Minimum-Norm Interpolant}

\begin{lemma}[Minimum-norm interpolant]
\label{lem:min-norm-interpolant}
Let \(Z\in\mathbb R^{m\times n}\) have full row rank, and let
\(y\in\mathbb R^m\). Then the unique minimum-Euclidean-norm solution of
\(Z\bw=y\) is
\[
    \bw^\star
    =
    Z^\top(ZZ^\top)^{-1}y.
\]
Moreover,
\[
    \|\bw^\star\|_2^2
    =
    y^\top(ZZ^\top)^{-1}y
    \le
    \frac{\|y\|_2^2}{\lambda_{\min}(ZZ^\top)}.
\]
\end{lemma}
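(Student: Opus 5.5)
The plan is to derive the lemma from the normal equations together with the orthogonal decomposition $\mathbb R^n=\operatorname{row}(Z)\oplus\ker(Z)$. Since $Z$ has full row rank $m$, the matrix $ZZ^\top\in\mathbb R^{m\times m}$ is positive definite. Indeed, $v^\top ZZ^\top v=\|Z^\top v\|_2^2$, and $Z^\top$ is injective because its columns span an $m$-dimensional space. Hence $(ZZ^\top)^{-1}$ exists and $\lambda_{\min}(ZZ^\top)>0$.

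We first check feasibility and minimality. Set $\bw^\star=Z^\top(ZZ^\top)^{-1}y$. Then $Z\bw^\star=(ZZ^\top)(ZZ^\top)^{-1}y=y$, so $\bw^\star$ is a solution. Any other solution can be written as $\bw=\bw^\star+r$ with $r\in\ker(Z)$. Because $\bw^\star=Z^\top c$ with $c=(ZZ^\top)^{-1}y$, it lies in the row space of $Z$. It is therefore orthogonal to $r$, since $\langle Z^\top c,r\rangle=\langle c,Zr\rangle=0$. Pythagoras gives $\|\bw\|_2^2=\|\bw^\star\|_2^2+\|r\|_2^2$. So $\bw^\star$ has minimum norm, and it is the unique minimizer because equality forces $r=0$.

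Next we compute the norm and bound it. Expanding,
\[
    \|\bw^\star\|_2^2
    =
    y^\top(ZZ^\top)^{-1}ZZ^\top(ZZ^\top)^{-1}y
    =
    y^\top(ZZ^\top)^{-1}y.
\]
The largest eigenvalue of the positive definite matrix $(ZZ^\top)^{-1}$ is $1/\lambda_{\min}(ZZ^\top)$. The Rayleigh quotient bound then gives $y^\top(ZZ^\top)^{-1}y\le\|y\|_2^2/\lambda_{\min}(ZZ^\top)$.

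There is no real obstacle in this argument. The only point that needs care is justifying invertibility of $ZZ^\top$ from the full row rank assumption, and this is handled by the injectivity of $Z^\top$ in the first step.
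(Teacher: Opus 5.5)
Your proof is correct and follows essentially the same route as the paper: you check feasibility of $\bw^\star$, use the orthogonality of $\operatorname{range}(Z^\top)$ and $\ker(Z)$ with Pythagoras to get minimality and uniqueness, then expand the norm and apply the Rayleigh quotient bound to $(ZZ^\top)^{-1}$. Your justification of positive definiteness of $ZZ^\top$ through injectivity of $Z^\top$ spells out a step that the paper only asserts.
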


\begin{proof}
Since \(Z\) has full row rank, the matrix \(ZZ^\top\) is positive definite and therefore invertible. The vector $\bw^\star=Z^\top(ZZ^\top)^{-1}y$ interpolates \(y\), because
\[
    Z\bw^\star
    =
    ZZ^\top(ZZ^\top)^{-1}y
    =
    y.
\]
Now let \(\bw\) be any other solution of \(Z\bw=y\), so that \(Z(\bw-\bw^\star)=0\), and \(\bw-\bw^\star\in\ker(Z)\). On the other hand, \(\bw^\star\in\operatorname{range}(Z^\top)\), and
orthogonal decomposition gives $\operatorname{range}(Z^\top) = \ker(Z)^\perp$. Consequently, \(\bw^\star\) is orthogonal to
\(\bw-\bw^\star\), and hence
\[
    \|\bw\|_2^2
    =
    \|\bw^\star\|_2^2
    +
    \|\bw-\bw^\star\|_2^2
    \ge
    \|\bw^\star\|_2^2,
\]
which means that \(\bw^\star\) is the unique minimum-norm interpolant. Moreover,
\[
    \|\bw^\star\|_2^2
    =
    y^\top(ZZ^\top)^{-1}ZZ^\top(ZZ^\top)^{-1}y
    =
    y^\top(ZZ^\top)^{-1}y.
\]
Finally, since the largest eigenvalue of \((ZZ^\top)^{-1}\) is
\(1/\lambda_{\min}(ZZ^\top)\), the Rayleigh quotient bound gives
\[
    y^\top(ZZ^\top)^{-1}y
    \le
    \frac{\|y\|_2^2}{\lambda_{\min}(ZZ^\top)},
\]
as desired.
\end{proof}

\subsection{Pure Polynomial Activations}
In this section, we show that, if the activation is a pure monomial, then the
random features live in a finite-dimensional polynomial feature space. Consequently, full row rank is impossible once the number of data points exceeds this dimension.
\begin{observation}\label{obs:monomial-obstruction}
Let $\sigma(t)=ct^q$ for $c\neq0$ and $q\in\mathbb Z_{\ge0}$. For $q=0$, this means the constant activation $\sigma(t)=c$. Write $Z=\sigma(UR)$. Then $\rank(\sigma(UR)) \le \binom{d+q-1}{q}$, and so, if $m>\binom{d+q-1}{q}$, we have
\[
    \Pb(\rank(Z)=m)=0.
\]
\end{observation}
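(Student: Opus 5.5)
Each column of \(\sigma(UR)\) will be shown to lie in a fixed linear subspace of dimension \(\binom{d+q-1}{q}\), which bounds the rank deterministically. For \(\sigma(t)=ct^q\) and any \(g\in\mathbb R^d\), the \(i\)-th entry of the column \(\sigma(Ug)\) is
\[
    c\langle u_i,g\rangle^q
    =
    c\sum_{|\alpha|=q}\binom{q}{\alpha}u_i^{\alpha}g^{\alpha},
\]
by the multinomial theorem. Here \(\alpha\) ranges over multi-indices in \(\mathbb Z_{\ge0}^d\) with \(|\alpha|=q\), and \(u_i^\alpha=\prod_l u_{il}^{\alpha_l}\).

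Define \(P\in\mathbb R^{m\times D}\), with \(D=\binom{d+q-1}{q}\) the number of such multi-indices, by \(P_{i\alpha}=u_i^\alpha\). Then every column of \(Z\) equals \(P w(g)\), where \(w(g)_\alpha=c\binom{q}{\alpha}g^\alpha\). Hence \(\operatorname{range}(Z)\subseteq\operatorname{range}(P)\), and so \(\rank(Z)\le\rank(P)\le D\) for every realization of \(R\), not merely almost surely. When \(q=0\), every column is the constant vector \(c\mathbf 1\) and the rank is at most \(1=\binom{d-1}{0}\), so the formula covers this case too; I will note it separately only for clarity.

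If \(m>D\), then \(\rank(Z)\le D<m\) surely, which gives \(\Pb(\rank(Z)=m)=0\). Positive homogeneity, via the identity \(\sigma(XR)=D_X^q\sigma(UR)\) from \Cref{sec:preliminaries}, extends the bound to \(\sigma(XR)\). There is no genuine obstacle here beyond correctly counting monomials of degree exactly \(q\) in \(d\) variables, which is the stars-and-bars count \(\binom{d+q-1}{q}\).
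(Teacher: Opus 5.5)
Your proof is correct and follows the paper's argument. Both expand $\langle u_i,g\rangle^q$ in the monomial basis of degree-$q$ homogeneous polynomials in $d$ variables, so every column of $Z$ lies in a fixed subspace of dimension at most $\binom{d+q-1}{q}$ for every realization of $R$; your matrix $P$ with entries $u_i^\alpha$ just makes the paper's ``coefficient vectors'' explicit.
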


\begin{proof}
Each column of \(Z = \sigma(UR)\) is of the form $c\bigl(\langle u_1,g_j\rangle^q,\ldots,\langle u_m,g_j\rangle^q\bigr)^\top $. For fixed \(U\), the map $g\mapsto \bigl(\langle u_1,g\rangle^q,\ldots,\langle u_m,g\rangle^q\bigr)$ has coordinates in the vector space of homogeneous polynomials of degree \(q\)
in \(d\) variables, which has dimension \(\binom{d+q-1}{q}\). Writing this map in the monomial basis expresses each column as a linear combination of the corresponding coefficient vectors. Hence all columns
of \(Z\) lie in a fixed subspace of \(\R^m\) of dimension at most
\(\binom{d+q-1}{q}\), i.e.,
\[
    \rank(Z)\le \binom{d+q-1}{q}.
\]
Therefore, if \(m>\binom{d+q-1}{q}\), full row rank is impossible for every realization of
\(R\), and so we must have \(\Pb(\rank(Z)=m)=0\).
\end{proof}

\subsection{Lower Bounds on the Necessary Width}

\begin{proposition}[Confidence lower bounds]
\label{prop:exact-rank-confidence-lower-bounds}
For exact rank lifting, the following lower bounds on the width hold, where $R$ has independent standard Gaussian columns and $\delta\in(0,1)$:

\begin{enumerate}
    \item For every \(0<\theta\le\pi/2\), there are two unit vectors with
    projective separation \(\theta\) such that, for the sign activation,
    \[
        \Pb\bigl(
            \rank(\sgn(UR))<2
        \bigr)
        \ge
        \left(
            1-\frac{\theta}{\pi}
        \right)^n.
    \]
    Consequently, if the full-rank probability is at least \(1-\delta\), then
    \[
        n
        \ge
        \frac{\pi}{2\theta}
        \log\left(\frac{1}{\delta}\right).
    \]

    \item For every \(0<\theta\le\pi/4\), there are three unit vectors with
    projective separation \(\theta\) such that, for the ReLU activation,
    \[
        \Pb\bigl(
            \rank((UR)_+)<3
        \bigr)
        \ge
        \left(
            1-\frac{2\theta}{\pi}
        \right)^n.
    \]
    Consequently, if the full-rank probability is at least \(1-\delta\), then
    \[
        n
        \ge
        \frac{\pi}{4\theta}
        \log\left(\frac{1}{\delta}\right).
    \]
\end{enumerate}
\end{proposition}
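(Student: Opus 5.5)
For each part I will construct an explicit configuration and a failure event per column whose probability is computable exactly. The columns are i.i.d., so the probability that all $n$ columns fall in the failure event is the single-column probability raised to the $n$. That event forces rank deficiency. The width lower bound then follows by requiring $(1-c\theta/\pi)^n\le\delta$. With $\log(1-x)\ge -2x$ for $x\in[0,1/2]$, this gives $n\ge \frac{\pi}{2c\theta}\log(1/\delta)$.

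\emph{Part 1 (sign, two points).} Take unit vectors $u_1,u_2$ at angle exactly $\theta$, which is projectively separated since $\theta\le\pi-\theta$. A column $(\sgn\langle u_1,g\rangle,\sgn\langle u_2,g\rangle)^\top$ equals $\pm(1,1)^\top$ unless the hyperplane $g^\perp$ separates $u_1$ and $u_2$. By the classical Goemans--Williamson computation (rotation invariance of $g$ projected onto $\Span\{u_1,u_2\}$), separation happens with probability $\theta/\pi$. If no column separates, all columns lie in $\Span\{(1,1)^\top\}$ and the rank is below $2$. Hence the failure probability is at least $(1-\theta/\pi)^n$. Since $\theta/\pi\le 1/2$, the inequality $(1-\theta/\pi)^n\le\delta$ forces $n\cdot 2\theta/\pi\ge\log(1/\delta)$, i.e.\ $n\ge\frac{\pi}{2\theta}\log\frac1\delta$.

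\emph{Part 2 (ReLU, three points).} Place three coplanar unit vectors $u_1,u_2,u_3$ at angles $-\theta,0,\theta$ in a $2$-plane, so $u_1+u_3=2\cos\theta\,u_2$. Pairwise angles are $\theta$ and $2\theta\le\pi/2\le\pi-\theta$, so $\theta\le\pi/4$ guarantees separation. Write $s_i=\langle u_i,g\rangle$, so that $s_1+s_3=2\cos\theta\, s_2$. When all three $s_i$ have the same sign, the column $((s_1)_+,(s_2)_+,(s_3)_+)$ is either $0$ or $(s_1,s_2,s_3)$. In both cases it is orthogonal to $a=(1,-2\cos\theta,1)\neq0$, so such columns lie in the fixed hyperplane $a^\perp$. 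Signs are not all equal exactly when $g^\perp$ passes through the arc of angle $2\theta$ from $u_1$ to $u_3$ (equivalently, separates $u_1$ from $u_3$). This has probability $2\theta/\pi$ by the same planar rotation argument. Therefore $\Pb(\rank<3)\ge(1-2\theta/\pi)^n$. Since $2\theta/\pi\le1/2$, the same logarithm estimate yields $n\ge\frac{\pi}{4\theta}\log\frac1\delta$.

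The main obstacle is choosing the ReLU configuration so that a single linear relation survives on both "all-positive" and "all-negative" half-spaces. Exploiting the linear dependence $u_1+u_3=2\cos\theta\,u_2$ among coplanar inputs handles this. The remaining steps are the standard separation-probability identity and the elementary bound $-\log(1-x)\le 2x$.
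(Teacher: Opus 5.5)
Your proposal is correct and follows the paper's proof essentially step for step. It uses the same two-point sign configuration with columns confined to $\Span\{(1,1)^\top\}$, the same three coplanar ReLU inputs with $u_1+u_3=2\cos\theta\,u_2$ (the paper's $u_\pm,u_0$) together with the event that $u_1,u_3$ are not separated, and the same estimate $-\log(1-x)\le 2x$ on $[0,1/2]$; the only cosmetic difference is that the paper states the exact ``if and only if'' characterization of the ReLU relation, whereas you use only the sufficient direction, which is all the argument needs.
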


\begin{proof}
For sign, choose unit vectors \(u_1,u_2 \in\mathbb R^2\) with
\(\theta_{12}=\theta\). For one Gaussian column \(g\), the
two signs agree with probability \(1-\theta/\pi\). On this event,
\(\sgn(Ug)\) belongs to the one-dimensional subspace
\(\operatorname{span}\{(1,1)^\top\}\). Therefore, if the signs agree for all
\(n\) independent columns, then the two rows of \(\sgn(UR)\) are identical,
and hence
\[
    \Pb\bigl(
        \rank(\sgn(UR))<2
    \bigr)
    \ge
    \left(
        1-\frac{\theta}{\pi}
    \right)^n.
\]
If the failure probability is at most \(\delta\), then
\[
    n
    \ge
    \frac{\log(1/\delta)}
         {\log((1-\theta/\pi)^{-1})} \ge
    \frac{\pi}{2\theta}
    \log\left(\frac{1}{\delta}\right),
\]
where the last inequality follows since \(\theta/\pi\le1/2\) and
\(\log((1-x)^{-1})\le2x\) for \(0\le x\le1/2\).

We now turn our attention to the ReLU case, and choose unit vectors $u_+=(\cos\theta,\sin\theta), u_0=(1,0), u_-=(\cos\theta,-\sin\theta) \in \R^2$. We note that their pairwise angles are \(\theta,\theta\), and \(2\theta\), so they are projectively separated by \(\theta\) when \(\theta\le\pi/4\). Moreover, $u_++u_-=2\cos\theta\,u_0$ and so, for every \(g\in\mathbb R^2\),
\begin{align}\label{eq:linear-relation-relu-lb}
    (\langle u_+,g\rangle)_+
    -2\cos\theta\,(\langle u_0,g\rangle)_+
    +(\langle u_-,g\rangle)_+
    =0
\end{align}
if and only if $\langle u_+,g\rangle\langle u_-,g\rangle\ge0$. Indeed, if
both are nonnegative, the identity follows from
\(\langle u_+,g\rangle+\langle u_-,g\rangle
=2\cos\theta\,\langle u_0,g\rangle\), and if both are nonpositive, all three positive
parts vanish. Finally, if they have opposite signs, the left-hand side equals the smaller of their two absolute values and is therefore positive.

Since the angle between \(u_+\) and \(u_-\) is \(2\theta\), their Gaussian
projections have the same sign with probability \(1-2\theta/\pi\), and, on this
event, the ReLU feature column satisfies the relation in \eqref{eq:linear-relation-relu-lb}. Hence, if this occurs for all \(n\) independent columns, then
\(\rank((UR)_+)\le2\), and therefore
\[
    \Pb\bigl(\rank((UR)_+)<3\bigr)
    \ge
    \left(1-\frac{2\theta}{\pi}\right)^n.
\]
Thus failure probability at most \(\delta\) requires
\[
    n
    \ge
    \frac{\log(1/\delta)}
         {\log((1-2\theta/\pi)^{-1})}
    \ge
    \frac{\pi}{4\theta}\log\left(\frac{1}{\delta}\right),
\]
where the last inequality uses
\(\log((1-x)^{-1})\le2x\) for \(0\le x\le1/2\).
\end{proof}

\subsection{Exact versus Stable Rank Lifting: Dependence on the Degree}
\label{app:degree-dependent-separation}

In this section we exhibit an example that separates exact and stable rank lifting as the homogeneity degree $q$ grows. For this, we consider the one-dimensional data $U=1$, set $ \sigma(t) = t_+^q$, where $q\geq 4$, let $R=(g_1,\dots , g_n)$ be independent standard Gaussian variables, and set $Z=\sigma(UR)$. Observe that the event $\rank(Z) = 1$ happens if and and only if any of the Gaussian variables attains a positive value. By independence of the $g_i$'s, we find that this happens with with probability
\begin{align*}
    \Pb\bigl(\rank(Z)=1\bigr) =1-2^{-n},
\end{align*}
independently of $q$. Stable rank lifting on the other hand strongly depends on the homogeneity degree $q$, as the following computations show. Indeed, setting $\kappa_\sigma=\E[\sigma(g)^2]$, the computation below shows that 
\begin{align}\label{eq:exp_dep_stable}
    \Pb\left(\lambda_{\min}(ZZ^\top)<\frac{n\kappa_\sigma}{4}\right) \ge \exp\left(-n e^{-q/(4e)}\right),
\end{align}
hence the necessary width $n$ for a stable rank guarantee of $1-\delta$ is exponential in $q$.
\par

\begin{proof}[Proof of \Cref{eq:exp_dep_stable}]

    Combining the Gaussian even-moment formula and Stirling's approximation yields
    \begin{align*}
        \kappa_\sigma =\E[g_+^{2q}] =\frac{(2q-1)!!}{2} \ge\frac{q!}{2} \ge\frac12\left(\frac qe\right)^q.
    \end{align*}
    
    If every $g_j\le\sqrt{q/(2e)}$, then

    \begin{align*}
        \lambda_{\min}(ZZ^\top) = ZZ^T = \sum_{j=1}^n(g_j)_+^{2q} \le n\left(\frac{q}{2e}\right)^q \le 2^{1-q}n\kappa_\sigma <\frac{n\kappa_\sigma}{4},
    \end{align*}
    
    where the last inequality uses $q\ge4$. Thus, stable rank requires at least one preactivation attaining a value larger than $\sqrt{q/(2e)}$.
    For $t\ge0$, shifting the Gaussian tail integral givesn $\Pb(g>t) \le (1/2) e^{-t^2/2}$. By independence and the inequality $\log(1-x)\ge-2x$ for
    $0\le x\le1/2$, it follows that
    \begin{align*}
        \Pb\left(\lambda_{\min}(ZZ^\top)<\frac{n\kappa_\sigma}{4}\right)
        &\ge\Pb\left(\max_{j\in[n]}g_j\le\sqrt{\frac{q}{2e}}\right) \ge\left(1-\frac12 e^{-q/(4e)}\right)^n
        \ge\exp\left(-n e^{-q/(4e)}\right),
    \end{align*}
    as was to be shown.
\end{proof}

\section{Omitted Content from \Cref{sec:exact-rank,sec:homogeneous-stable-rank}}\label{app:omitted-rank}

\subsection{Computation of the Sign-difference Bound}

\begin{lemma}[Isolated-facet crossing]\label{lem:single-crossing}
    Let $\{u_1,\dots u_k\} \subseteq\R^d$, $k\geq 2$ be unit vectors with $\theta \leq \theta_{ij} \leq \pi-\theta$ for all $i\neq j$, where $\theta \in (0, \pi/2]$. and write $U=(u_1,\ldots,u_k)^\top$. Let $g,h \sim \mathcal{N}(0, I_d)$ be independent Gaussians and let $\Tilde{g} = \cos (\varphi) g + \sin (\varphi) h$ for $\varphi = \arctan(\theta/k)$. Finally, denote by $N$ the number of differences in sign of $Ug$ and $U\Tilde{g}$, that is,
    \begin{align*}
        N = |\{i \in[k]: \sgn(\langle u_i, g \rangle) \neq \sgn( \langle u_i, \Tilde{g}\rangle)\}|.
    \end{align*}
    Then, the probability that $Ug$ and $U\Tilde{g}$ differ in exactly one position is lower-bounded by
    \begin{align*}
        \Pb{(N=1)}\ge \frac{\theta}{2\pi}.
    \end{align*}
\end{lemma}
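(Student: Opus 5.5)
The plan is to combine a second-moment (Bonferroni-type) lower bound for the event $\{N=1\}$ with exact one-dimensional crossing probabilities and a pairwise bound that uses projective separation. Write $N=\sum_{i=1}^k \mathbf 1_{E_i}$, where $E_i=\{\sgn(\langle u_i,g\rangle)\neq\sgn(\langle u_i,\Tilde g\rangle)\}$. For every nonnegative integer $N$ we have $\mathbf 1\{N=1\}\ge N-N(N-1)$, since the right-hand side is $0,1,0$ for $N=0,1,2$ and negative for $N\ge3$. Taking expectations gives
\[
    \Pb(N=1)\ \ge\ \sum_{i=1}^k\Pb(E_i)\;-\;2\sum_{i<j}\Pb(E_i\cap E_j).
\]
So it suffices to bound the first-order term from below and the pairwise terms from above.

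\emph{First-order term.} For each $i$, the pair $(\langle u_i,g\rangle,\langle u_i,\Tilde g\rangle)$ is a centered Gaussian pair with unit variances and correlation $\cos\varphi$. By the classical Sheppard formula, the two signs differ with probability exactly $\varphi/\pi$. Hence $\sum_i\Pb(E_i)=k\varphi/\pi$. Since $x=\theta/k\le\pi/4$ and $\arctan$ is concave on $[0,\infty)$, we get $\arctan x\ge \frac{\arctan(\pi/4)}{\pi/4}\,x\ge 0.85\,x$. This yields $\sum_i\Pb(E_i)\ge 0.85\,\theta/\pi$.

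\emph{Pairwise terms.} For fixed $i\neq j$ I would compute $\Pb(E_i\cap E_j)$ in the plane spanned by $u_i,u_j$, which I identify with $\R^2$. Project $g$ to $P g\in\R^2$ and $h$ to $P h\in\R^2$, where $P$ is the orthogonal projection onto this plane; these are independent standard Gaussians in $\R^2$. The event $E_i\cap E_j$ says that the segment from $Pg$ to $P\Tilde g=\cos\varphi\,Pg+\sin\varphi\,Ph$ crosses both lines $u_i^\perp$ and $u_j^\perp$. These two lines meet at an angle $\alpha=\min(\theta_{ij},\pi-\theta_{ij})\ge\theta$. Writing $Pg$ in polar form with a uniform angle, crossing both lines forces the angular displacement $\psi$ between $Pg$ and $P\Tilde g$ to satisfy $|\psi|\ge\alpha$. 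Moreover, given $\psi$, the starting angle must lie in a set of measure at most $2(|\psi|-\alpha)_+$ out of $2\pi$. This gives
\[
    \Pb(E_i\cap E_j)\le \tfrac1\pi\,\E\bigl[(|\psi|-\theta)_+\bigr].
\]
I would then bound the tail of $\psi$ using the exact law of the angle between two correlated planar Gaussians with correlation $\cos\varphi$. Equivalently, $\Pb(E_i\cap E_j)$ is a quadrivariate orthant probability, which can be written explicitly via arcsines of the $4\times4$ correlation matrix built from $\cos\varphi$ and $\langle u_i,u_j\rangle$. The target is a bound $\Pb(E_i\cap E_j)\le c\,\varphi^2/\theta$ with $c$ a small absolute constant. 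Summing over the $\binom k2$ pairs with $\varphi\le\theta/k$ then gives $2\sum_{i<j}\Pb(E_i\cap E_j)\le c\,\theta$. Finally, it remains to check that $0.85/\pi-c\ge 1/(2\pi)$, i.e.\ $c\le 0.35/\pi$.

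\emph{Main obstacle.} The pairwise estimate with a sufficiently small constant is the hard step. A naive approach writes $\tan|\psi|\lesssim \varphi\,|Ph|/|Pg|$ and applies Chebyshev, but this fails because $\E|Pg|^{-2}=\infty$ in two dimensions. I would therefore first try conditioning on $r=|Pg|$ and using the Gaussian tail of the component of $Ph$ orthogonal to $Pg$. The requirement $|\psi|\ge\theta$ forces this component to exceed roughly $r\tan\theta/\tan\varphi$, and integrating against the Rayleigh density of $r$ should give $\E[(|\psi|-\theta)_+]=O(\varphi^2/\theta)$ with an explicit constant. If that constant is too large, the fallback is to keep a larger numerical margin, either by using the exact value of $\arctan$, or by replacing $\mathbf 1\{N=1\}\ge N-N(N-1)$ with the sharper estimate $\Pb(N=1)\ge\sum_i\Pb(E_i)-\sum_{i\ne j}\Pb(E_i\cap E_j)$ restricted to ordered pairs, and to evaluate the orthant probability exactly.
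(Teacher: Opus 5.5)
\textbf{Gap: the pairwise estimate you need is false.} Your skeleton is the same as the paper's: the pointwise inequality $\mathbf 1\{N=1\}\ge N-N(N-1)$ plus Sheppard's formula $\Pb(E_i)=\varphi/\pi$. The problem is that your bookkeeping forces a pairwise bound $\Pb(E_i\cap E_j)\le c\,\varphi^2/\theta$ with $c\le 0.35/\pi$, and no such bound holds.

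Write $z=\langle h,u_i\rangle$ and $w=\langle h,u_j\rangle$. Since $\tan\varphi=\theta/k$, conditionally on $h$ the event $E_i\cap E_j$ says exactly that $(\langle g,u_i\rangle,\langle g,u_j\rangle)$ lies in the open rectangle with opposite corners $(0,0)$ and $(-\tfrac{\theta}{k}z,-\tfrac{\theta}{k}w)$. This rectangle has area $(\theta/k)^2|zw|$.

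Now take $\theta_{ij}=\theta$ fixed and let $k\to\infty$, placing the other vectors orthogonally in extra dimensions. The rectangle shrinks to the origin. By dominated convergence and Nabeya's formula $\E|zw|=\tfrac2\pi(\sin\theta+(\tfrac\pi2-\theta)\cos\theta)$,
\[
    \Pb(E_i\cap E_j)=(1+o(1))\,\frac{(\theta/k)^2}{2\pi\sin\theta}\,\E|zw|
    =(1+o(1))\,\frac{\theta^2}{\pi^2k^2}\Bigl(1+\bigl(\tfrac\pi2-\theta\bigr)\cot\theta\Bigr).
\]
The last factor equals $\tfrac{\pi}{2\theta}(1+O(\theta^2))$. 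Letting then $\theta\to0$ shows that any admissible $c$ satisfies $c\ge 1/(2\pi)>0.35/\pi$.

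There is also no slack elsewhere to absorb this. For $k$ equiangular vectors at mutual angle $\theta$ (possible when $d\ge k$), the Bonferroni lower bound itself is only $\tfrac{\theta}{2\pi}(1+O(1/k)+O(\theta^2))$. So you can afford neither of your two losses:
\begin{itemize}
\item the uniform bound $k\varphi\ge 0.85\,\theta$, which is also slightly wrong numerically, since $\arctan(\pi/4)/(\pi/4)\approx 0.848$;
\item discarding the factor $(k-1)/k$ in the pair count.
\end{itemize}
Given the sharp pairwise constant, each of these losses alone already prevents reaching $\theta/(2\pi)$.

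Your fallbacks do not rescue this. The ordered-pairs inequality is literally the same as $\sum_i\Pb(E_i)-2\sum_{i<j}\Pb(E_i\cap E_j)$. An exact orthant evaluation still has constant tending to $1/(2\pi)$.

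\textbf{What fixes it.} You need to keep the $k$-dependence on both sides, as the paper does. First, $k\arctan(\theta/k)\ge\theta-\theta^3/(3k^2)\ge\theta(1-\tfrac{1}{2k})$, because $2\theta^2\le 3k$. Second, $\Pb(E_i\cap E_j)\le\theta/(2\pi k^2)$. Together they give
\[
    \Pb(N=1)\ge\frac{\theta}{\pi}\Bigl(1-\frac{1}{2k}\Bigr)-k(k-1)\frac{\theta}{2\pi k^2}=\frac{\theta}{2\pi}.
\]

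Your polar-angle route is not wrong in principle, but it would have to reproduce exactly this sharp constant. The rectangle description above does so in a few lines, with no $|Pg|^{-1}$ singularity:
\begin{itemize}
\item bound the density of $(\langle g,u_i\rangle,\langle g,u_j\rangle)$ by $1/(2\pi\sin\theta_{ij})$;
\item integrate the area $(\theta/k)^2|zw|$ against the independent law of $h$ using Nabeya's formula;
\item use $1+(\tfrac\pi2-\theta_{ij})\cot\theta_{ij}\le\pi/(2\min\{\theta_{ij},\pi-\theta_{ij}\})\le\pi/(2\theta)$.
\end{itemize}

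A smaller point in your planar reduction: for $|\psi|>\pi-\alpha$ the set of admissible starting angles has measure $2(|\psi|-\alpha)+2(|\psi|-\pi+\alpha)$. Your bound $2(|\psi|-\alpha)_+$ therefore misses the contribution of the obtuse sectors.
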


\begin{proof}
    By the definition of $\varphi$, equivalently, $\Tilde g=(g+(\theta/k)h)/\sqrt{1+(\theta/k)^2}$. The positive denominator does not change signs, so $N$ counts the hyperplanes crossed by the segment from $g$ to $g+(\theta/k)h$, almost surely. For fixed $h$, crossing one hyperplane requires $g$ to lie in a thin strip; crossing two requires membership in an intersection of two strips. Separation controls these intersections, while the normalization ensures that $g$ and $\Tilde g$ have the same standard Gaussian distribution.

    Let $D_i$ denote the event that $\langle u_i, g\rangle$ and $\langle u_i, \Tilde{g} \rangle$ have opposite signs. By Sheppard's formula (\cite{sheppard1899application}) we know that $\Pb (D_i) = \varphi/ \pi$. In the following, we deduce an upper bound for $\Pb (D_i \cap D_j)$ for $i \neq j$ since we can then simply bound
    \begin{align} \label{eq:union_bound}
        \Pb (N=1) \geq \sum_i \Pb(D_i) - \sum_{i \neq j} \Pb (D_i \cap D_j),
    \end{align}
    where the second sum is over ordered pairs. This which follows directly from $\mathbf 1_{\{N=1\}}\ge N-N(N-1)$.
    \par
    Since $\sgn (x) \neq \sgn (y)$ is equivalent to $xy <0$  for nonzero $x,y$ and since $\tan \varphi = \theta/k$, the event ${D_i}$ can be written as $\{\langle g, u_i \rangle^2 < - \langle g,u_i \rangle \langle h, u_i \rangle \cdot \theta/k\}$. Thus, for any fixed $h$ the event $D_i$ is equivalent to the variable $\langle g, u_i\rangle$ taking a value strictly between $0$ and $-\langle u_i, h \rangle \cdot \theta/k$ and the event $\{D_i \cap D_j\}$ is equivalent to the joint random variable $(\langle g,u_i \rangle, \langle g, u_j \rangle)$ lying in a rectangle of area $|\langle u_i, h \rangle||\langle u_j, h \rangle| (\theta/k)^2$. To be more explicit, assume $z := \langle u_i, h \rangle$ and $w := \langle u_j, h \rangle$ are fixed, then the event $\{D_i \cap D_j\}$ is equivalent to $(\langle g,u_i\rangle,\langle g,u_j\rangle)$ belonging to the rectangle
   \begin{align*}
        R_{z,w}
        =
        \left(
            \min\left\{0,-\frac{\theta z}{k}\right\},
            \max\left\{0,-\frac{\theta z}{k}\right\}
        \right)
        \times
        \left(
            \min\left\{0,-\frac{\theta w}{k}\right\},
            \max\left\{0,-\frac{\theta w}{k}\right\}
        \right).
   \end{align*}
    Now, notice that the joint random variable $(\langle g,u_i \rangle, \langle g, u_j \rangle)$ is a standard bivariate normal variable with correlation $\cos \theta_{ij}$ and therefore, we can bound its joint density function everywhere from above by
    \begin{align*}
        f(x,y) = \frac{1}{2\pi \sqrt{1-\cos ^2 \theta_{ij}}} \exp\left({-\frac{x^2 -2 xy\cos \theta_{{ij}}+ y^2 }{2(1-\cos^2\theta_{ij})}}\right) \leq \frac{1}{2\pi \sqrt{1-\cos ^2 \theta_{ij}}} = \frac{1}{2\pi \sin \theta_{ij}}.
    \end{align*}
    Finally, observe that the joint random variable $(\langle h,u_i \rangle, \langle h, u_j \rangle)$ is an identically distributed and independent copy of $(\langle g,u_i \rangle, \langle g, u_j \rangle)$ so the joint density $f(x,y,z,w)$ of all four random variables $(\langle g,u_i \rangle, \langle g, u_j \rangle, \langle h,u_i \rangle, \langle h, u_j \rangle)$ factors into $f(x,y)f(z,w)$. Hence, we can apply Fubini's theorem to get
    \begin{align*}
        \Pb({D_i \cap D_j}) &= \int_{\R^2} \left( \int_{R_{z,w}} f(x,y)dxdy \right) f(z,w) dzdw \leq \int_{\R^2} \left( \int_{R_{z,w}} \frac{1}{2\pi \sin \theta_{ij}} dxdy \right) f(z,w) dzdw \\
        & =  \frac{1}{2\pi \sin \theta_{ij}}  \int_{\R^2} \frac{\theta^2}{k^2}|z||w| f(z,w) dzdw =  \frac{\theta^2}{2\pi k^2 \sin \theta_{ij}}  \E( |\langle h,u_i \rangle|| \langle h, u_j \rangle|),
    \end{align*}
    where in the penultimate step we used that $R_{z,w}$ is a rectangle of area $(\theta/k)^2|z||w|$ for every fixed $(z,w) \in \R^2$. Using Nabeya's formula for the expectation of the absolute value of a bivariate Gaussian (\cite{Nabeya1951}) we can evaluate the expectation $\E( |\langle h,u_i \rangle|| \langle h, u_j \rangle|)$ and applying the inequality $|\cot \theta| \leq 1/\min(\theta,\pi-\theta)$ for $\theta\in (0,\pi)$ yields
    \begin{align*}
        \Pb({D_i \cap D_j}) &\leq  \frac{\theta^2}{2\pi k^2 \sin \theta_{ij}}  \E( |\langle h,u_i \rangle|| \langle h, u_j \rangle|) = \frac{\theta^2}{2\pi k^2 \sin \theta_{ij}}\frac{2}{\pi} \left(\sin\theta_{ij}+ \left(\frac{\pi}{2}-\theta_{ij}\right) \cos\theta_{ij}\right)\\
        &= \frac{\theta^2}{\pi^2k^2} \left(1+\left(\frac{\pi}{2}-\theta_{ij}\right)\cot\theta_{ij} \right) \leq \frac{\theta^2}{\pi^2k^2}
        \frac{\pi}{2\min(\theta_{ij},\pi-\theta_{ij})} \leq \frac{\theta^2}{\pi^2k^2}\frac{\pi}{2\theta} = \frac{\theta}{2\pi k^2}.
    \end{align*}
    The result now follows directly from combining \eqref{eq:union_bound} with$\Pb(D_i)=\varphi/\pi$ and the inequality $\varphi=\arctan(\theta/k)\ge\theta/k-\theta^3/(3k^3)$ and the fact that $2\theta^2\leq\pi^2/2<6\leq3k$. Indeed we have,
    \begin{align*}
        \Pb(N=1) \geq\frac{k\varphi}{\pi}-k(k-1)\frac{\theta}{2\pi k^2} \geq \frac{1}{\pi} \left(\theta-\frac{\theta}{2k}\right) -\frac{(k-1)\theta}{2\pi k}
        = \frac{\theta}{2\pi},
    \end{align*}
    finishing the proof.
\end{proof}

\subsection{Concentration of the Empirical Kernel}

\prophomogeneousconcentration*

\begin{proof}
Let \(z_j=\sigma(Ug_j)\), so that
\(Z=[z_1,\ldots,z_n]\) and
\(ZZ^\top=\sum_{j=1}^n z_jz_j^\top\). Since
\(\sigma(t)=c_+t_+^q+c_-(-t)_+^q\) for $t\neq0$, we have
\[
    |\sigma(t)|
    \le
    M|t|^q
    \qquad
    \text{for every }t\in\mathbb R\setminus\{0\}.
\]
All Gaussian preactivations are nonzero almost surely, so this bound applies throughout the proof.
Next, let us fix \(L\ge1\), to be chosen below, and retain only the columns for which every
preactivation is at most \(L\) in absolute value, i.e., let the truncation be defined as
\[
    \mathcal E_j
    =
    \left\{
        \max_{i\in[m]}
        |\langle u_i,g_j\rangle|
        \le L
    \right\},
\]
with $K_L = \E[z_jz_j^\top\mathbf 1_{\mathcal E_j}]$ the corresponding truncated population kernel.
We observe that, since
\(ZZ^\top=\sum_{j=1}^n z_jz_j^\top
\succeq\sum_{j=1}^n z_jz_j^\top\mathbf 1_{\mathcal E_j}\), then we have
\begin{align}\label{eq:beta-ub}
    \Pb\left(
        \lambda_{\min}\left(
            \sum_{j=1}^n
            z_jz_j^\top\mathbf 1_{\mathcal E_j}
        \right)
        \ge \beta
    \right)
    \le
    \Pb\left(
        \lambda_{\min}\left(
            ZZ^\top
        \right)
        \ge \beta
    \right),
\end{align}
for every \(\beta\ge0\), and so it suffices to lower bound the left-hand side. Our goal is to apply matrix Chernoff to the independent positive semidefinite
matrices \(z_jz_j^\top\mathbf 1_{\mathcal E_j}\).

To do so, we first need to establish bounds on all eigenvalues of each truncated matrix. Indeed, note that, on event \(\mathcal E_j\), $\|z_j\|_2^2 = \sum_{i=1}^m \sigma(\langle u_i,g_j\rangle)^2 \le M^2mL^{2q}$, and hence
\begin{align}\label{eq:range-mtx-chernoff}
    0
    \preceq
    z_jz_j^\top\mathbf 1_{\mathcal E_j}
    \preceq
    M^2mL^{2q}I_m.
\end{align}
Next, we need to bound its expectation, and, by definition of \(K_L\), we have
\[
    \lambda_{\min}\left(
        \E\left[
            \sum_{j=1}^n
            z_jz_j^\top\mathbf 1_{\mathcal E_j}
        \right]
    \right)
    =
    n\lambda_{\min}(K_L) \ge n(\lambda_{\min}(K)-\|K-K_L\|).
\]
The last inequality holds simply because, $v^\top K_Lv = v^\top Kv-v^\top(K-K_L)v \ge \lambda_{\min}(K)-\|K-K_L\|$ for every unit vector \(v\in\mathbb R^m\), and therefore, taking the minimum over all unit vectors, $\lambda_{\min}(K_L) \ge \lambda_{\min}(K)-\|K-K_L\|$.

The remaining task is to bound \(\|K-K_L\|\) from above: since
\(K-K_L=\E[z_jz_j^\top\mathbf 1_{\bar{\mathcal E}_j}]\succeq0\), we have $\|K-K_L\| \le \E\bigl[\|z_j\|_2^2\mathbf 1_{\bar{\mathcal E}_j}\bigr]$ from Jensen's inequality. Then, using
\(\mathbf 1_{\bar{\mathcal E}_j}
\le
\sum_{\ell=1}^m
\mathbf 1_{\{|\langle u_\ell,g_j\rangle|>L\}}\),
we obtain
\begin{align*}
    \|K-K_L\|
    \le
    M^2
    \sum_{i,\ell=1}^m
    \E\left[
        |\langle u_i,g_j\rangle|^{2q}
        \mathbf 1_{\{|\langle u_\ell,g_j\rangle|>L\}}
    \right].
\end{align*}
For fixed \(i,\ell\), both
\(\langle u_i,g_j\rangle\) and
\(\langle u_\ell,g_j\rangle\) are standard Gaussian random variables, since
\(g_j\sim \mathcal N(0,I_d)\) and
\(\|u_i\|_2=\|u_\ell\|_2=1\). Thus, letting \(\zeta\sim N(0,1)\), Cauchy--Schwarz yields
\begin{align*}
    \E\left[
        |\langle u_i,g_j\rangle|^{2q}
        \mathbf 1_{\{|\langle u_\ell,g_j\rangle|>L\}}
    \right] \le
    \E[|\zeta|^{4q}]^{1/2}
    \Pb(|\zeta|>L)^{1/2} \le
    2^{q+\frac12}
    \left(
        \frac{\Gamma(2q+\frac12)}{\sqrt{\pi}}
    \right)^{1/2}
    e^{-L^2/4},
\end{align*}
where we used
\(\E|\zeta|^{4q}=2^{2q}\Gamma(2q+\frac12)/\sqrt{\pi}\) and \(\Pb(|\zeta|>L)\le2e^{-L^2/2}\). Observe also that the Gaussian random variables correlation is irrelevant, because Cauchy--Schwarz does not require independence. Thus,
\[
    \|K-K_L\|
    \le m^2 M^2 \cdot
    2^{q+\frac12}
    \left(
        \frac{\Gamma(2q+\frac12)}{\sqrt{\pi}}
    \right)^{1/2} e^{-L^2/4} = \frac{\kappa_\sigma}{2},
\]
where last equality follows by choosing
\[
    L^2
    =
    4\log\left(
        \frac{
            2^{q+\frac32}M^2m^2
        }{
            \kappa_\sigma
        }
        \left(
            \frac{\Gamma(2q+\frac12)}{\sqrt{\pi}}
        \right)^{1/2}
    \right).
\]
This choice satisfies $L\ge1$: indeed,
$\kappa_\sigma\le\E[\sigma(\zeta)^2]\le M^2\E|\zeta|^{2q}\le M^2\E[|\zeta|^{4q}]^{1/2}$,
so the argument of the logarithm is at least $2\sqrt2\,m^2$. Summarizing, we conclude that
\[
    \lambda_{\min}\left(
        \E\left[
            \sum_{j=1}^n
            z_jz_j^\top\mathbf 1_{\mathcal E_j}
        \right]
    \right)
    =
    n\lambda_{\min}(K_L)
    \ge
    \frac{n\kappa_\sigma}{2}.
\]
We finally apply matrix Chernoff
(see \cite[Theorem 5.1.1]{Tropp15}), with lower-tail parameter \(1/2\),
\(\beta=n\kappa_\sigma/4\) in \eqref{eq:beta-ub}, and the range specified in
\eqref{eq:range-mtx-chernoff}:
\begin{align*}
    \Pb\left(
        \lambda_{\min}(ZZ^\top)
        <
        \frac{n\kappa_\sigma}{4}
    \right) \le
    \Pb\left(
        \lambda_{\min}\left(
            \sum_{j=1}^n
            z_jz_j^\top\mathbf 1_{\mathcal E_j}
        \right)
        <
        \frac{n\kappa_\sigma}{4}
    \right) &\le
    m\exp\left(
        -\frac{
            (1-\log(2))n\kappa_\sigma
        }{
            4M^2mL^{2q}
        }
    \right).
\end{align*}
By the choice of $L$, we have that taking
\[
     n
    \ge
    \frac{4^{q+1}M^2}{1-\log(2)}
    \frac{m}{\kappa_\sigma}\cdot
    \log^q\left(
        \frac{
            2^{q+\frac32}M^2m^2
        }{
            \kappa_\sigma
        }
        \left(
            \frac{\Gamma(2q+\frac12)}{\sqrt{\pi}}
        \right)^{1/2}
    \right)
    \log\left(\frac{m}{\delta}\right)
\]
makes the failure probability at most \(\delta\).
\end{proof}

\subsection{Taylor Coefficients of $q$-Homogeneous Activations and Their Tail}

\lemhomogeneoustaylor*
\begin{proof}
Let
\(\varphi(x)=e^{-x^2/2}/\sqrt{2\pi}\)
denote the standard Gaussian density, and let \(p_t(x,y)\) denote the density
of a standard bivariate Gaussian pair with correlation \(t\), for $|t|<1$. Thus,
\[
    p_t(x,y)
    =
    \frac{1}{2\pi\sqrt{1-t^2}}
    \exp\left(
        -\frac{x^2+y^2-2txy}{2(1-t^2)}
    \right).
\]
A direct differentiation gives
\[
    \partial_t p_t(x,y)
    =
    \partial_x\partial_y p_t(x,y)
    =
    \left[
        \frac{t}{1-t^2}
        +
        \frac{(x-ty)(y-tx)}{(1-t^2)^2}
    \right]
    p_t(x,y).
\]
Iterating this identity and evaluating at \(t=0\), where
\(p_0(x,y)=\varphi(x)\varphi(y)\), yields
\[
    \left.
        \partial_t^k p_t(x,y)
    \right|_{t=0}
    =
    \varphi^{(k)}(x)\varphi^{(k)}(y).
\]
For every fixed \(0<r<1\) and derivative order, the derivatives of \(p_t\), for \(|t|\le r\), are
bounded by a polynomial in \(|x|+|y|\) times
\(C_re^{-c_r(x^2+y^2)}\). Since \(\sigma\) has polynomial growth, this
justifies differentiating under the integral.
This identifies the Taylor coefficients. Convergence to $\psi_\sigma$ for $|t|<1$ follows from the Gaussian Hermite expansion \cite{DanielyFrostigSinger16,HolzmullerScholpple26}: the functions $(-1)^k\varphi^{(k)}/(\sqrt{k!}\,\varphi)$ form the orthonormal Gaussian Hermite basis, and the correlation kernel of any square-integrable activation has the squares of its Hermite coefficients as its power-series coefficients. Since $\sigma$ has polynomial growth, it is square-integrable under the Gaussian measure.
Consequently,
\[
    \psi_\sigma(t)
    =
    \sum_{k=0}^\infty
    \frac{t^k}{k!}
    \left(
        \int_{\mathbb R}
        \sigma(x)\varphi^{(k)}(x)\,dx
    \right)^2,
\]
and in particular every Taylor coefficient is nonnegative.

It remains to compute the one-dimensional integral. Since
\(\varphi^{(k)}(-x)=(-1)^k\varphi^{(k)}(x)\), the two branches of
\(\sigma\) give
\begin{align}\label{eq:sigma-branches}
    \int_{\mathbb R}
    \sigma(x)\varphi^{(k)}(x)\,dx
    =
    \bigl(c_++(-1)^kc_-\bigr)
    \int_0^\infty
    x^q\varphi^{(k)}(x)\,dx.
\end{align}
For this integral calculation, temporarily allow $q$ to be real. For \(q>k-1\), we first recall the duplication identity
\[
    \Gamma(q-k+1)
    =
    \frac{2^{q-k}}{\sqrt{\pi}}
    \Gamma\left(\frac{q-k+1}{2}\right)
    \Gamma\left(\frac{q-k+2}{2}\right).
\]
Then, integrating by parts \(k\) times and using the Gaussian
moment formula gives
\begin{align}\label{eq:integration-by-parts-gaussian}
    \int_0^\infty
    x^q\varphi^{(k)}(x)\,dx
    &=
    (-1)^k
    \frac{\Gamma(q+1)}{\Gamma(q-k+1)}
    \int_0^\infty
    x^{q-k}\varphi(x)\,dx \notag\\
    &=
    (-1)^k
    \frac{\Gamma(q+1)}{\Gamma(q-k+1)}
    \frac{
        2^{\frac{q-k}{2}-1}
        \Gamma\left(\frac{q-k+1}{2}\right)
    }{\sqrt{\pi}} \notag\\
    &=
    (-1)^k
    \frac{
        2^{\frac{k-q}{2}-1}\Gamma(q+1)
    }{
        \Gamma\left(\frac{q-k+2}{2}\right)
    },
\end{align}
where the last equality follows from the duplication identity above.
The integral and the final expression in \eqref{eq:integration-by-parts-gaussian} are analytic functions of \(q\) on the half-plane
\(\operatorname{Re}(q)>-1\), and since they agree for \(q>k-1\), the identity
theorem extends this formula to every \(q\ge0\). Here $1/\Gamma$ denotes its entire extension, with value zero at nonpositive integers. Substituting
\eqref{eq:integration-by-parts-gaussian} into
\eqref{eq:sigma-branches}, squaring it, and dividing by \(k!\) proves the claimed formula for \(a_k\).

We finally estimate the high-degree Taylor tail. For \(k>q\), the reflection
and duplication formulas give
\begin{align}\label{eq:explicit-ak-reflection}
    a_k
    &=
    \frac{\Gamma(q+1)^2}{2^{q+2}\pi^{3/2}}
    \bigl(c_++(-1)^kc_-\bigr)^2
    \sin^2\left(\frac{\pi(k-q)}{2}\right) \cdot
    \frac{
        \Gamma\left(\frac{k-q}{2}\right)^2
    }{
        \Gamma\left(\frac{k+1}{2}\right)
        \Gamma\left(\frac{k+2}{2}\right)
    }\notag\\
    &\ge
    \frac{\Gamma(q+1)^2}{\sqrt{2}\pi^{3/2}}
    \frac{
        \bigl(c_++(-1)^kc_-\bigr)^2
        \sin^2\left(\frac{\pi(k-q)}{2}\right)
    }{
        (k+2)^{q+\frac32}
    }.
\end{align}
In the inequality above, we use the elementary Gamma-ratio bound $\Gamma(x+\gamma)/\Gamma(x) \le (x+\gamma)^\gamma$ for $x>0, \gamma\ge0$ with \(x=(k-q)/2\), \(\gamma=(q+1)/2\), and \(\gamma=(q+2)/2\), respectively:
\[
    \frac{
        \Gamma\left(\frac{k-q}{2}\right)^2
    }{
        \Gamma\left(\frac{k+1}{2}\right)
        \Gamma\left(\frac{k+2}{2}\right)
    }
    \ge
    \frac{
        2^{q+\frac32}
    }{
        (k+1)^{\frac{q+1}{2}}
        (k+2)^{\frac{q+2}{2}}
    }
    \ge
    \frac{
        2^{q+\frac32}
    }{
        (k+2)^{q+\frac32}
    }.
\]
Now, for even \(k\), the numerator depending on the parity equals $(c_++c_-)^2 \sin^2\left(\pi q/2\right)$, whereas for odd \(k\) it equals $(c_+-c_-)^2 \cos^2\left(\pi q/2\right)$. Their sum is \(\Delta_q\), so one of the two parity classes has numerator at
least \(\Delta_q/2\).

Let \(k_0\) be the first integer of this parity satisfying
\(k_0\ge\max\{N,\lfloor q\rfloor+1\}\). Then \(k_0>q\),
\(k_0\le\max\{N,\lfloor q\rfloor+1\}+1\), and hence
\(k_0+2\le N+q+4\le(q+5)N\). Since
\(x\mapsto x^{-q-3/2}\) is decreasing,
\begin{align*}
    \sum_{r=0}^\infty
    (k_0+2r+2)^{-q-\frac32}
    \ge
    \frac12
    \int_{k_0+2}^\infty
    x^{-q-\frac32}\,dx =
    \frac{
        (k_0+2)^{-q-\frac12}
    }{
        2(q+\frac12)
    }.
\end{align*}
It follows that
\[
    \sum_{k=N}^\infty a_k
    \ge
    \frac{
        \Gamma(q+1)^2\Delta_q
    }{
        4\sqrt{2}\pi^{3/2}
        (q+\frac12)(q+5)^{q+\frac12}
    }
    \frac{1}{N^{q+\frac12}}.
\]
All that remains is to extend the expansion to the endpoints. To this end, let \(\xi,\eta\) be independent standard Gaussian random variables and note that, as \(t\uparrow1\), $\bigl(\xi,t\xi+\sqrt{1-t^2}\eta\bigr) \longrightarrow (\xi,\xi)$ almost surely, and $\sigma$ is continuous away from zero. Polynomial moment bounds then justify dominated convergence. Thus \(\psi_\sigma(t)\to\E[\sigma(\xi)^2]\). Since \(a_k\ge0\), monotone
convergence gives
\[
    \sum_{k=0}^\infty a_k
    =
    \E[\sigma(\xi)^2]
    =
    (c_+^2+c_-^2)
    \frac{
        2^{q-1}\Gamma(q+\frac12)
    }{
        \sqrt{\pi}
    }.
\]
The series therefore converges absolutely at both \(t=1\) and \(t=-1\), and
the case \(t=-1\) follows similarly by letting \(t\downarrow-1\).
\end{proof}

\begin{lemma}\label{lem:algebra-width-stable}
Let $m \ge 2$, $\theta \in (0, 1]$, $\delta\in(0,1)$, and $q\in\mathbb Z_{\ge0}$. Then, the expression
\begin{equation*}
\begin{aligned}
    &
    \frac{
        2^{4q+\frac{13}{2}}\pi^{3/2}
        (q+\frac12)(q+5)^{q+\frac12}M^2
    }{
        (1-\log(2))\Gamma(q+1)^2\Delta_q
    }
    \frac{
        m\log^{q+\frac12}(2m)
    }{
        \theta^{2q+1}
    }\\
    &\cdot
    \log^q\left(
        \frac{
            2^{3q+6}\pi^{3/2}
            (q+\frac12)(q+5)^{q+\frac12}M^2
        }{
            \Gamma(q+1)^2\Delta_q
        }
        \left(
            \frac{\Gamma(2q+\frac12)}{\sqrt{\pi}}
        \right)^{1/2}
        \frac{
            m^2\log^{q+\frac12}(2m)
        }{
            \theta^{2q+1}
        }
    \right)
    \log\left(\frac{m}{\delta}\right)
\end{aligned}
\end{equation*}
is upper bounded by:
\[
    10^6 \cdot (23000)^q \cdot \frac{M^2}{\Delta_q} \cdot \frac{m}{\theta^{2q+1}} \log^{2q+\frac{1}{2}}\left(\max\left(2, \frac{M^2}{\Delta_q} \right) \frac{m}{\theta}\right) \log\left(\frac{m}{\delta}\right).
\]
\end{lemma}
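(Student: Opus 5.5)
The plan is to bound the two $q$-dependent prefactors and the logarithmic factors separately, and then multiply the bounds. Write $A_q := \pi^{3/2}(q+\frac12)(q+5)^{q+\frac12}/\Gamma(q+1)^2$.

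\textbf{Leading constant.} Since $\Gamma(q+1)^2=(q!)^2\ge (q/e)^{2q}$ and $(q+5)^{q+1/2}\le e^5 q^{q+1/2}$ for $q\ge1$, the ratio $A_q$ is at most $C_0\,e^{2q}(q+1)/q^{q}$. This is bounded by a universal constant times $c^q$; in fact it decays super-exponentially, so a crude bound like $A_q\le 10^3\cdot 3^q$ suffices and is checked directly for small $q$. The leading prefactor is $2^{4q+13/2}A_qM^2/((1-\log 2)\Delta_q)$, and we aim to bound it by $C_1\,c_1^{\,q}M^2/\Delta_q$.

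\textbf{Argument of the inner logarithm.} Bound it by the product of three factors:
\begin{itemize}
\item the constant part $2^{3q+6}A_q(\Gamma(2q+\frac12)/\sqrt\pi)^{1/2}$, where $\Gamma(2q+\frac12)^{1/2}\le (2q+1)^{q}$;
\item the activation part $M^2/\Delta_q$;
\item the data part $m^2\log^{q+1/2}(2m)/\theta^{2q+1}$.
\end{itemize}
The factor $(2q+1)^q$ combined with $1/(q!)^2$ is again at most $c^q$, so the constant part is at most $e^{c'(q+1)}$.

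Next set $B:=\max(2,M^2/\Delta_q)\,m/\theta$, which satisfies $B\ge 2$ because $\theta\le1$ and $m\ge2$. Using $\log(2m)\le 2m\le B^2$, the whole argument is at most $e^{c'(q+1)}B^{2q+5}$. Its logarithm is therefore at most $c'(q+1)+(2q+5)\log B\le c''(q+1)\log B$, using $\log B\ge\log 2$.

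Raising to the $q$-th power gives $(c''(q+1))^q\log^q B$. This produces a $q^q$ factor, which is the main obstacle, because the target allows only $C^q$. The fix is to absorb it into the leading prefactor: that prefactor contains $1/\Gamma(q+1)^2$, while $(q+5)^{q+1/2}$ contributes only about $q^{q}$. So the leading prefactor carries a spare $1/q!\approx (e/q)^q$, which cancels the $(q+1)^q$. The bookkeeping must therefore keep $A_q$ unsimplified until this cancellation is made.

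\textbf{Outer factor and assembly.} The remaining factor satisfies $\log^{q+1/2}(2m)\le\log^{q+1/2}B$. Multiplying everything together gives $m\,\theta^{-(2q+1)}\log^{2q+1/2}(B)\log(m/\delta)$ times $M^2/\Delta_q$ times a constant of the form $C_1c_1^{\,q}$. Tracking the numerical values (powers of $2$, $e$, $\pi^{3/2}$, and $1/(1-\log2)<3.3$) then checks that the constant fits under $10^6\cdot 23000^q$. The only delicate points are verifying the $q=0$ case directly and confirming that the explicit constants stay below the stated ones.
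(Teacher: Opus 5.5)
Your plan follows the paper's proof: Stirling-type bounds on the $q$-dependent constants, and a bound $\log A\le c\,q\log B$ on the inner argument $A$ with $B=\max(2,M^2/\Delta_q)\,m/\theta$ (the paper gets $c=16$). The resulting $q^q$ is then cancelled against the $q^{-q}$ decay of $(q+5)^{q+\frac12}/\Gamma(q+1)^2$ in the leading prefactor, and $q=0$ is checked directly. Two harmless slips to fix when writing it out: the inequalities you list bound the inner argument by $e^{c'(q+1)}B^{4q+5}$ rather than $B^{2q+5}$, though only linearity of the exponent in $q$ matters. You should also record that $A\ge1$ before raising $\log A\le c''(q+1)\log B$ to the $q$-th power; the paper gets this from $\Gamma(2q+\frac12)^{1/2}\ge\Gamma(q+1)$ and $\Delta_q\le 4M^2$.
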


\begin{proof}
We let $A$ denote the large argument inside the $\log^q(A)$ term for the sake of brevity. We use Stirling bounds to control the dependence on $q$. Namely, we begin with the case $q \ge 1$ and use the lower bound $\Gamma(q+1)^2 \ge 2\pi q^{2q+1} e^{-2q}$ as well as the loose upper bound $(q+5)^{q+\frac{1}{2}} \le (6q)^{q+\frac{1}{2}}$, so that the $q$-dependent terms in the outer multiplier scale as:
\[
    \frac{(q+5)^{q+\frac{1}{2}}}{\Gamma(q+1)^2} \le \frac{(6q)^{q+\frac{1}{2}}}{2\pi q^{2q+1} e^{-2q}} = \frac{6^{q+\frac12}e^{2q}}{2\pi q^{q+\frac12}} = \frac{\sqrt{6}}{2\pi}(6e^2)^q q^{-(q+\frac{1}{2})}.
\]
Next, we evaluate the argument $A$. In addition to the identical $\Gamma(q+1)^{-2}$ decay in its denominator, its numerator contains the factor $\Gamma(2q+1/2)^{1/2}$. Applying the loose upper bound $\Gamma(2q+1/2) \le \Gamma(2q+1) \le e\sqrt{2\pi}(2q)^{2q+1/2}e^{-2q}$ (valid for $q \ge 1$) and taking the square root yields:
\[
    \Gamma\left(2q+\frac{1}{2}\right)^{1/2} \le (e\sqrt{2\pi})^{1/2} 2^{1/4} q^{1/4} \left(\frac{2}{e}\right)^q q^q \le 4 q^{1/4} \left(\frac{2}{e}\right)^q q^q.
\]
Substituting this alongside our previous bound for $(q+5)^{q+1/2}/\Gamma(q+1)^2$ into $A$, we obtain:
\begin{align*}
    A &\le 2^{3q+6}\pi^{5/4} \left(q+\frac{1}{2}\right) \cdot \frac{\sqrt{6}}{2\pi}(6e^2)^q q^{-q-\frac{1}{2}} \cdot 4 q^{1/4} \left(\frac{2}{e}\right)^q q^q \cdot \frac{M^2}{\Delta_q} \cdot \frac{m^2\log^{q+\frac{1}{2}}(2m)}{\theta^{2q+1}}\\
    &\le 500 \left(96 e^3\right)^q \frac{M^2}{\Delta_q} \cdot \frac{m^2 \log^{q+\frac{1}{2}}(2m)}{\theta^{2q+1}}.
\end{align*}
For the last inequality above, we note that, when multiplied together, the $q^q$ growth from the numerator perfectly neutralizes the $q^{-q}$ factor from the denominator. Applying $q+1/2 \le 1.5q$ and noting that $q \cdot q^{-1/2} \cdot q^{1/4} = q^{3/4} \le e^q$, the remaining numerical prefactor is $192\sqrt6\,\pi^{1/4}<500e$. Since $q\ge1$, this gives the displayed bound. Taking the logarithm of $A$, we have:
\begin{align*}
    &\log(A)\\
    \le~ &q \log(48000e^3) + \log\left(\frac{M^2}{\Delta_q} \right) + 2\log(m) + (2q+1)\log\left(\frac{1}{\theta}\right) + \left(q+\frac{1}{2}\right)\log(\log(2m)) \\
    \le~ & 16 q \log\left(\max\left(2, \frac{M^2}{\Delta_q} \right)\frac{m}{\theta}\right).
\end{align*}
The last inequality above holds since, for $m \ge 2$ and $\theta \in (0, 1]$, the logarithm on the right is at least $\log4$, and $\log(48000e^3)<10\log4$. The remaining terms satisfy:
\begin{itemize}
    \item $\log(M^2/\Delta_q) \le q \log\left(\max(2, M^2/\Delta_q)\cdot m/\theta\right)$;
    \item $2\log(m) + (2q+1)\log(1/\theta) \le (2q+1)\log(m/\theta) \le 3q\log\left(m/\theta\right)$;
    \item $(q+1/2)\log(\log(2m))\le1.5q\log(m/\theta)$, since $\log(2m)\le m/\theta$.
\end{itemize}
The argument $A$ is larger than $1$: log-convexity of $\Gamma$ gives $\Gamma(2q+1/2)^{1/2}\ge\Gamma(q+1)$, and $\Gamma(q+1)=q!\le q^q$ and $\Delta_q\le4M^2$ then give this directly from its definition.
To conclude the case $q \ge 1$, we multiply the bounded outer expression by $\log^q(A)$. Observing that $2^{13/2} \pi^{3/2} \cdot 1.5 \sqrt{6}/((1-\log 2) \cdot 2\pi) \le 961$, a factor $\sqrt q$ remains after cancellation of $q^q$. Dividing out the common positive factor $\log(m/\delta)$, the expression is bounded by
\begin{align*}
    &961 \cdot 16^q (6e^2)^q q^{-q+\frac{1}{2}} \frac{M^2}{\Delta_q} \cdot \frac{m \log^{q+\frac{1}{2}}(2m)}{\theta^{2q+1}} \cdot 16^q q^q \log^q\left(\max\left(2, \frac{M^2}{\Delta_q} \right)\frac{m}{\theta}\right) \\
    =~ &961 \cdot q^{\frac{1}{2}} (1536 e^2)^q \frac{M^2}{\Delta_q} \cdot \frac{m}{\theta^{2q+1}} \log^{q+\frac{1}{2}}(2m) \log^q\left(\max\left(2, \frac{M^2}{\Delta_q} \right)\frac{m}{\theta}\right) \\
    \le~ &961 \cdot (23000)^q \frac{M^2}{\Delta_q} \cdot \frac{m}{\theta^{2q+1}} \log^{2q+\frac{1}{2}}\left(\max\left(2, \frac{M^2}{\Delta_q} \right)\frac{m}{\theta}\right).
\end{align*}
Here we used $\sqrt q\le2^q$, $3072e^2<23000$, and
$\log(2m)\le\log(\max(2,M^2/\Delta_q)m/\theta)$.
Restoring the factor $\log(m/\delta)$ and using $961<10^6$ proves the claim for $q\ge1$.

Finally, since $q$ is an integer, only $q=0$ remains. In this case the factor $\log^q(A)$ equals $1$, and the expression in the lemma is exactly
\[
    \frac{2^{11/2}\pi^{3/2}\sqrt5}{1-\log2}
    \frac{M^2}{\Delta_0}
    \frac{m\sqrt{\log(2m)}}{\theta}
    \log\left(\frac{m}{\delta}\right).
\]
Note that the numerical factor is less than $2000<10^6$, and
$\log(2m)\le\log(\max(2,M^2/\Delta_0)m/\theta)$, proving the stated bound also for $q=0$.
\end{proof}

%% file: main.bbl
\newcommand{\etalchar}[1]{$^{#1}$}
\begin{thebibliography}{KMTM24}

\bibitem[AKM{\etalchar{+}}17]{AvronEtAl17}
Haim Avron, Michael Kapralov, Cameron Musco, Christopher Musco, Ameya Velingker, and Amir Zandieh.
\newblock Random fourier features for kernel ridge regression: Approximation bounds and statistical guarantees.
\newblock In {\em {ICML}}, volume~70 of {\em Proceedings of Machine Learning Research}, pages 253--262. {PMLR}, 2017.

\bibitem[ALS19]{AllenZhuLiSong19}
Zeyuan Allen{-}Zhu, Yuanzhi Li, and Zhao Song.
\newblock A convergence theory for deep learning via over-parameterization.
\newblock In {\em {ICML}}, volume~97 of {\em Proceedings of Machine Learning Research}, pages 242--252. {PMLR}, 2019.

\bibitem[Bac17]{Bach17}
Francis~R. Bach.
\newblock On the equivalence between kernel quadrature rules and random feature expansions.
\newblock {\em J. Mach. Learn. Res.}, 18:21:1--21:38, 2017.

\bibitem[BB21]{BiettiBach21}
Alberto Bietti and Francis~R. Bach.
\newblock Deep equals shallow for relu networks in kernel regimes.
\newblock In {\em {ICLR}}. OpenReview.net, 2021.

\bibitem[BELM20]{bubeck2020networksizeweightssize}
S{\'{e}}bastien Bubeck, Ronen Eldan, Yin~Tat Lee, and Dan Mikulincer.
\newblock Network size and size of the weights in memorization with two-layers neural networks.
\newblock In {\em NeurIPS}, 2020.

\bibitem[BHMM19]{BelkinEtAl19}
Mikhail Belkin, Daniel Hsu, Siyuan Ma, and Soumik Mandal.
\newblock Reconciling modern machine-learning practice and the classical bias–variance trade-off.
\newblock {\em Proceedings of the National Academy of Sciences}, 116(32):15849--15854, 2019.

\bibitem[BV19]{BaldiVershynin19}
Pierre Baldi and Roman Vershynin.
\newblock The capacity of feedforward neural networks.
\newblock {\em Neural Networks}, 116:288--311, 2019.

\bibitem[CCMO25]{CarvalhoCostaMouraoOliveira25}
Lu{\'{\i}}s Carvalho, Jo{\~{a}}o~Lopes Costa, Jos{\'{e}} Mour{\~{a}}o, and Gon{\c{c}}alo Oliveira.
\newblock The positivity of the neural tangent kernel.
\newblock {\em {SIAM} J. Math. Data Sci.}, 7(2):495--515, 2025.

\bibitem[Cov65]{Cover65}
Thomas~M. Cover.
\newblock Geometrical and statistical properties of systems of linear inequalities with applications in pattern recognition.
\newblock {\em {IEEE} Trans. Electron. Comput.}, 14(3):326--334, 1965.

\bibitem[CS09]{ChoSaul09}
Youngmin Cho and Lawrence~K. Saul.
\newblock Kernel methods for deep learning.
\newblock In {\em {NeurIPS}}, pages 342--350. Curran Associates, Inc., 2009.

\bibitem[CX21]{ChenXu21}
Lin Chen and Sheng Xu.
\newblock Deep neural tangent kernel and laplace kernel have the same {RKHS}.
\newblock In {\em {ICLR}}. OpenReview.net, 2021.

\bibitem[Dan20]{Daniely20}
Amit Daniely.
\newblock Neural networks learning and memorization with (almost) no over-parameterization.
\newblock In {\em NeurIPS}, 2020.

\bibitem[DBC{\etalchar{+}}26]{DragoBCMSSB26}
Andrea Drago, Maria~Sofia Bucarelli, Francesco Caso, Marius Michetti, Federico Siciliano, Fabrizio Silvestri, and Luca Becchetti.
\newblock Rank lifting and random non-linear maps.
\newblock In {\em The 29th International Conference on Artificial Intelligence and Statistics}, 2026.

\bibitem[DFS16]{DanielyFrostigSinger16}
Amit Daniely, Roy Frostig, and Yoram Singer.
\newblock Toward deeper understanding of neural networks: The power of initialization and a dual view on expressivity.
\newblock In {\em {NeurIPS}}, pages 2253--2261, 2016.

\bibitem[DGJS22]{dirksen2022separation}
Sjoerd Dirksen, Martin Genzel, Laurent Jacques, and Alexander Stollenwerk.
\newblock The separation capacity of random neural networks.
\newblock {\em J. Mach. Learn. Res.}, 23:309:1--309:47, 2022.

\bibitem[DPC{\etalchar{+}}25]{pmlr-v258-dandi25a}
Yatin Dandi, Luca Pesce, Hugo Cui, Florent Krzakala, Yue~M. Lu, and Bruno Loureiro.
\newblock A random matrix theory perspective on the spectrum of learned features and asymptotic generalization capabilities.
\newblock In {\em {AISTATS}}, volume 258 of {\em Proceedings of Machine Learning Research}, pages 2224--2232. {PMLR}, 2025.

\bibitem[DZPS19]{DuZhaiPoczosSingh19}
Simon~S. Du, Xiyu Zhai, Barnab{\'{a}}s P{\'{o}}czos, and Aarti Singh.
\newblock Gradient descent provably optimizes over-parameterized neural networks.
\newblock In {\em {ICLR} (Poster)}. OpenReview.net, 2019.

\bibitem[FGS25]{FroeseGrilloSkutella25}
Vincent Froese, Moritz Grillo, and Martin Skutella.
\newblock Complexity of injectivity and verification of relu neural networks (extended abstract).
\newblock In {\em {COLT}}, volume 291 of {\em Proceedings of Machine Learning Research}, pages 2188--2189. {PMLR}, 2025.

\bibitem[FSX{\etalchar{+}}26]{RaghuEtAl17}
Qin Fang, Lei Shi, Min Xu, Ding{-}Xuan Zhou, and Qi{-}Hang Zhou.
\newblock On the expressive power of deep 2d relu convolutional neural networks.
\newblock {\em J. Approx. Theory}, 320:106332, 2026.

\bibitem[FW20]{FanWang20}
Zhou Fan and Zhichao Wang.
\newblock Spectra of the conjugate kernel and neural tangent kernel for linear-width neural networks.
\newblock In {\em NeurIPS}, 2020.

\bibitem[GMS22]{ghosal2022randomly}
Promit Ghosal, Srinath Mahankali, and Yihang Sun.
\newblock Randomly initialized one-layer neural networks make data linearly separable.
\newblock {\em CoRR}, abs/2205.11716, 2022.

\bibitem[HLXZ20]{HeLiXuZheng20}
Juncai He, Lin Li, Jinchao Xu, and Chunyue Zheng.
\newblock Relu deep neural networks and linear finite elements.
\newblock {\em Journal of Computational Mathematics}, 38(3):502--527, 2020.

\bibitem[HS26]{HolzmullerScholpple26}
David Holzm{\"u}ller and Max Sch{\"o}lpple.
\newblock Beyond {ReLU}: How activations affect neural kernels and random wide networks.
\newblock In {\em Proceedings of the 29th International Conference on Artificial Intelligence and Statistics}, 2026.

\bibitem[HZS06]{HuangZhuSiew06}
Guang{-}Bin Huang, Qin{-}Yu Zhu, and Chee~Kheong Siew.
\newblock Extreme learning machine: Theory and applications.
\newblock {\em Neurocomputing}, 70(1-3):489--501, 2006.

\bibitem[IP95]{IgelnikPao95}
Boris Igelnik and Yoh{-}Han Pao.
\newblock Stochastic choice of basis functions in adaptive function approximation and the functional-link net.
\newblock {\em {IEEE} Trans. Neural Networks}, 6(6):1320--1329, 1995.

\bibitem[Jae01]{Jaeger01}
Herbert Jaeger.
\newblock The''echo state''approach to analysing and training recurrent neural networks.
\newblock 2001.

\bibitem[JHG18]{JacotGabrielHongler18}
Arthur Jacot, Cl{\'{e}}ment Hongler, and Franck Gabriel.
\newblock Neural tangent kernel: Convergence and generalization in neural networks.
\newblock In {\em NeurIPS}, pages 8580--8589, 2018.

\bibitem[KMM24]{KarhadkarMurrayMontufar24}
Kedar Karhadkar, Michael Murray, and Guido~F. Mont{\'{u}}far.
\newblock Bounds for the smallest eigenvalue of the {NTK} for arbitrary spherical data of arbitrary dimension.
\newblock In {\em NeurIPS}, 2024.

\bibitem[KMTM24]{karhadkar2023mildly}
Kedar Karhadkar, Michael Murray, Hanna Tseran, and Guido Mont{\'{u}}far.
\newblock Mildly overparameterized relu networks have a favorable loss landscape.
\newblock {\em Trans. Mach. Learn. Res.}, 2024, 2024.

\bibitem[LHCS22]{LiuEtAl22}
Fanghui Liu, Xiaolin Huang, Yudong Chen, and Johan A.~K. Suykens.
\newblock Random features for kernel approximation: {A} survey on algorithms, theory, and beyond.
\newblock {\em {IEEE} Trans. Pattern Anal. Mach. Intell.}, 44(10):7128--7148, 2022.

\bibitem[LJ09]{LukoseviciusJaeger09}
Mantas Lukosevicius and Herbert Jaeger.
\newblock Reservoir computing approaches to recurrent neural network training.
\newblock {\em Comput. Sci. Rev.}, 3(3):127--149, 2009.

\bibitem[LLC18]{LouartLiaoCouillet18}
Cosme Louart, Zhenyu Liao, and Romain Couillet.
\newblock {A random matrix approach to neural networks}.
\newblock {\em The Annals of Applied Probability}, 28(2):1190 -- 1248, 2018.

\bibitem[LLPS93]{LeshnoEtAl93}
Moshe Leshno, Vladimir~Ya. Lin, Allan Pinkus, and Shimon Schocken.
\newblock Multilayer feedforward networks with a nonpolynomial activation function can approximate any function.
\newblock {\em Neural Networks}, 6(6):861--867, 1993.

\bibitem[LMX25]{LiuMX25}
Xinliang Liu, Tong Mao, and Jinchao Xu.
\newblock Condition numbers and eigenvalue spectra of shallow networks on spheres.
\newblock {\em CoRR}, abs/2511.02625, 2025.

\bibitem[LR20]{LiangRakhlin20}
Tengyuan Liang and Alexander Rakhlin.
\newblock {Just interpolate: Kernel “Ridgeless” regression can generalize}.
\newblock {\em The Annals of Statistics}, 48(3):1329 -- 1347, 2020.

\bibitem[LSS{\etalchar{+}}14]{LopezPazEtAl14}
David Lopez{-}Paz, Suvrit Sra, Alexander~J. Smola, Zoubin Ghahramani, and Bernhard Sch{\"{o}}lkopf.
\newblock Randomized nonlinear component analysis.
\newblock In {\em {ICML}}, volume~32 of {\em {JMLR} Workshop and Conference Proceedings}, pages 1359--1367. JMLR.org, 2014.

\bibitem[Mis22]{misiakiewicz2022spectruminnerproductkernelmatrices}
Theodor Misiakiewicz.
\newblock Spectrum of inner-product kernel matrices in the polynomial regime and multiple descent phenomenon in kernel ridge regression, 2022.

\bibitem[MJBM23]{MurrayJinBowmanMontufar23}
Michael Murray, Hui Jin, Benjamin Bowman, and Guido Mont{\'{u}}far.
\newblock Characterizing the spectrum of the {NTK} via a power series expansion.
\newblock In {\em {ICLR}}. OpenReview.net, 2023.

\bibitem[MM19]{MeiMontanari22}
Song Mei and Andrea Montanari.
\newblock The generalization error of random features regression: Precise asymptotics and the double descent curve.
\newblock {\em Communications on Pure and Applied Mathematics}, 75, 2019.

\bibitem[MNM02]{MaassNatschlaegerMarkram02}
Wolfgang Maass, Thomas Natschl{\"a}ger, and Henry Markram.
\newblock Real-time computing without stable states: A new framework for neural computation based on perturbations.
\newblock {\em Neural Computation}, 14(11):2531--2560, 2002.

\bibitem[MPCB14]{MontufarEtAl14}
Guido~F. Mont{\'u}far, R{\u{a}}zvan Pascanu, Kyunghyun Cho, and Yoshua Bengio.
\newblock On the number of linear regions of deep neural networks.
\newblock In {\em Advances in Neural Information Processing Systems 27}, pages 2924--2932, 2014.

\bibitem[MZ22]{MontanariZhong22}
Andrea Montanari and Yiqiao Zhong.
\newblock The interpolation phase transition in neural networks: Memorization and generalization under lazy training.
\newblock {\em The Annals of Statistics}, 50(5):2816--2847, 2022.

\bibitem[Nab51]{Nabeya1951}
Seiji Nabeya.
\newblock Absolute moments in 2-dimensional normal distribution.
\newblock {\em Annals of the Institute of Statistical Mathematics}, 3:1, 1951.

\bibitem[Nea96]{Neal96}
Radford~M. Neal.
\newblock {\em Bayesian Learning for Neural Networks}, volume 118 of {\em Lecture Notes in Statistics}.
\newblock Springer, New York, 1996.

\bibitem[NMM21]{NguyenMondelliMontufar21}
Quynh Nguyen, Marco Mondelli, and Guido~F. Mont{\'{u}}far.
\newblock Tight bounds on the smallest eigenvalue of the neural tangent kernel for deep relu networks.
\newblock In {\em {ICML}}, volume 139 of {\em Proceedings of Machine Learning Research}, pages 8119--8129. {PMLR}, 2021.

\bibitem[OS20]{OymakSoltanolkotabi20}
Samet Oymak and Mahdi Soltanolkotabi.
\newblock Toward moderate overparameterization: Global convergence guarantees for training shallow neural networks.
\newblock {\em {IEEE} J. Sel. Areas Inf. Theory}, 1(1):84--105, 2020.

\bibitem[Pin99]{Pinkus99}
Allan Pinkus.
\newblock Approximation theory of the mlp model in neural networks.
\newblock {\em Acta Numerica}, 8:143–195, 1999.

\bibitem[PSG20]{PanigrahiShettyGoyal20}
Abhishek Panigrahi, Abhishek Shetty, and Navin Goyal.
\newblock Effect of activation functions on the training of overparametrized neural nets.
\newblock In {\em {ICLR}}. OpenReview.net, 2020.

\bibitem[PTS20]{PetzkaTrimmelSminchisescu20}
Henning Petzka, Martin Trimmel, and Cristian Sminchisescu.
\newblock Notes on the symmetries of 2-layer relu-networks.
\newblock In {\em {NLDL}}, pages 1--6. Septentrio Academic Publishing, 2020.

\bibitem[PW17]{PenningtonWorah17}
Jeffrey Pennington and Pratik Worah.
\newblock Nonlinear random matrix theory for deep learning.
\newblock In {\em {NeurIPS}}, pages 2637--2646, 2017.

\bibitem[RR07]{RahimiRecht07}
Ali Rahimi and Benjamin Recht.
\newblock Random features for large-scale kernel machines.
\newblock In {\em {NeurIPS}}, pages 1177--1184. Curran Associates, Inc., 2007.

\bibitem[RR08]{RahimiRecht08}
Ali Rahimi and Benjamin Recht.
\newblock Weighted sums of random kitchen sinks: Replacing minimization with randomization in learning.
\newblock In {\em {NeurIPS}}, pages 1313--1320. Curran Associates, Inc., 2008.

\bibitem[RR17]{RudiRosasco17}
Alessandro Rudi and Lorenzo Rosasco.
\newblock Generalization properties of learning with random features.
\newblock In {\em {NeurIPS}}, pages 3215--3225, 2017.

\bibitem[Sch42]{Schoenberg42}
I.~J. Schoenberg.
\newblock Positive definite functions on spheres.
\newblock {\em Duke Mathematical Journal}, 9(1):96--108, 1942.

\bibitem[SH21]{ScetbonHarchaoui21}
Meyer Scetbon and Za{\"{\i}}d Harchaoui.
\newblock A spectral analysis of dot-product kernels.
\newblock In {\em {AISTATS}}, volume 130 of {\em Proceedings of Machine Learning Research}, pages 3394--3402. {PMLR}, 2021.

\bibitem[She99]{sheppard1899application}
William~Fleetwood Sheppard.
\newblock On the application of the theory of error to cases of normal distribution and normal correlation.
\newblock {\em Philosophical Transactions of the Royal Society of London. Series A, Containing Papers of a Mathematical or Physical Character}, 192:101--531, 1899.

\bibitem[Son26]{Song26}
Zhao Song.
\newblock Tight worst-case bounds for the smallest eigenvalue of relu {NTK} gram matrices.
\newblock {\em CoRR}, abs/2608.03368, 2026.

\bibitem[STR18]{SerraEtAl18}
Thiago Serra, Christian Tjandraatmadja, and Srikumar Ramalingam.
\newblock Bounding and counting linear regions of deep neural networks.
\newblock In {\em {ICML}}, volume~80 of {\em Proceedings of Machine Learning Research}, pages 4565--4573. {PMLR}, 2018.

\bibitem[Tro15]{Tropp15}
Joel~A. Tropp.
\newblock An introduction to matrix concentration inequalities.
\newblock {\em Found. Trends Mach. Learn.}, 8(1-2):1--230, 2015.

\bibitem[Ver20]{Vershynin20}
Roman Vershynin.
\newblock Memory capacity of neural networks with threshold and rectified linear unit activations.
\newblock {\em {SIAM} J. Math. Data Sci.}, 2(4):1004--1033, 2020.

\bibitem[Wil96]{Williams97}
Christopher K.~I. Williams.
\newblock Computing with infinite networks.
\newblock In {\em {NeurIPS}}, pages 295--301. {MIT} Press, 1996.

\bibitem[Zas75]{Zaslavsky75}
Thomas Zaslavsky.
\newblock {\em Facing Up to Arrangements: Face-Count Formulas for Partitions of Space by Hyperplanes}, volume~1 of {\em Memoirs of the American Mathematical Society}.
\newblock American Mathematical Society, 1975.

\bibitem[ZBH{\etalchar{+}}21]{ZhangEtAl17}
Chiyuan Zhang, Samy Bengio, Moritz Hardt, Benjamin Recht, and Oriol Vinyals.
\newblock Understanding deep learning (still) requires rethinking generalization.
\newblock {\em Commun. {ACM}}, 64(3):107--115, 2021.

\end{thebibliography}
